\newcommand{\rulesep}{\unskip\hfill{\color{black}\vrule}\hfill\ignorespaces}
\pgfplotsset{compat=1.18}
\declaretheorem[name=Theorem]{thm}
\newtheorem{definition}[thm]{Definition}
\newtheorem{lemma}[thm]{Lemma}
\newtheorem*{claim*}{Claim}
\definecolor{mygreen}{HTML}{9BF3AA}
\definecolor{myred}{HTML}{F39B9B}
\definecolor{myblue}{HTML}{9BB0F3}
  \definecolor{SoftBlue}{HTML}{89CFF0}
  \definecolor{MintGreen}{HTML}{98FF98}
  \definecolor{LightCoral}{HTML}{F08080}
  \definecolor{SoftPurple}{HTML}{D8BFD8}
  \definecolor{PaleYellow}{HTML}{FFFFE0}
  \definecolor{SkyBlue}{HTML}{87CEEB}
  \definecolor{Peach}{HTML}{FFDAB9}
  \definecolor{Lavender}{HTML}{E6E6FA}
  \definecolor{LightPink}{HTML}{FFB6C1}
  \definecolor{Cream}{HTML}{FFFDD0}
  \definecolor{Aqua}{HTML}{00FFFF}
  \definecolor{Chartreuse}{HTML}{7FFF00}
  \definecolor{Crimson}{HTML}{DC143C}
  \definecolor{DarkOrange}{HTML}{FF8C00}
  \definecolor{Goldenrod}{HTML}{DAA520}
  \definecolor{HotPink}{HTML}{FF69B4}
  \definecolor{IndianRed}{HTML}{CD5C5C}
  \definecolor{LightSeaGreen}{HTML}{20B2AA}
  \definecolor{MediumPurple}{HTML}{9370DB}
  \definecolor{Salmon}{HTML}{FA8072}
\definecolor{LightGray}{RGB}{150,150,150}
\definecolor{Green1}{RGB}{171,255,177}
\definecolor{Green2}{RGB}{65,243,77}
\definecolor{Green3}{RGB}{120,165,101}
\definecolor{Brown}{RGB}{243,190,130}
\definecolor{Blue1}{RGB}{93,114,164}
\definecolor{Blue2}{RGB}{213,223,245}
\definecolor{Blue3}{RGB}{42,69,176}
\definecolor{Blue4}{RGB}{240,246,255}
\definecolor{Blue5}{RGB}{50,100,200}
\definecolor{Red1}{RGB}{255,89,100}
\definecolor{Light}{RGB}{102,205,170}
\newcommand{\pwline}[1]{\todo[author=Przemek,inline,backgroundcolor=green!20]{#1}}
\newcommand{\mrline}[1]{\todo[author=Michael,inline,backgroundcolor=red!20]{#1}}
\newcommand{\N}{\ensuremath{\mathbb{N}}}
\newcommand*{\ldblbrace}{\{\mskip-5mu\{}
\newcommand*{\rdblbrace}{\}\mskip-5mu\}}
\newcommand{\W}{\ensuremath{\mathbf{W}}}
\newcommand{\con}{\ensuremath{\,||\,}}
\newcommand{\G}{\ensuremath{G}}
\newcommand{\KG}{\ensuremath{KG}}
\newcommand{\NH}{\ensuremath{\mathcal{N}}}
\newcommand{\NHg}{\mathcal{N}_{\mathsf{glob},r}}
\newcommand{\NHl}{\mathcal{N}_{\mathsf{loc},r}}
\newcommand{\Ag}{A_{\mathsf{glob}}}
\newcommand{\Al}{A_{\mathsf{loc}}}
\newcommand{\Bg}{B_{\mathsf{glob}}}
\newcommand{\Bl}{B_{\mathsf{loc}}}
\newcommand{\x}{\ensuremath{\mathbf{x}}}
\newcommand{\h}{\ensuremath{\mathbf{h}}}
\newcommand{\hh}{\ensuremath{\overline{\mathbf{h}}}}
\newcommand{\bb}{\ensuremath{\mathbf{b}}}
\newcommand{\com}{\ensuremath{\mathsf{COM}}}
\newcommand{\agg}{\ensuremath{\mathsf{AGG}}}
\newcommand{\relu}{\ensuremath{\mathsf{ReLu}}}
\newcommand{\rwl}{\ensuremath{\mathsf{rwl}}}
\newcommand{\rwlg}{\mathsf{rwl}_{\mathsf{glob}}}
\newcommand{\rwll}{\mathsf{rwl}_{\mathsf{loc}}}
\newcommand{\sign}{\ensuremath{\mathsf{sign}}}
\newcommand{\aggsum}{\ensuremath{\mathsf{SUM}}}
\newcommand{\TG}{\ensuremath{TG}}
\newcommand{\TV}{\ensuremath{t\text{-}nodes}}
\newcommand{\D}{\ensuremath{D}}
\newcommand{\T}{\ensuremath{\mathsf{time}}}
\newcommand{\fone}
{\ensuremath{\bigstar}}
\newcommand{\Kone}[1]{\mathcal{K}_{\mathsf{glob}}(#1)}
\newcommand{\Ktwo}[1]{\mathcal{K}_{\mathsf{loc}}(#1)}
\newcommand{\GNN}{\ensuremath{\mathsf{GNN}}}
\newcommand{\model}{\ensuremath{\mathcal{A}}}
\newcommand{\modelB}{\ensuremath{\mathcal{B}}}
\newcommand{\MPGNN}{\ensuremath{\mathsf{MP}\text{-}\mathsf{GNN}}}
\newcommand{\TGNN}{\ensuremath{\mathsf{TGNN}}}
\newcommand{\MPTGNN}{\ensuremath{\mathsf{MP}\text{-}\mathsf{TGNN}}}
\newcommand{\glob}{global}
\newcommand{\loc}{local}
\title{Expressive Power of Temporal Message Passing}
\author{
Przemysław Andrzej Wałęga\textsuperscript{\rm 1}, 
Michael Rawson\textsuperscript{\rm 2}
}
\begin{document}

\maketitle


\begin{abstract}
Graph neural networks (GNNs) have recently been adapted to temporal settings, often employing temporal versions of the message-passing mechanism known from GNNs. We divide temporal message passing mechanisms from literature into two main types: global and local, and establish Weisfeiler-Leman characterisations for both. This allows us to formally analyse expressive power of temporal message-passing models. We show that global and local temporal message-passing mechanisms have  incomparable expressive power when applied to arbitrary temporal graphs. However, the local mechanism is strictly more expressive than the global mechanism when applied to colour-persistent temporal graphs, whose node colours are initially the same in all time points. Our theoretical findings are supported by experimental evidence, underlining practical implications of our analysis.
\end{abstract}

\pagestyle{plain}

\section{Introduction}

\newsavebox\Tempgraph
\sbox\Tempgraph{\begin{tikzpicture}[
dot/.style = {draw, circle, minimum size=#1,
              inner sep=0pt, outer sep=0pt},
dot/.default = 6pt
]
\scriptsize
\pgfmathsetmacro{\tline}{-0.25}
\pgfmathsetmacro{\w}{1.3}
\pgfmathsetmacro{\h}{1.5}
\pgfmathsetmacro{\inh}{0.7}
\pgfmathsetmacro{\dist}{1.5}
\pgfmathsetmacro{\Ax}{0.5}
\pgfmathsetmacro{\Ay}{1.4}
\pgfmathsetmacro{\Bx}{0.4}
\pgfmathsetmacro{\By}{0.5}
\pgfmathsetmacro{\Cx}{1}
\pgfmathsetmacro{\Cy}{1}


\foreach \x in {1,...,4}
{
\draw[fill=gray!90!black,opacity=0.2] (\x*\dist,0) -- (\x*\dist+\w,\inh) -- (\x*\dist+\w,\inh+\h) -- (\x*\dist,\h) -- cycle;


\node (A\x) at (\x*\dist+\Ax, 0+\Ay) {};
\node (B\x) at (\x*\dist+\Bx, 0+\By) {};
\node (C\x) at (\x*\dist+\Cx, 0+\Cy) {};
}

\draw[thick] (A2) -- (B2);
\draw[thick](B3) -- (C3);
\draw[thick] (A4) -- (C4);
\draw[thick] (B4) -- (C4);

\node[dot=9pt,draw=black,fill=myblue] at (A1) {$a$};
\node[dot=9pt,draw=black,fill=mygreen] at (B1) {$b$};
\node[dot=9pt,draw=black,fill=myred] at (C1) {$c$};
\node[dot=9pt,draw=black,fill=mygreen] at (A2) {$a$};
\node[dot=9pt,draw=black,fill=mygreen] at (B2) {$b$};
\node[dot=9pt,draw=black,fill=myred] at (C2) {$c$};
\node[dot=9pt,draw=black,fill=mygreen] at (A3) {$a$};
\node[dot=9pt,draw=black,fill=mygreen] at (B3) {$b$};
\node[dot=9pt,draw=black,fill=mygreen] at (C3) {$c$};
\node[dot=9pt,draw=black,fill=myblue] at (A4) {$a$};
\node[dot=9pt,draw=black,fill=mygreen] at (B4) {$b$};
\node[dot=9pt,draw=black,fill=mygreen] at (C4) {$c$};
\end{tikzpicture}}

\newsavebox\KGone
\sbox\KGone{\begin{tikzpicture}[
dot/.style = {draw, circle, minimum size=#1,
              inner sep=0pt, outer sep=0pt},
dot/.default = 6pt
]
\scriptsize
\pgfmathsetmacro{\tline}{-0.25}
\pgfmathsetmacro{\w}{1.3}
\pgfmathsetmacro{\h}{1.5}
\pgfmathsetmacro{\inh}{0.7}
\pgfmathsetmacro{\dist}{1.9}
\pgfmathsetmacro{\Ax}{0.5}
\pgfmathsetmacro{\Ay}{1.4}
\pgfmathsetmacro{\Bx}{0.4}
\pgfmathsetmacro{\By}{0.5}
\pgfmathsetmacro{\Cx}{1}
\pgfmathsetmacro{\Cy}{1}


\foreach \x in {1,...,4}
{
\draw[fill=gray!90!black,opacity=0.2] (\x*\dist,0) -- (\x*\dist+\w,\inh) -- (\x*\dist+\w,\inh+\h) -- (\x*\dist,\h) -- cycle;


\node[dot=9pt,draw=none] (A\x) at (\x*\dist+\Ax, 0+\Ay) {};
\node[dot=9pt,draw=none] (B\x) at (\x*\dist+\Bx, 0+\By) {};
\node[dot=9pt,draw=none] (C\x) at (\x*\dist+\Cx, 0+\Cy) {};
}

\draw[<->,blue,thick] (A2) -- (B2) node[pos=0.5,left] {0};
\draw[<-,blue,thick] (A3) -- (B2) node[pos=0.15,above] {1};
\draw[<-,blue,thick] (B3) -- (A2) node[pos=0.12,below] {1};

\draw[<->,blue,thick] (B3) -- (C3) node[pos=0.7,left=0.05] {0};
\draw[<-,blue,thick] (A4) -- (B2) node[pos=0.17,above] {2};
\draw[<-,blue,thick] (B4) -- (A2) node[pos=0.1,below] {2};
\draw[blue,thick] (B4) -- (C3) node[pos=0.7,above] {1};
\draw[blue,thick] (C4) -- (B3) node[pos=0.3,above] {1};
\draw[<->,blue,thick] (A4) -- (C4) node[pos=0.8,above] {0};
\draw[<->,blue,thick] (B4) -- (C4) node[pos=0.7,below] {0};

\node[dot=9pt,draw=black,fill=myblue] at (A1) {$a$};
\node[dot=9pt,draw=black,fill=mygreen] at (B1) {$b$};
\node[dot=9pt,draw=black,fill=myred] at (C1) {$c$};
\node[dot=9pt,draw=black,fill=mygreen] at (A2) {$a$};
\node[dot=9pt,draw=black,fill=mygreen] at (B2) {$b$};
\node[dot=9pt,draw=black,fill=myred] at (C2) {$c$};
\node[dot=9pt,draw=black,fill=mygreen] at (A3) {$a$};
\node[dot=9pt,draw=black,fill=mygreen] at (B3) {$b$};
\node[dot=9pt,draw=black,fill=mygreen] at (C3) {$c$};
\node[dot=9pt,draw=black,fill=myblue] at (A4) {$a$};
\node[dot=9pt,draw=black,fill=mygreen] at (B4) {$b$};
\node[dot=9pt,draw=black,fill=mygreen] at (C4) {$c$};
\end{tikzpicture}}

\newsavebox\WLone
\sbox\WLone{\begin{tikzpicture}[
dot/.style = {draw, circle, minimum size=#1,
              inner sep=0pt, outer sep=0pt},
dot/.default = 6pt
]
\scriptsize
\pgfmathsetmacro{\tline}{-0.25}
\pgfmathsetmacro{\w}{1.3}
\pgfmathsetmacro{\h}{1.5}
\pgfmathsetmacro{\inh}{0.7}
\pgfmathsetmacro{\dist}{1.9}
\pgfmathsetmacro{\Ax}{0.5}
\pgfmathsetmacro{\Ay}{1.4}
\pgfmathsetmacro{\Bx}{0.4}
\pgfmathsetmacro{\By}{0.5}
\pgfmathsetmacro{\Cx}{1}
\pgfmathsetmacro{\Cy}{1}


\foreach \x in {1,...,4}
{
\draw[fill=gray!90!black,opacity=0.2] (\x*\dist,0) -- (\x*\dist+\w,\inh) -- (\x*\dist+\w,\inh+\h) -- (\x*\dist,\h) -- cycle;


\node[dot=9pt,draw=none] (A\x) at (\x*\dist+\Ax, 0+\Ay) {};
\node[dot=9pt,draw=none] (B\x) at (\x*\dist+\Bx, 0+\By) {};
\node[dot=9pt,draw=none] (C\x) at (\x*\dist+\Cx, 0+\Cy) {};
}

\draw[<->,blue,thick] (A2) -- (B2) node[pos=0.5,left] {0};
\draw[<-,blue,thick] (A3) -- (B2) node[pos=0.15,above] {1};
\draw[<-,blue,thick] (B3) -- (A2) node[pos=0.12,below] {1};

\draw[<->,blue,thick] (B3) -- (C3) node[pos=0.7,left=0.05] {0};
\draw[<-,blue,thick] (A4) -- (B2) node[pos=0.17,above] {2};
\draw[<-,blue,thick] (B4) -- (A2) node[pos=0.1,below] {2};
\draw[blue,thick] (B4) -- (C3) node[pos=0.7,above] {1};
\draw[blue,thick] (C4) -- (B3) node[pos=0.3,above] {1};
\draw[<->,blue,thick] (A4) -- (C4) node[pos=0.8,above] {0};
\draw[<->,blue,thick] (B4) -- (C4) node[pos=0.7,below] {0};

\node[dot=9pt,draw=black,fill=myblue] at (A1) {$a$};
\node[dot=9pt,draw=black,fill=mygreen] at (B1) {$b$};
\node[dot=9pt,draw=black,fill=myred] at (C1) {$c$};
\node[dot=9pt,draw=black,fill=PaleYellow] at (A2) {$a$};
\node[dot=9pt,draw=black,fill=PaleYellow] at (B2) {$b$};
\node[dot=9pt,draw=black,fill=myred] at (C2) {$c$};
\node[dot=9pt,draw=black,fill=Aqua] at (A3) {$a$};
\node[dot=9pt,draw=black,fill=Chartreuse] at (B3) {$b$};
\node[dot=9pt,draw=black,fill=Crimson] at (C3) {$c$};
\node[dot=9pt,draw=black,fill=gray] at (A4) {$a$};
\node[dot=9pt,draw=black,fill=yellow] at (B4) {$b$};
\node[dot=9pt,draw=black,fill=HotPink] at (C4) {$c$};
\end{tikzpicture}}

\newsavebox\KGtwo
\sbox\KGtwo{\begin{tikzpicture}[
dot/.style = {draw, circle, minimum size=#1,
              inner sep=0pt, outer sep=0pt},
dot/.default = 6pt
]
\scriptsize
\pgfmathsetmacro{\tline}{-0.25}
\pgfmathsetmacro{\w}{1.3}
\pgfmathsetmacro{\h}{1.5}
\pgfmathsetmacro{\inh}{0.7}
\pgfmathsetmacro{\dist}{1.9}
\pgfmathsetmacro{\Ax}{0.5}
\pgfmathsetmacro{\Ay}{1.4}
\pgfmathsetmacro{\Bx}{0.4}
\pgfmathsetmacro{\By}{0.5}
\pgfmathsetmacro{\Cx}{1}
\pgfmathsetmacro{\Cy}{1}


\foreach \x in {1,...,4}
{
\draw[fill=gray!90!black,opacity=0.2] (\x*\dist,0) -- (\x*\dist+\w,\inh) -- (\x*\dist+\w,\inh+\h) -- (\x*\dist,\h) -- cycle;


\node[dot=9pt,draw=none] (A\x) at (\x*\dist+\Ax, 0+\Ay) {};
\node[dot=9pt,draw=none] (B\x) at (\x*\dist+\Bx, 0+\By) {};
\node[dot=9pt,draw=none] (C\x) at (\x*\dist+\Cx, 0+\Cy) {};
}

\draw[<->,blue,thick] (A2) -- (B2) node[midway,left] {0};
\draw[<->,blue,thick] (A3) -- (B3) node[midway,left] {1};
\draw[<->,blue,thick] (B3) -- (C3) node[pos=0.3,above] {0};
\draw[<->,blue,thick] (A4) -- (B4) node[midway,left] {2};
\draw[<->,blue,thick] (A4) -- (C4) node[pos=-0.08,right=0.07] {0};
\draw[<->,blue,thick] (B4) -- (C4) node[pos=0.3,above] {1};
\draw (B4) edge[<->,blue,thick,bend right=20] node[pos=0.7,below] {0} (C4);

\node[dot=9pt,draw=black,fill=myblue] at (A1) {$a$};
\node[dot=9pt,draw=black,fill=mygreen] at (B1) {$b$};
\node[dot=9pt,draw=black,fill=myred] at (C1) {$c$};
\node[dot=9pt,draw=black,fill=mygreen] at (A2) {$a$};
\node[dot=9pt,draw=black,fill=mygreen] at (B2) {$b$};
\node[dot=9pt,draw=black,fill=myred] at (C2) {$c$};
\node[dot=9pt,draw=black,fill=mygreen] at (A3) {$a$};
\node[dot=9pt,draw=black,fill=mygreen] at (B3) {$b$};
\node[dot=9pt,draw=black,fill=mygreen] at (C3) {$c$};
\node[dot=9pt,draw=black,fill=myblue] at (A4) {$a$};
\node[dot=9pt,draw=black,fill=mygreen] at (B4) {$b$};
\node[dot=9pt,draw=black,fill=mygreen] at (C4) {$c$};
\end{tikzpicture}}

\newsavebox\WLtwo
\sbox\WLtwo{\begin{tikzpicture}[
dot/.style = {draw, circle, minimum size=#1,
              inner sep=0pt, outer sep=0pt},
dot/.default = 6pt
]
\scriptsize
\pgfmathsetmacro{\tline}{-0.25}
\pgfmathsetmacro{\w}{1.3}
\pgfmathsetmacro{\h}{1.5}
\pgfmathsetmacro{\inh}{0.7}
\pgfmathsetmacro{\dist}{1.9}
\pgfmathsetmacro{\Ax}{0.5}
\pgfmathsetmacro{\Ay}{1.4}
\pgfmathsetmacro{\Bx}{0.4}
\pgfmathsetmacro{\By}{0.5}
\pgfmathsetmacro{\Cx}{1}
\pgfmathsetmacro{\Cy}{1}


\foreach \x in {1,...,4}
{
\draw[fill=gray!90!black,opacity=0.2] (\x*\dist,0) -- (\x*\dist+\w,\inh) -- (\x*\dist+\w,\inh+\h) -- (\x*\dist,\h) -- cycle;


\node[dot=9pt,draw=none] (A\x) at (\x*\dist+\Ax, 0+\Ay) {};
\node[dot=9pt,draw=none] (B\x) at (\x*\dist+\Bx, 0+\By) {};
\node[dot=9pt,draw=none] (C\x) at (\x*\dist+\Cx, 0+\Cy) {};
}

\draw[<->,blue,thick] (A2) -- (B2) node[midway,left] {0};
\draw[<->,blue,thick] (A3) -- (B3) node[midway,left] {1};
\draw[<->,blue,thick] (B3) -- (C3) node[pos=0.3,above] {0};
\draw[<->,blue,thick] (A4) -- (B4) node[midway,left] {2};
\draw[<->,blue,thick] (A4) -- (C4) node[pos=-0.08,right=0.07] {0};
\draw[<->,blue,thick] (B4) -- (C4) node[pos=0.3,above] {1};
\draw (B4) edge[<->,blue,thick,bend right=20] node[pos=0.7,below] {0} (C4);

\node[dot=9pt,draw=black,fill=myblue] at (A1) {$a$};
\node[dot=9pt,draw=black,fill=mygreen] at (B1) {$b$};
\node[dot=9pt,draw=black,fill=myred] at (C1) {$c$};
\node[dot=9pt,draw=black,fill=PaleYellow] at (A2) {$a$};
\node[dot=9pt,draw=black,fill=PaleYellow] at (B2) {$b$};
\node[dot=9pt,draw=black,fill=myred] at (C2) {$c$};
\node[dot=9pt,draw=black,fill=Aqua] at (A3) {$a$};
\node[dot=9pt,draw=black,fill=Chartreuse] at (B3) {$b$};
\node[dot=9pt,draw=black,fill=Crimson] at (C3) {$c$};
\node[dot=9pt,draw=black,fill=gray] at (A4) {$a$};
\node[dot=9pt,draw=black,fill=yellow] at (B4) {$b$};
\node[dot=9pt,draw=black,fill=HotPink] at (C4) {$c$};
\end{tikzpicture}}

\begin{figure*}
    \begin{tikzpicture}

        \node[draw,inner sep=4pt] at (0,0) (T1) {\scalebox{0.65}{\usebox{\Tempgraph}}};

        \node[draw,inner sep=4pt] at (5.5,1.5) (KG1) {\scalebox{0.65}{\usebox{\KGone}}};
        \node[draw,inner sep=4pt, label=above:{Nodes distinguishable by global \MPTGNN{}s:}] at (12.4,1.5) (WL1) {\scalebox{0.65}{\usebox{\WLone}}};
        
        \node[draw,inner sep=4pt] at (5.5,-1.5) (KG2) {\scalebox{0.65}{\usebox{\KGtwo}}};        
        \node[draw,inner sep=4pt, label=above:{Nodes distinguishable by local \MPTGNN{}s:}] at (12.4,-1.5) (WL2) {\scalebox{0.65}{\usebox{\WLtwo}}};

        \draw[->, bend left=25, thick, >=latex, scale=20] (T1.north) to node[above] {Construct $\Kone{\TG}$} (KG1.west);
        \draw[->, bend left=15, thick, >=latex, scale=20] (KG1.east) to node[above] {Apply 1-WL} (WL1.west);

        \draw[->, bend right=25, thick, >=latex, scale=20]  (T1.south) to node[below] {Construct $\Ktwo{\TG}$} (KG2.west);
        \draw[->, bend right=15, thick, >=latex, scale=20]  (KG2.east) to node[below] {Apply 1-WL} (WL2.west);
        
        
        

    \node at (-1.5,1.1) {$\TG$:};
        
    \end{tikzpicture}
\caption{
Our approach to determine which nodes in a temporal graph $\TG$ are distinguishable by \MPTGNN{}: we construct of knowledge graphs
$\Kone{\TG}$ and $\Ktwo{\TG}$, and then apply 1-WL}
\label{fig:transform}    
\end{figure*}
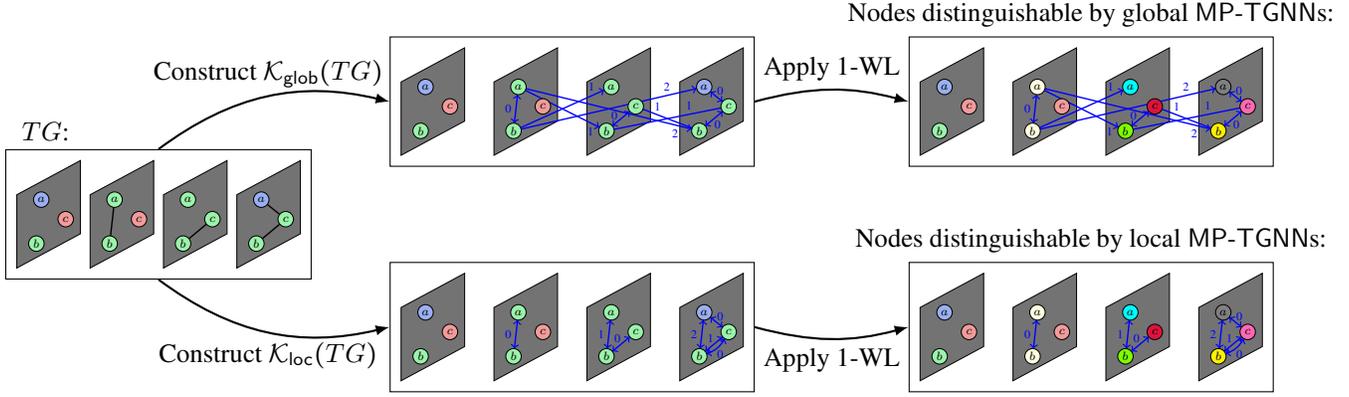

\emph{Message-passing graph neural networks} (or \MPGNN{}s) \cite{DBLP:conf/icml/GilmerSRVD17}
are prominent models for graph learning, which have achieved state-of-the art 
performance
in tasks of link prediction as well as in node  and graph classification.
Importantly, they proved successful in a number of real-world applications including social networks, protein-protein interactions, and knowledge graphs \cite{DBLP:journals/aiopen/ZhouCHZYLWLS20}.




In recent years, there has been growing interest in adapting \MPGNN{}s to process \emph{temporal graphs} (for an example of a temporal graph see the left part of \Cref{fig:transform}) which are particularly well-suited for dynamic applications such as recommender systems \cite{DBLP:journals/csur/WuSZXC23}, 
traffic forecasting \cite{DBLP:conf/ijcai/YuYZ18}, finance networks \cite{DBLP:conf/aaai/ParejaDCMSKKSL20}, and modelling the spread of diseases \cite{DBLP:journals/corr/abs-2007-03113}.
Research in this direction gave rise to various temporal \MPGNN{}s (\MPTGNN{}s) obtained by introducing temporal variants of the message-passing mechanism
\cite{longa2023graph,DBLP:journals/access/SkardingGM21,DBLP:conf/icml/Gao022}.
This can be obtained by assigning to
graph nodes different embeddings (feature vectors)
for different time points and then passing messages between timestamped nodes.
Depending on the routes of messages-passing  between timestamped nodes and on the encoding of the temporal component in the messages, 
we arrive at various temporal message-passing mechanisms.
In this paper we distinguish two main groups of \MPTGNN{}s:
\emph{global}, 
where messages can be passed between nodes stamped with different times
and
\emph{local}, 
where messages are passed only between nodes stamped with the same time, 
while information about other times is encoded within messages.
%
%

Although several variants of global \cite{longa2023graph,DBLP:conf/iclr/XuRKKA20,DBLP:conf/log/LuoL22} and local \cite{DBLP:journals/corr/abs-2006-10637,DBLP:conf/www/QuZDS20} \MPTGNN{}s have been designed and successfully applied, 
we still lack a good understanding of modelling capabilities
dictated by their temporal message-passing mechanisms, and do not have answers to the following fundamental questions. 
What tools can we use to analyse the expressive power of global and local \MPTGNN{}s?
Are there limits to the expressive power of either type? 
Which type can express more? 
How does the difference in expressiveness affect practical performance?
Answers to these questions
are key when choosing an appropriate temporal message-passing mechanism for a particular task and when designing new \MPTGNN{}s. 
The importance of answering such questions has been clearly shown by research on expressive power of static \MPGNN{}s, which 
equipped us with powerful tools and  
gave rise to a whole new research direction
\cite{DBLP:conf/aaai/0001RFHLRG19,DBLP:conf/iclr/XuHLJ19,DBLP:journals/combinatorica/CaiFI92,DBLP:conf/lics/Grohe23,DBLP:conf/iclr/BarceloKM0RS20}.
In the temporal setting, however, such an analysis is still missing.  We aim to fill this urgent gap.


\paragraph{Contributions.}
Our main contributions are as follows:
\begin{itemize}
\item We formalise the two main types of \MPTGNN{}s, global and local, depending on the form of the adopted temporal message-passing mechanism.  

\item We characterise the expressive power of both types.
To determine which temporal nodes can be distinguished by \MPTGNN{}s, we construct a knowledge graph and then apply the 1-dimensional Weisfeiler-Leman test (1-WL).
As depicted in \Cref{fig:transform}, our construction of the knowledge graph is different for global and local \MPTGNN{}s, but in both cases this approach allows us to precisely capture the expressive power of \MPTGNN{}s.
For example, given the temporal graph $\TG$ in \Cref{fig:transform}, global and local \MPTGNN{}s distinguish the same nodes, since the colourings in the rightmost graphs are the same.

\item We use the above characterisation to show that, quite surprisingly, both global and local \MPTGNN{}s can distinguish nodes which are pointwise isomorphic.
This leads us to introduce a stronger \emph{timewise isomorphism}, well-suited for characterisation of \MPTGNN{}s.

\item The Weisfeiler-Leman characterisation also allows us to show that global and local \MPTGNN{}s have incomparable expressive power: each of the types can distinguish nodes which are indistinguishable by the other type.
However, if the input temporal graph is 
\emph{colour-persistent} (initial embedding of each node is the same at all time points),  local \MPTGNN{}s are  more expressive than global \MPTGNN{}s.
We can extend these results to  a complete expressiveness classification as in \Cref{fig:express}.

\begin{figure}[ht]
\centering
\begin{tikzpicture}

\node[draw=black, text width=3cm] (one) at (0,0) {global \MPTGNN{}s, \\ any \TG{}s};
\node[draw=black, text width=3cm]   (two) at (0,-1.5) {local \MPTGNN{}s, \\ any \TG{}s};
\node[draw=black, text width=3.2cm]  (three) at (4,0) {global \MPTGNN{}s, \\ colour-persistent \TG{}s};
\node[draw=black, text width=3.2cm] (four) at (4,-1.5) {local \MPTGNN{}s, \\ colour-persistent \TG{}s};

\path (one) -- (three) node[midway]  {$>$};
\path (one) -- (two) node[midway] {\tikz[baseline]{\node[rotate=90] {$\not\leq$};} {\tikz[baseline]{\node[rotate=90] {$\not\geq$};}}};
\path (two) -- (four) node[midway] {$=$};
\path (three) -- (four) node[midway]  [rotate=90] {$>$};

\end{tikzpicture}
\caption{Relative expressive power of \MPTGNN{}s}
\label{fig:express}
\end{figure}
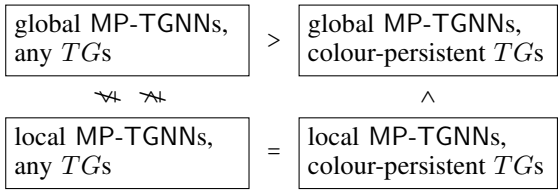

\item 
Finally, we experimentally validate our theoretical results by constructing proof-of-concept global and local models.
We show that, indeed, on colour-persistent graphs 
local models outperform global models, when compared on the temporal link-prediction and TGB 2.0 benchmark suite~\cite{gastinger2024tgb}.
This is the case when models use the same number of layers, and the difference in performance increases further if we choose optimal number of layers for each type of model separately.
\end{itemize}

\section{Background}

\paragraph{Temporal graphs.}
We focus on temporal graphs in the so-called \emph{snapshot} representation \cite{longa2023graph,DBLP:conf/icml/Gao022,DBLP:journals/access/SkardingGM21} shown in \Cref{fig:snapshot}. A  \emph{temporal graph} is
a finite sequence $\TG = (\G_1,t_1), \dots, (\G_n,t_n)$ of undirected node-coloured graphs $\G_i=(V_i,E_i,c_i)$, where $t_1 <  \dots < t_n$ are real-valued
\emph{time points}, constituting the temporal domain $\T(\TG)$.
Each $V_i$ is a finite set of nodes, each $E_i \subseteq \{ \{u,v\} \subseteq V_i \mid  u \neq v \}$ is a set of undirected edges, and
$c_i:V_i \to \D$ assigns nodes colours from some set $\D$, which could be real feature vectors.
Following standard notation, we sometimes use $\x_v(t_i)$ instead of $c_{i}(v)$.  We represent $c_{i}$ using different colours for nodes in figures.  
We assume that the domain of nodes does not change over time,  so $V_1= \ldots = V_n = V(\TG)$.
We call a pair of a node $v \in V(\TG)$ and a time point $t \in \T(\TG)$ a \emph{timestamped node} $(v,t)$ and we let $\TV(\TG)$ be the set of all timestamped nodes in $\TG$.
For the sake of a clear presentation we assume that edges are not labelled.

\begin{figure}[ht]
\centering
\begin{tikzpicture}[
dot/.style = {draw, circle, minimum size=#1,
              inner sep=0pt, outer sep=0pt},
dot/.default = 6pt
]
\scriptsize
\pgfmathsetmacro{\tline}{-0.25}
\pgfmathsetmacro{\w}{1.3}
\pgfmathsetmacro{\h}{1.5}
\pgfmathsetmacro{\inh}{0.7}
\pgfmathsetmacro{\dist}{1.5}
\pgfmathsetmacro{\Ax}{0.5}
\pgfmathsetmacro{\Ay}{1.4}
\pgfmathsetmacro{\Bx}{0.4}
\pgfmathsetmacro{\By}{0.5}
\pgfmathsetmacro{\Cx}{1}
\pgfmathsetmacro{\Cy}{1}

\draw[->] (1.8,\tline) -- (7.4,\tline);
\node at (6.9,\tline-0.3) {$\T(\TG)$};

\foreach \x in {1,...,4}
{
\draw[fill=gray!90!black,opacity=0.2] (\x*\dist,0) -- (\x*\dist+\w,\inh) -- (\x*\dist+\w,\inh+\h) -- (\x*\dist,\h) -- cycle;

\node at (\x*\dist + 0.87 * \w, \inh+ \h - 0.3) {$G_{\x}$};
\node at (\x*\dist + 0.6 * \w, 0) {$t_{\x}=\x$};
\draw[-] (\x*\dist + 0.6 * \w, \tline+0.05) -- (\x*\dist + 0.6 * \w, \tline-0.05);

\node (A\x) at (\x*\dist+\Ax, 0+\Ay) {};
\node (B\x) at (\x*\dist+\Bx, 0+\By) {};
\node (C\x) at (\x*\dist+\Cx, 0+\Cy) {};
}

\draw[thick] (A2) -- (B2);
\draw[thick](B3) -- (C3);
\draw[thick] (A4) -- (C4);
\draw[thick] (B4) -- (C4);

\node[dot=9pt,draw=black,fill=myblue] at (A1) {$a$};
\node[dot=9pt,draw=black,fill=mygreen] at (B1) {$b$};
\node[dot=9pt,draw=black,fill=myred] at (C1) {$c$};
\node[dot=9pt,draw=black,fill=mygreen] at (A2) {$a$};
\node[dot=9pt,draw=black,fill=mygreen] at (B2) {$b$};
\node[dot=9pt,draw=black,fill=myred] at (C2) {$c$};
\node[dot=9pt,draw=black,fill=mygreen] at (A3) {$a$};
\node[dot=9pt,draw=black,fill=mygreen] at (B3) {$b$};
\node[dot=9pt,draw=black,fill=mygreen] at (C3) {$c$};
\node[dot=9pt,draw=black,fill=myblue] at (A4) {$a$};
\node[dot=9pt,draw=black,fill=mygreen] at (B4) {$b$};
\node[dot=9pt,draw=black,fill=mygreen] at (C4) {$c$};
\end{tikzpicture}
\caption{A temporal graph  in the snapshot representation}
\label{fig:snapshot}
\end{figure}

\noindent We say that a temporal graph is \emph{colour-persistent} if initial colours of  nodes do not change in time, so $c_{i}(v) = c_{j}(v)$ for each node $v$ and all  $i,j \in \{ 1, \dots, n\}$, see \Cref{fig:aggregated}~(\subref{fig:snap}).
Colour-persistent graphs can be also represented
as  static edge-labelled multi graphs, called \emph{aggregated} form~\cite{DBLP:conf/icml/Gao022},  as depicted in \Cref{fig:aggregated}~(\subref{fig:aggr}).

\begin{figure}[ht]
\centering
\begin{subfigure}[b]{0.35\textwidth}
\begin{tikzpicture}[
dot/.style = {draw, circle, minimum size=#1,
              inner sep=0pt, outer sep=0pt},
dot/.default = 6pt
]
\scriptsize
\pgfmathsetmacro{\tline}{-0.25}
\pgfmathsetmacro{\w}{1.3}
\pgfmathsetmacro{\h}{1.5}
\pgfmathsetmacro{\inh}{0.7}
\pgfmathsetmacro{\dist}{1.5}
\pgfmathsetmacro{\Ax}{0.5}
\pgfmathsetmacro{\Ay}{1.4}
\pgfmathsetmacro{\Bx}{0.4}
\pgfmathsetmacro{\By}{0.5}
\pgfmathsetmacro{\Cx}{1}
\pgfmathsetmacro{\Cy}{1}

\draw[->] (1.8,\tline) -- (7.4,\tline);

\foreach \x in {1,...,4}
{
\draw[fill=gray!90!black,opacity=0.2] (\x*\dist,0) -- (\x*\dist+\w,\inh) -- (\x*\dist+\w,\inh+\h) -- (\x*\dist,\h) -- cycle;

\node at (\x*\dist + 0.87 * \w, \inh+ \h - 0.3) {$G_{\x}$};
\node at (\x*\dist + 0.6 * \w, 0) {$t_{\x}$};
\draw[-] (\x*\dist + 0.6 * \w, \tline+0.05) -- (\x*\dist + 0.6 * \w, \tline-0.05);

\node (A\x) at (\x*\dist+\Ax, 0+\Ay) {};
\node (B\x) at (\x*\dist+\Bx, 0+\By) {};
\node (C\x) at (\x*\dist+\Cx, 0+\Cy) {};
}

\draw[thick] (A2) -- (B2);
\draw[thick](B3) -- (C3);
\draw[thick] (A4) -- (C4);
\draw[thick] (B4) -- (C4);

\node[dot=9pt,draw=black,fill=myblue] at (A1) {$a$};
\node[dot=9pt,draw=black,fill=mygreen] at (B1) {$b$};
\node[dot=9pt,draw=black,fill=mygreen] at (C1) {$c$};
\node[dot=9pt,draw=black,fill=myblue] at (A2) {$a$};
\node[dot=9pt,draw=black,fill=mygreen] at (B2) {$b$};
\node[dot=9pt,draw=black,fill=mygreen] at (C2) {$c$};
\node[dot=9pt,draw=black,fill=myblue] at (A3) {$a$};
\node[dot=9pt,draw=black,fill=mygreen] at (B3) {$b$};
\node[dot=9pt,draw=black,fill=mygreen] at (C3) {$c$};
\node[dot=9pt,draw=black,fill=myblue] at (A4) {$a$};
\node[dot=9pt,draw=black,fill=mygreen] at (B4) {$b$};
\node[dot=9pt,draw=black,fill=mygreen] at (C4) {$c$};
\end{tikzpicture}
\caption{}
\label{fig:snap}
\end{subfigure}
\rulesep
\begin{subfigure}[b]{0.09\textwidth}
\begin{tikzpicture}[scale=1,
dot/.style = {draw, circle, minimum size=#1,
              inner sep=0pt, outer sep=0pt},
dot/.default = 6pt
]
\scriptsize
\pgfmathsetmacro{\tline}{-0.25}
\pgfmathsetmacro{\w}{1.3}
\pgfmathsetmacro{\h}{1.5}
\pgfmathsetmacro{\inh}{0.7}
\pgfmathsetmacro{\dist}{1.7}
\pgfmathsetmacro{\Ax}{0.5}
\pgfmathsetmacro{\Ay}{1.4}
\pgfmathsetmacro{\Bx}{0.4}
\pgfmathsetmacro{\By}{0.5}
\pgfmathsetmacro{\Cx}{1}
\pgfmathsetmacro{\Cy}{1}

\node[white] at (2.4,-0.16) {$(\TG)$};

\foreach \x in {1}
{

\node (A\x) at (\x*\dist+\Ax, 0+\Ay) {};
\node (B\x) at (\x*\dist+\Bx, 0+\By) {};
\node (C\x) at (\x*\dist+\Cx, 0+\Cy) {};
}

\draw[thick] (A1) -- (B1) node[pos=0.5,left=0.01] {$t_2$};
\draw[thick] (A1) -- (C1) node[pos=0.9,above=0.05] {$t_4$};
\draw[thick] (B1) -- (C1) node[pos=0.4,above=0.05] {$t_3$};
\draw (B1) edge[thick,bend right=20] node[pos=0.9,below=0.1] {$t_4$} (C1);

\node[dot=9pt,draw=black,fill=myblue] at (A1) {$a$};
\node[dot=9pt,draw=black,fill=mygreen] at (B1) {$b$};
\node[dot=9pt,draw=black,fill=mygreen] at (C1) {$c$};

\end{tikzpicture}
\caption{}
\label{fig:aggr}
\end{subfigure}
\caption{A colour-persistent temporal graph~(\subref{fig:snap}) and its  aggregated representation~(\subref{fig:aggr})}
\label{fig:aggregated}
\end{figure}


\paragraph{Temporal graph neural networks with message-passing.}
We let a \emph{message-passing temporal graph neural network}
(\MPTGNN) be a model $\model$ which,
given a temporal graph $\TG$,
computes embeddings for all timestamped nodes by implementing a temporal variant of message-passing.
Embeddings are then used to predict links or classify nodes and graphs.
Some models (e.g. TGAT, NAT, TGN) apply temporal message-passing to arbitrary temporal graphs, whereas others (e.g. TDGNN) are applicable to colour-persistent temporal graphs (or equivalently, to  the aggregated representation) only.
%
%
Below we present a general form of an  \MPTGNN{} model $\model$ with $L$ layers, which subsumes a number of message-passing mechanisms.
Given a temporal graph $\TG = (\G_1,t_1), \dots, (\G_n,t_n)$,  a model $\model$ computes for each node $v$,  time point $t$, and   layer $\ell \in \{0,\ldots,L\}$
an embedding $\h_v^{(\ell)}(t)$ as follows:
\begin{align}
\h_v^{(0)}(t) &= \x_v(t), \label{TGN1}
\\
\h_v^{(\ell)}(t) &= \com^{(\ell)} \Big(
\h_v^{(\ell-1)}(t),
\agg^{(\ell)} \Big(
\label{TGN2}
\\ 
&  \quad \ldblbrace
( \fone, g(t-t') ) \mid
 (u,t') \in \NH(v,t) 
\rdblbrace
\Big)
\Big), \nonumber
\end{align}
where:
\begin{itemize}
\item $\com^{(\ell)}$ and $\agg^{(\ell)}$ are \emph{combination} and \emph{aggregation} functions in layer $\ell$; $\com^{(\ell)}$ maps a pair of vectors into a single vector, whereas $\agg^{(\ell)}$ maps a multiset, represented as $\ldblbrace \cdots \rdblbrace$, into a single vector,
\item $g$ maps a time duration into a vector or scalar quantity,
\item $\NH(v,t)$ is the temporal neighbourhood of $(v,t)$,  defined as follows \cite{DBLP:conf/nips/SouzaMKG22}:
\begin{multline}
\NH(v,t) = \Big\{ (u,t') \mid 
t' = t_i 
\text{ and }
\{ u,v \} \in E_{i} ,
\\ 
\text{ for some } (\G_i,t_i) \in \TG \text{ with } t_i \leq t
  \Big\}.
  \label{eq:NH} 
\end{multline}
Hence, $\NH(v,t)$ is
the set of timestamped nodes $(u,t')$ such that there is an edge between $u$ and $v$ at  $t' \leq t$.

\item $\fone$ is either $\mathbf{h}_u^{(\ell-1)}(t')$ or $\mathbf{h}_u^{(\ell-1)}(t)$.
If $\fone = \mathbf{h}_u^{(\ell-1)}(t')$ we say that $\model$ is a \emph{\glob} (in time) \TGNN{}, as computation of $\h_v^{(\ell)}(t)$  requires aggregation of embeddings in all past time points $t'$; 
\emph{\glob} \TGNN{}s  are also called  \emph{Temporal Embedding} \TGNN{}s~\cite{longa2023graph} and include  TGAT and NAT.
If $\fone = \mathbf{h}_u^{(\ell-1)}(t)$ we say that $\model$ is  \emph{\loc}, as only embeddings from the current time point $t$ are aggregated;
 \emph{\loc} \TGNN{}s
 include  TGN and TDGNN.
\end{itemize}

\paragraph{Weisfeiler-Leman algorithm.}
An \emph{isomorphism} between undirected node-coloured graphs $\G_1=(V_1,E_1,c_1)$ and $\G_2=(V_2,E_2,c_2)$
is any bijection $f:V_1 \to V_2$, 
satisfying for any $u$ and $v$:
(i)~$c_1(v) = c_2( f (v) )$
and 
(ii)~$\{u,v\} \in E_1$ if and only if 
$\{f(u),f(v)\} \in E_2$.
The \emph{1-dimensional Weisfeiler-Leman algorithm} (1-WL) \cite{weisfeiler1968reduction}
is a powerful heuristic for graph isomorphism~\cite{DBLP:conf/focs/BabaiK79}, which has  the same expressive power as \MPGNN{}s with injective aggregation and combination
\cite{DBLP:conf/aaai/0001RFHLRG19,DBLP:conf/iclr/XuHLJ19}.

Recently 1-WL has been  applied to \emph{knowledge graphs} $\KG = (V, E, R, c)$
where $V$ are nodes, $E \subseteq R \times V \times V$ are directed edges with  labels from $R$, and $c:V \to \D$ colours nodes \cite{DBLP:conf/nips/Huang0CB23,DBLP:conf/log/Barcelo00O22}.
An \emph{isomorphism} between knowledge graphs $\KG_1=(V_1,E_1,R_1,c_1)$ and $\KG_2=(V_2,E_2,R_2,c_2)$ is  any bijection  $f:V_1 \to V_2$ 
such that, for all $u,v \in V_1$ and $r\in R_1$:
(i)~$c_1(v) = c_2( f (v) )$
and 
(ii)~$(r,u,v) \in E_1$ if and only if 
$(r,f(u),f(v)) \in E_2$.
A \emph{relational local 1-WL algorithm}
(conventionally $\rwl_1$, but we write \rwl{}) is a natural extension of 1-WL to the case of knowledge graphs~\cite{DBLP:conf/nips/Huang0CB23}.
Given a knowledge graph $\KG = (V, E, R, c)$,
the algorithm
computes iteratively, for all $v\in V$ and
 $\ell \in \N$, values $\rwl^{(\ell)}(v)$ as follows:
\begin{align*}
\rwl^{(0)}(v) &=c(v),
\\
\rwl^{(\ell)}(v) &= \tau \Big( \rwl^{(\ell-1)}(v), 
\\
& \qquad 
\ldblbrace ( \rwl^{(\ell-1)}(u) ,r) \mid u \in \NH_{r}(v), r \in R \rdblbrace   \Big),
\end{align*}
where
$\NH_{r}(v) = \{ u \mid (r,u, v) \in E \}$
is the \emph{$r$-neighbourhood} of $v$,
and $\tau$ is an injective function.
It is shown that \rwl{} has the same expressive power as R-MPNNs, that is,
\MPGNN{}s processing knowledge graphs~\cite{DBLP:conf/nips/Huang0CB23}.

\section{Related Work}
There is recently an increasing interest in temporal and dynamic graph neural networks \cite{longa2023graph,DBLP:journals/csur/QinY24,DBLP:journals/access/SkardingGM21,DBLP:journals/jmlr/KazemiGJKSFP20}.
Pertinent models include
TGN~\cite{DBLP:journals/corr/abs-2006-10637}, TGAT~\cite{DBLP:conf/iclr/XuRKKA20}, TDGNN~\cite{DBLP:conf/www/QuZDS20},  
and NAT~\cite{DBLP:conf/log/LuoL22},
which are all based on temporal message-passing mechanisms.

Expressive power results for temporal models are very limited.
\citet{DBLP:conf/nips/SouzaMKG22} compared expressive power of temporal graph neural networks exploiting temporal walks, with those based on local message passing combined with recurrent memory modules.
\citet{DBLP:conf/icml/Gao022} compared
time-and-graph with time-then-graph models, which are obtained by different combinations of static graph neural networks and recurrent neural networks.
In the context of temporal knowledge graphs, expressive power of similar models was recently considered by  \citet{DBLP:conf/nips/ChenW23}.

More mature results have been established for models processing edge-labelled graphs.
Such graphs are closely related to temporal graphs, since the aggregated representation of a temporal graph \cite{DBLP:conf/icml/Gao022}, \Cref{fig:aggregated}~(\subref{fig:aggr}), 
is a multigraph with edges labelled by time points.
However, since the aggregated representation does not allow us to assign different colours to the same node in different time points,
not all temporal graphs can be directly
transformed into multigraphs.
\citet{DBLP:conf/log/Barcelo00O22} introduced 1-WL for models processing undirected multi-relational graphs, whereas \citet{DBLP:journals/nn/BeddarWiesingDGLMST24} introduced 1-WL for dynamic graphs.
\citet{DBLP:conf/nips/Huang0CB23} proposed 1-WL for models processing directed multi-relational graphs (i.e. knowledge graphs), namely for
relational message passing neural networks (R-MPNNs), which encompass several known models such as RGCN \cite{DBLP:conf/esws/SchlichtkrullKB18} and CompGCN \cite{DBLP:conf/iclr/VashishthSNT20}.

Temporal graphs can be also given in the \emph{event-based} representation \cite{longa2023graph},  as a sequence of timestamped events that add/delete edges or modify feature vectors of nodes. 
Since temporal graphs in the aggregated and event-based representations can be transformed into the snapshot representation~\cite{DBLP:conf/icml/Gao022,longa2023graph}, we 
focus on the snapshot representation in the paper.

\section{Temporal Weisfeiler-Leman Characterisation}\label{sec:WL}

We provide a general approach for establishing expressive power of \MPTGNN{}s using standard 1-WL.
To do so, we transform a temporal graph $\TG$ into a knowledge graph $\KG$ such that
\MPTGNN{}s can distinguish exactly those nodes in $\TG$ whose counterparts in $\KG$ can be distinguished by the standard 1-WL.
This contrasts with approaches studying expressive power by modifying 1-WL for particular types of temporal graph neural networks \cite{DBLP:conf/nips/SouzaMKG22,DBLP:conf/icml/Gao022}.
Note that our results concern distinguishability of nodes, not graphs. Node distinguishability is likely of more practical interest and can be  used to distinguish graphs.

We transform $\TG$ into two knowledge graphs: $\Kone{\TG}$ and $\Ktwo{\TG}$,  suitable for analysing, respectively, global and local \MPTGNN{}s.
We first introduce $\Kone{\TG}$, whose 
edges correspond to temporal message-passing in global \MPTGNN{}s (\Cref{fig:Kone}).
Intuitively,  $\Kone{\TG}$ contains a separate node $(v,t)$ for each timestamped node in $\TG$ and an edge between $(v,t)$ and $(u,t')$ labelled by $t-t'$ if $(u,t')$ is in the temporal neighbourhood of $(v,t)$.

\begin{figure}[ht]
\centering
\begin{tikzpicture}[
dot/.style = {draw, circle, minimum size=#1,
              inner sep=0pt, outer sep=0pt},
dot/.default = 6pt
]
\scriptsize
\pgfmathsetmacro{\tline}{-0.25}
\pgfmathsetmacro{\w}{1.3}
\pgfmathsetmacro{\h}{1.5}
\pgfmathsetmacro{\inh}{0.7}
\pgfmathsetmacro{\dist}{1.9}
\pgfmathsetmacro{\Ax}{0.5}
\pgfmathsetmacro{\Ay}{1.4}
\pgfmathsetmacro{\Bx}{0.4}
\pgfmathsetmacro{\By}{0.5}
\pgfmathsetmacro{\Cx}{1}
\pgfmathsetmacro{\Cy}{1}

\draw[->] (1.8,\tline) -- (9.1,\tline);

\foreach \x in {1,...,4}
{
\draw[fill=gray!90!black,opacity=0.2] (\x*\dist,0) -- (\x*\dist+\w,\inh) -- (\x*\dist+\w,\inh+\h) -- (\x*\dist,\h) -- cycle;

\node at (\x*\dist + 0.87 * \w, \inh+ \h - 0.3) {$G_{\x}$};
\node at (\x*\dist + 0.6 * \w, 0) {$t_{\x}={\x}$};
\draw[-] (\x*\dist + 0.6 * \w, \tline+0.05) -- (\x*\dist + 0.6 * \w, \tline-0.05);

\node[dot=9pt,draw=none] (A\x) at (\x*\dist+\Ax, 0+\Ay) {};
\node[dot=9pt,draw=none] (B\x) at (\x*\dist+\Bx, 0+\By) {};
\node[dot=9pt,draw=none] (C\x) at (\x*\dist+\Cx, 0+\Cy) {};
}

\draw[<->,blue,thick] (A2) -- (B2) node[pos=0.5,left] {0};
\draw[<-,blue,thick] (A3) -- (B2) node[pos=0.15,above] {1};
\draw[<-,blue,thick] (B3) -- (A2) node[pos=0.12,below] {1};

\draw[<->,blue,thick] (B3) -- (C3) node[pos=0.7,left=0.05] {0};
\draw[<-,blue,thick] (A4) -- (B2) node[pos=0.17,above] {2};
\draw[<-,blue,thick] (B4) -- (A2) node[pos=0.1,below] {2};
\draw[blue,thick] (B4) -- (C3) node[pos=0.7,above] {1};
\draw[blue,thick] (C4) -- (B3) node[pos=0.3,above] {1};
\draw[<->,blue,thick] (A4) -- (C4) node[pos=0.8,above] {0};
\draw[<->,blue,thick] (B4) -- (C4) node[pos=0.7,below] {0};

\node[dot=9pt,draw=black,fill=myblue] at (A1) {$a$};
\node[dot=9pt,draw=black,fill=mygreen] at (B1) {$b$};
\node[dot=9pt,draw=black,fill=myred] at (C1) {$c$};
\node[dot=9pt,draw=black,fill=mygreen] at (A2) {$a$};
\node[dot=9pt,draw=black,fill=mygreen] at (B2) {$b$};
\node[dot=9pt,draw=black,fill=myred] at (C2) {$c$};
\node[dot=9pt,draw=black,fill=mygreen] at (A3) {$a$};
\node[dot=9pt,draw=black,fill=mygreen] at (B3) {$b$};
\node[dot=9pt,draw=black,fill=mygreen] at (C3) {$c$};
\node[dot=9pt,draw=black,fill=myblue] at (A4) {$a$};
\node[dot=9pt,draw=black,fill=mygreen] at (B4) {$b$};
\node[dot=9pt,draw=black,fill=mygreen] at (C4) {$c$};
\end{tikzpicture}
\caption{$\Kone{\TG}$ constructed for $\TG$ from \Cref{fig:snapshot}}
\label{fig:Kone}
\end{figure}

\begin{definition}
Let ${\TG = (\G_1,t_1), \dots, (\G_n,t_n)}$ be a temporal graph with   $\G_i=(V_i,E_i,c_i)$.
We define a  
knowledge graph $\Kone{\TG} = (V,E,R,c)$ with components:
\begin{itemize}
\item $V = \TV(\TG)$, 
\item $E= \{ (t_j-t_i,(v,t_i),(u,t_j)) \mid i\leq j \text{ and } \{u,v\} \in E_i \}$, 
\item $R=\{0,\dots,n -1 \}$, 
\item $c:V \to R$ satisfies\footnote{for brevity we will drop double brackets, e.g. from $c((v,t_i))$} $c(v,t_i) = c_i(v)$, for all $(v,t_i) \in V$.
\end{itemize}
\end{definition}

We use $\Kone{\TG}$ to bridge the expressive power of global \MPTGNN{}s and 1-WL. 
First we  show that global \MPTGNN{}s cannot distinguish more timestamped nodes over \TG{} than 1-WL over $\Kone{\TG}$.

\begin{restatable}{thm}{Konei}\label{thm:Kone1}
For any  temporal graph $\TG$, any timestamped nodes $(v,t)$ and $(u,t')$ in $\TG$, and any $\ell \in \N$:
\begin{itemize}
\item 
If 
$\rwl^{(\ell)}(v,t) = \rwl^{(\ell)}(u,t')$ in $\Kone{\TG}$,
\item 
then $\h_v^{(\ell)}(t)=\h_u^{(\ell)}(t')$ in any global \TGNN{}. 
\end{itemize}
\end{restatable}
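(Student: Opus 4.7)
The plan is to proceed by induction on the layer index $\ell$. For the base case $\ell = 0$, the construction of $\Kone{\TG}$ sets $\rwl^{(0)}(v,t_i) = c(v,t_i) = c_i(v) = \x_v(t_i)$, which is exactly the initial embedding $\h_v^{(0)}(t_i)$ in any global \MPTGNN{}, so the implication is immediate.

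For the inductive step, assume the claim for $\ell-1$ and suppose $\rwl^{(\ell)}(v,t) = \rwl^{(\ell)}(u,t')$. Injectivity of $\tau$ decomposes this equality into (a) $\rwl^{(\ell-1)}(v,t) = \rwl^{(\ell-1)}(u,t')$, and (b) equality of the multisets
\[
\ldblbrace (\rwl^{(\ell-1)}(w,s'), r) \mid (w,s') \in \NH_r((v,t)),\ r \in R \rdblbrace
\]
evaluated at $(v,t)$ and at $(u,t')$. Applying the inductive hypothesis to~(a) yields $\h_v^{(\ell-1)}(t) = \h_u^{(\ell-1)}(t')$.

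The crux is to rewrite (b) in terms of the temporal neighbourhood used by a global \MPTGNN{}. By definition of $\Kone{\TG}$, an edge $(r, (w,s'), (v,t))$ exists precisely when $(w,s') \in \NH(v,t)$ with $r = t - s'$; hence the multiset in~(b) at $(v,t)$ equals $\ldblbrace (\rwl^{(\ell-1)}(w,s'), t - s') \mid (w,s') \in \NH(v,t) \rdblbrace$, and similarly at $(u,t')$. The multiset equality therefore furnishes a bijection $\pi : \NH(v,t) \to \NH(u,t')$ preserving both the \rwl-value and the time gap. By the inductive hypothesis $\pi$ also preserves $\h^{(\ell-1)}$, and composing with $g$ on the second coordinate preserves equality, so the multisets $\ldblbrace (\h_w^{(\ell-1)}(s'), g(t-s')) \mid (w,s') \in \NH(v,t) \rdblbrace$ at $(v,t)$ and $(u,t')$ coincide. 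Feeding these equal multisets into $\agg^{(\ell)}$ and combining the results with the equal embeddings $\h_v^{(\ell-1)}(t)$ and $\h_u^{(\ell-1)}(t')$ via $\com^{(\ell)}$ as prescribed by the global update rule~\eqref{TGN2} yields $\h_v^{(\ell)}(t) = \h_u^{(\ell)}(t')$.

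The main obstacle is the bookkeeping that matches the $r$-labelled edges of $\Kone{\TG}$ with the temporal neighbourhood $\NH(v,t)$ and its time-gap labels, i.e.\ the step where the enumeration by $r \in R$ in the \rwl{} update collapses to the single aggregation over $\NH(v,t)$ in the global \MPTGNN{} update. Once this correspondence is spelled out, the remainder is a standard induction driven by injectivity of $\tau$ together with the inductive hypothesis.
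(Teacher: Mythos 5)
Your proof is correct and follows essentially the same route as the paper's: induction on $\ell$ with the base case $\rwl^{(0)}(v,t_i)=c_i(v)=\h_v^{(0)}(t_i)$, and an inductive step hinging on the key property that $(w,s')\in\NH_r(v,t)$ in $\Kone{\TG}$ iff $(w,s')\in\NH(v,t)$ in $\TG$ and $r=t-s'$, which converts the \rwl{} multiset equality into equality of the multisets aggregated by the global update rule. Your explicit bijection $\pi$ is just a more detailed spelling-out of the same multiset argument.
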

\begin{proof}[Proof sketch]
By induction on $\ell$.
The base case holds since $\rwl^{(0)}(v,t_i)=c(v,t_i)=c_i(v)=\h_v^{(0)}(t_i)$, for each $(v,t_i)$ in $\TG$.
The inductive step proceeds in a similar way to \GNN{}s~\cite{DBLP:conf/aaai/0001RFHLRG19},
but additionally exploits
the following key property of $\Kone{\TG}$: 
$(u,t') \in \NH_r(v,t)$ in $\Kone{\TG}$ iff  
$r=t-t'$ and $(u,t') \in \NH(v,t)$ in $\TG$,
for all timestamped nodes $(v,t)$ and $(u,t')$  in $\TG$.
\end{proof}

Moreover, we can show the opposite direction: there is a global \MPTGNN{} distinguishing exactly the same nodes over $\TG$ as 1-WL over $\Kone{\TG}$.
By \Cref{thm:Kone1} each global \MPTGNN{} is not more expressive than 1-WL over $\Kone{\TG}$, so for the next theorem it suffices to construct an \MPTGNN{} at least as expressive as 1-WL over $\Kone{\TG}$.

\begin{restatable}{thm}{Koneii}\label{thm:Kone2}
For any temporal graph $\TG$ and any $L \in \N$, there exists a global 
\MPTGNN{} $\model$ with $L$ layers 
such that for all timestamped nodes $(v,t), (u,t')$ in $\TG$ and all $\ell \leq L$ the following are equivalent:
\begin{itemize}
\item 
$\h_v^{(\ell)}(t)=\h_u^{(\ell)}(t')$ in $\model$,
\item 
$\rwl^{(\ell)}(v,t) = \rwl^{(\ell)}(u,t')$ in $\Kone{\TG}$.
\end{itemize}
\end{restatable}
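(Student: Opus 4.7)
The plan is to construct a specific global $\MPTGNN{}$ whose layer-$\ell$ embeddings partition the timestamped nodes of $\TG$ exactly as the $\ell$-th round of $\rwl$ partitions the nodes of $\Kone{\TG}$. By \Cref{thm:Kone1}, every global $\MPTGNN{}$ is already refined by $\rwl$ on $\Kone{\TG}$, so it suffices to build one model whose embeddings are at least as discriminating as $\rwl$; equality then follows automatically.

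First, I would fix the time-encoding map $g$ to be an injective encoding of the finite set $\{t_j - t_i \mid 1 \le i \le j \le n\}$ of nonnegative time differences (for instance a one-hot vector of length $n$). This makes the second coordinate $g(t-t')$ of each message a faithful surrogate for the relation label $r = t-t'$ used by $\Kone{\TG}$. Second, I would choose the functions $\com^{(\ell)}$ and $\agg^{(\ell)}$ to be injective on their inputs. Since $\TG$ is finite, at every layer only finitely many embeddings and multisets ever appear; by the standard Zaheer/Xu-style universal approximation argument (used in the $\GNN$--1-WL equivalence of \cite{DBLP:conf/aaai/0001RFHLRG19,DBLP:conf/iclr/XuHLJ19}), injective $\com^{(\ell)}$ and $\agg^{(\ell)}$ of the required form are realisable as MLPs and thus as valid components of a global $\MPTGNN{}$.

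With this model fixed, I would prove the forward direction ($\rwl$-distinguishable implies $\h$-distinguishable) by induction on $\ell$. The base case is immediate: $\h_v^{(0)}(t_i) = c_i(v) = c(v,t_i) = \rwl^{(0)}(v,t_i)$, so the map $\rwl^{(0)} \mapsto \h^{(0)}$ is the identity and hence injective. For the inductive step, suppose $\rwl^{(\ell)}(v,t) \neq \rwl^{(\ell)}(u,t')$. Since $\tau$ in the definition of $\rwl$ is injective, either (a) $\rwl^{(\ell-1)}(v,t) \neq \rwl^{(\ell-1)}(u,t')$, or (b) the multisets $\ldblbrace (\rwl^{(\ell-1)}(w), r) \mid w \in \NH_r(\cdot), r \in R \rdblbrace$ differ at the two nodes. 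In case (a), the induction hypothesis and the injectivity of $\com^{(\ell)}$ in its first argument pass the inequality up to layer $\ell$. In case (b), the key property of $\Kone{\TG}$ established in the proof of \Cref{thm:Kone1}---namely $(w,s) \in \NH_r((x,t^*))$ in $\Kone{\TG}$ iff $r = t^*-s$ and $(w,s) \in \NH(x,t^*)$ in $\TG$---gives a bijection between pairs $(w,r)$ in the $\rwl$-multiset and pairs $(w,s)$ with $s = t^*-r$ in the temporal neighbourhood multiset $\ldblbrace (\h_w^{(\ell-1)}(s), g(t^*-s)) \mid (w,s) \in \NH(x,t^*) \rdblbrace$. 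Combining the induction hypothesis (injective layerwise correspondence of $\rwl^{(\ell-1)}$ and $\h^{(\ell-1)}$) with injectivity of $g$ turns the difference in $\rwl$-multisets into a difference in $\MPTGNN{}$-multisets, which is then propagated through $\agg^{(\ell)}$ and $\com^{(\ell)}$ by their injectivity.

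The main obstacle is the careful bookkeeping in case (b): one must show that the bijection induced by the $\Kone{\TG}$ neighbourhood characterisation, together with the layerwise injective maps $\rwl^{(\ell-1)}(\cdot) \mapsto \h^{(\ell-1)}(\cdot)$ and $r \mapsto g(r)$, really takes the multiset appearing in the $\rwl$ update to the multiset appearing in the $\MPTGNN{}$ update, so that multiset inequality on one side forces multiset inequality on the other. Once this correspondence is set up cleanly, combining the forward direction obtained here with the backward direction of \Cref{thm:Kone1} yields the stated equivalence for every $\ell \leq L$.
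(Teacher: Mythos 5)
Your proposal is correct, but it takes a genuinely different route from the paper. The paper reduces the hard direction to an external result: it invokes Theorem A.1 of \citet{DBLP:conf/nips/Huang0CB23}, which supplies, for any knowledge graph and hence for $\Kone{\TG}$, an explicit R-MPNN $\modelB$ (sum over relational neighbourhoods with relation-dependent scalars $\alpha_r$, followed by the sign of a linear map) whose equal embeddings at layer $\ell$ force equal $\rwl^{(\ell)}$ labels; the proof then consists of checking that this specific $\modelB$ can be re-expressed as a global \MPTGNN{} with $\agg^{(\ell)}$ a sum and $\com^{(\ell)}$ a sign of a linear combination, using the neighbourhood correspondence between $\Kone{\TG}$ and $\TG$. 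You instead build the model from scratch in the GIN style: choose $g$ injective on the finite set of time gaps, choose $\com^{(\ell)}$ and $\agg^{(\ell)}$ injective on the finitely many values arising on $\TG$ (the paper's definition even permits arbitrary functions here, so realisability as MLPs is not strictly needed), and prove by induction that $\rwl$-distinguishability transfers to embedding-distinguishability, using \Cref{thm:Kone1} to get the reverse direction at level $\ell-1$ so that the layerwise correspondence is a genuine bijection of value sets and multiset inequality is preserved. Your bookkeeping in case (b) is sound: each $(w,t'')\in\NH(v,t)$ contributes exactly one pair to each multiset, with the relation label determined as $t-t''$, so the two multisets are related by an injective value map. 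What each approach buys: the paper's route is shorter and yields a concrete, simple architecture (sum plus sign) at the cost of depending on the correctness and exact form of the cited theorem; your route is self-contained, makes the role of injectivity and of the time encoding $g$ explicit, and generalises more transparently to the local case of \Cref{thm:Ktwo2}.
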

\begin{proof}[Proof sketch]
The important part of the proof is for the forward implication, as the other implication follows from \Cref{thm:Kone1}.
We use the result of \citet{DBLP:conf/nips/Huang0CB23}[Theorem A.1] showing that for any knowledge graph and in particular $\Kone{\TG}$,
there is a relational message-passing neural network (R-MPNN) model $\modelB$ such that if two nodes $(v,t)$ and $(u,t')$ of $\Kone{\TG}$ have the  same embeddings  at a layer $\ell$, we get 
$\rwl^{(\ell)}(v,t) = \rwl^{(\ell)}(u,t')$.
\citeauthor{DBLP:conf/nips/Huang0CB23}'s model $\modelB$ 
computes $\hh_{(v,t)}^{(\ell)}$ as follows:
$
\hh_{(v,t)}^{(0)}   =  c(v,t)$ and
$\hh_{(v,t)}^{(\ell)} = \sign \Big( 
\W^{(\ell)} (\hh_{(v,t)}^{(\ell-1)}  +  
 \sum_{r\in R} \; \; \sum_{(u,t') \in \NH_{r}(v,t)} 
\alpha_r \hh_{(u,t')}^{(\ell-1)} )  -  \bb \Big).
$
To finish the proof, we construct a global \MPTGNN{} such that
$\h^{(\ell)}_v(t)$
computed by $\model$  on $\TG$  coincide with  $\hh_{(v,t)}^{(\ell)}$ computed by $\modelB$ on $\Kone{\TG}$.
We obtain it by setting  in Equation~\eqref{TGN2} functions
$\agg^{(\ell)}$ to the sum and
$\com^{(\ell)}$ to the sign of a particular linear combination.
\end{proof}

Next, we show
that we can also construct a knowledge graph representing message-passing in local \MPTGNN{}s.
In contrast to $\Kone{\TG}$,  edges 
of the new knowledge graph $\Ktwo{\TG}$
are bidirectional and hold only between nodes stamped with the same time.
Such a knowledge graph is presented in \Cref{fig:Ktwo} and  formally defined below.

\begin{figure}[ht]
\centering
\begin{tikzpicture}[
dot/.style = {draw, circle, minimum size=#1,
              inner sep=0pt, outer sep=0pt},
dot/.default = 6pt
]
\scriptsize
\pgfmathsetmacro{\tline}{-0.25}
\pgfmathsetmacro{\w}{1.3}
\pgfmathsetmacro{\h}{1.5}
\pgfmathsetmacro{\inh}{0.7}
\pgfmathsetmacro{\dist}{1.9}
\pgfmathsetmacro{\Ax}{0.5}
\pgfmathsetmacro{\Ay}{1.4}
\pgfmathsetmacro{\Bx}{0.4}
\pgfmathsetmacro{\By}{0.5}
\pgfmathsetmacro{\Cx}{1}
\pgfmathsetmacro{\Cy}{1}

\draw[->] (1.8,\tline) -- (9.1,\tline);

\foreach \x in {1,...,4}
{
\draw[fill=gray!90!black,opacity=0.2] (\x*\dist,0) -- (\x*\dist+\w,\inh) -- (\x*\dist+\w,\inh+\h) -- (\x*\dist,\h) -- cycle;

\node at (\x*\dist + 0.87 * \w, \inh+ \h - 0.3) {$G_{\x}$};
\node at (\x*\dist + 0.6 * \w, 0) {$t_{\x}={\x}$};
\draw[-] (\x*\dist + 0.6 * \w, \tline+0.05) -- (\x*\dist + 0.6 * \w, \tline-0.05);

\node[dot=9pt,draw=none] (A\x) at (\x*\dist+\Ax, 0+\Ay) {};
\node[dot=9pt,draw=none] (B\x) at (\x*\dist+\Bx, 0+\By) {};
\node[dot=9pt,draw=none] (C\x) at (\x*\dist+\Cx, 0+\Cy) {};
}

\draw[<->,blue,thick] (A2) -- (B2) node[midway,left] {0};
\draw[<->,blue,thick] (A3) -- (B3) node[midway,left] {1};
\draw[<->,blue,thick] (B3) -- (C3) node[pos=0.3,above] {0};
\draw[<->,blue,thick] (A4) -- (B4) node[midway,left] {2};
\draw[<->,blue,thick] (A4) -- (C4) node[pos=-0.08,right=0.07] {0};
\draw[<->,blue,thick] (B4) -- (C4) node[pos=0.3,above] {1};
\draw (B4) edge[<->,blue,thick,bend right=20] node[pos=0.7,below] {0} (C4);

\node[dot=9pt,draw=black,fill=myblue] at (A1) {$a$};
\node[dot=9pt,draw=black,fill=mygreen] at (B1) {$b$};
\node[dot=9pt,draw=black,fill=myred] at (C1) {$c$};
\node[dot=9pt,draw=black,fill=mygreen] at (A2) {$a$};
\node[dot=9pt,draw=black,fill=mygreen] at (B2) {$b$};
\node[dot=9pt,draw=black,fill=myred] at (C2) {$c$};
\node[dot=9pt,draw=black,fill=mygreen] at (A3) {$a$};
\node[dot=9pt,draw=black,fill=mygreen] at (B3) {$b$};
\node[dot=9pt,draw=black,fill=mygreen] at (C3) {$c$};
\node[dot=9pt,draw=black,fill=myblue] at (A4) {$a$};
\node[dot=9pt,draw=black,fill=mygreen] at (B4) {$b$};
\node[dot=9pt,draw=black,fill=mygreen] at (C4) {$c$};
\end{tikzpicture}
\caption{Knowledge graph $\Ktwo{\TG}$ for $\TG$ from \Cref{fig:snapshot}}
\label{fig:Ktwo}
\end{figure}

\begin{definition}
Let ${\TG = (\G_1,t_1), \dots, (\G_n,t_n)}$ be a temporal graph with   $\G_i=(V_i,E_i,c_i)$.
We define a  knowledge 
knowledge graph $\Ktwo{\TG} = (V,E,R,c)$ with:
\begin{itemize}
\item $V = \TV(\TG)$, 
\item $E= \{ (t_j-t_i,(v,t_j),(u,t_j)) \mid i\leq j \text{ and } \{u,v\} \in E_i \}$, 
\item $R=\{0,\dots,n-1 \}$, 
\item $c:V \to R$ satisfies $c(v,t_i) = c_i(v)$, for all $(v,t_i) \in V$.
\end{itemize}
\end{definition}

We can show that local \MPTGNN{}s can distinguish exactly the same nodes in $\TG$ as $\rwl$ can distinguish in $\Ktwo{\TG}$, as formally stated in the following two theorems.

\begin{restatable}{thm}{Ktwoi}\label{thm:Ktwo1}
For any temporal graph $\TG$, any timestamped nodes $(v,t)$ and $(u,t')$ in $\TG$, and any $\ell \in \N$:
\begin{itemize}
\item 
If 
$\rwl^{(\ell)}(v,t) = \rwl^{(\ell)}(u,t')$ in $\Ktwo{\TG}$,
\item 
then $\h_v^{(\ell)}(t)=\h_u^{(\ell)}(t')$ in any local \MPTGNN{}.
\end{itemize}
\end{restatable}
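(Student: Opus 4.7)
The plan is to follow the inductive pattern of \Cref{thm:Kone1}, with $\Ktwo{\TG}$ replacing $\Kone{\TG}$ throughout, and with one structural adaptation: in a \emph{local} \MPTGNN{} a message from a neighbour $(u,t')\in\NH(v,t)$ carries $\h_u^{(\ell-1)}(t)$, i.e.\ the embedding at the \emph{target's} time $t$ rather than at $t'$. This is precisely the feature encoded by $\Ktwo{\TG}$, whose edges connect only nodes sharing a common timestamp.

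Before running the induction I would isolate a neighbourhood-correspondence lemma for $\Ktwo{\TG}$: for every $(v,t_j)\in\TV(\TG)$ and every $r\in R$,
\[
\NH_r((v,t_j)) \;=\; \{(w,t_j) \mid (w, t_j - r) \in \NH(v,t_j)\text{ in }\TG\}.
\]
This is a direct unfolding of the edge set of $\Ktwo{\TG}$ together with the symmetry of each $E_i$: an $r$-neighbour of $(v,t_j)$ in $\Ktwo{\TG}$ is a copy, at the same timestamp $t_j$, of a snapshot-neighbour of $v$ from the earlier time $t_j - r$.

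The induction on $\ell$ then runs in parallel with that of \Cref{thm:Kone1}. The base case is immediate since $\rwl^{(0)}(v,t_i) = c(v,t_i) = c_i(v) = \h_v^{(0)}(t_i)$. For the inductive step, assume the claim at layer $\ell-1$ and suppose $\rwl^{(\ell)}(v,t) = \rwl^{(\ell)}(u,t')$ in $\Ktwo{\TG}$. Injectivity of $\tau$ yields both (i) $\rwl^{(\ell-1)}(v,t) = \rwl^{(\ell-1)}(u,t')$, which by the inductive hypothesis gives $\h_v^{(\ell-1)}(t) = \h_u^{(\ell-1)}(t')$, and (ii) equality of the multisets of $\rwl^{(\ell-1)}$-tagged $r$-neighbours at the two targets. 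Using the correspondence lemma to reindex the $\Ktwo{\TG}$-neighbours of $(v,t)$ as pairs $(w,t_i) \in \NH(v,t)$ with $r = t - t_i$ (and analogously for $(u,t')$), and applying the inductive hypothesis pointwise via any bijection witnessing (ii) together with the fixed function $g$ on the time-difference coordinate, one obtains equality of the multisets
\[
\ldblbrace (\h_w^{(\ell-1)}(t), g(t - t_i)) \mid (w, t_i) \in \NH(v,t) \rdblbrace
\]
at $(v,t)$ and at $(u,t')$. These are exactly the inputs to $\agg^{(\ell)}$ in any local \MPTGNN{}; feeding the common aggregated value together with (i) through $\com^{(\ell)}$ concludes $\h_v^{(\ell)}(t) = \h_u^{(\ell)}(t')$.

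The main subtlety is translating the multiset equality in $\Ktwo{\TG}$, whose second coordinate is the raw relation label $r \in R$, into the local \MPTGNN{} multiset equality, whose second coordinate is $g(t - t')$. This step is routine since $g$ is a fixed function applied coordinatewise, but it relies essentially on the neighbourhood-correspondence lemma to place $\NH_r$ in $\Ktwo{\TG}$ in bijection with $\NH$ in $\TG$ compatibly with the relation labels; everything else is a bookkeeping exercise that is structurally identical to the proof of \Cref{thm:Kone1}.
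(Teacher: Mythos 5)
Your proof is correct and follows essentially the same route as the paper: an induction on $\ell$ mirroring the proof of \Cref{thm:Kone1}, whose inductive step hinges on exactly the neighbourhood-correspondence property you isolate (the paper states it as: $(u,t')\in\NH_r(v,t)$ in $\Ktwo{\TG}$ iff $t'=t$ and $(u,t'')\in\NH(v,t)$ in $\TG$ for some $t''$ with $r=t-t''$), combined with the observation that local message passing aggregates embeddings at the target's own timestamp. Your version merely spells out the reindexing and the role of $g$ in slightly more detail than the paper's sketch.
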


\begin{restatable}{thm}{Ktwoii}\label{thm:Ktwo2}
For any  temporal graph $\TG$ and any $L \in \N$, there exists a local \MPTGNN{} $\model$ with $L$ layers 
such that for all timestamped nodes $(v,t), (u,t')$ in $\TG$ and all $\ell \leq L$, the following are equivalent:
\begin{itemize}
\item 
$\h_v^{(\ell)}(t)=\h_u^{(\ell)}(t')$ in $\model$,
\item 
$\rwl^{(\ell)}(v,t) = \rwl^{(\ell)}(u,t')$ in $\Ktwo{\TG}$.
\end{itemize}
\end{restatable}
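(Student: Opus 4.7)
The plan is to mirror the proof of \Cref{thm:Kone2}, adapted to the local message-passing mechanism and the knowledge graph $\Ktwo{\TG}$. Since \Cref{thm:Ktwo1} already establishes the direction ``if $\rwl$ agrees then $\h$ agrees'' for every local \MPTGNN{}, the real work is to exhibit a specific local \MPTGNN{} $\model$ witnessing the converse: if $\h_v^{(\ell)}(t)=\h_u^{(\ell)}(t')$ in $\model$, then $\rwl^{(\ell)}(v,t)=\rwl^{(\ell)}(u,t')$ in $\Ktwo{\TG}$.

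First I would invoke \citet{DBLP:conf/nips/Huang0CB23}[Theorem A.1] on $\Ktwo{\TG}$ to obtain an R-MPNN $\modelB$ whose embeddings $\hh_{(v,t)}^{(\ell)}$ are at least as discriminating as $\rwl^{(\ell)}$, so that $\hh_{(v,t)}^{(\ell)}=\hh_{(u,t')}^{(\ell)}$ implies $\rwl^{(\ell)}(v,t)=\rwl^{(\ell)}(u,t')$. The update rule of $\modelB$ has the same shape as in the proof of \Cref{thm:Kone2}:
\[
\hh_{(v,t)}^{(\ell)} = \sign \Big( \W^{(\ell)} \big( \hh_{(v,t)}^{(\ell-1)} + \sum_{r \in R} \sum_{(u',t'') \in \NH_r(v,t)} \alpha_r \hh_{(u',t'')}^{(\ell-1)} \big) - \bb \Big).
\]

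Next I would construct a local \MPTGNN{} $\model$ on $\TG$ reproducing $\modelB$'s computation on $\Ktwo{\TG}$. The crucial observation is that for any $(v,t_j)$ in $\Ktwo{\TG}$, the $r$-neighbourhood $\NH_r(v,t_j)$ consists precisely of the $(u,t_j)$ such that $\{u,v\}\in E_i$, where $i$ is uniquely fixed by $t_i=t_j-r$. Hence enumerating pairs $((u',t''),r)$ with $(u',t'')\in\NH_r(v,t_j)$ in $\Ktwo{\TG}$ is in natural bijection with enumerating $(u,t_i)\in\NH(v,t_j)$ in $\TG$ and reading the current-time embedding $\hh_{(u,t_j)}^{(\ell-1)}$, which in a local \MPTGNN{} corresponds to $\h_u^{(\ell-1)}(t_j)$ (the local choice $\fone=\h_u^{(\ell-1)}(t)$); meanwhile the relation label $r$ corresponds to the time difference $t_j-t_i$ fed through $g$. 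I would therefore take $g(t-t'):=\alpha_{t-t'}$, let $\agg^{(\ell)}$ sum the terms $\fone\cdot g(t-t')$ over the input multiset, and set $\com^{(\ell)}(x,y):=\sign\bigl(\W^{(\ell)}(x+y)-\bb\bigr)$. A straightforward induction on $\ell$ then yields $\h_v^{(\ell)}(t)=\hh_{(v,t)}^{(\ell)}$ at every timestamped node.

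Chaining the two steps gives the desired implication: $\h_v^{(\ell)}(t)=\h_u^{(\ell)}(t')$ in $\model$ forces $\hh_{(v,t)}^{(\ell)}=\hh_{(u,t')}^{(\ell)}$ in $\modelB$, and therefore $\rwl^{(\ell)}(v,t)=\rwl^{(\ell)}(u,t')$ in $\Ktwo{\TG}$. The main subtlety I anticipate is cleanly verifying the neighbourhood bijection --- one must check that the local \MPTGNN{}'s aggregation over $\NH(v,t)$ with time-difference features $g(t-t')$ genuinely simulates the relation-indexed sum of $\modelB$ on $\Ktwo{\TG}$, exploiting the facts that edges in $\Ktwo{\TG}$ connect only nodes stamped with the same time and that $r$ uniquely determines the originating snapshot. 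Once that correspondence is nailed down, the inductive step is purely mechanical.
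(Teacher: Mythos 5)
Your proposal is correct and follows essentially the same route as the paper's proof: invoke \citet{DBLP:conf/nips/Huang0CB23}[Theorem A.1] on $\Ktwo{\TG}$ to obtain the R-MPNN $\modelB$, then simulate it by a local \MPTGNN{} with summation-based aggregation of $\alpha_{t-t'}\h_u^{(\ell-1)}(t)$ and combination $\sign(\W^{(\ell)}(\cdot+\cdot)-\bb)$, using exactly the neighbourhood correspondence between $\NH_r(v,t)$ in $\Ktwo{\TG}$ and $\NH(v,t)$ in $\TG$ that the paper relies on. The remaining direction is, as you note, exactly \Cref{thm:Ktwo1}, so nothing is missing.
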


The
Weisfeiler-Leman characterisation 
of global and local \MPTGNN{}s established in the above theorems provides us with a versatile tool for analysing expressive power, which we will intensively apply in the following parts of the paper.

\section{Timewise Isomorphism}

While message-passing \GNN{}s (corresponding to 1-WL) provide us with a heuristic for graph isomorphism, their temporal extensions can be seen as heuristics for isomorphism between temporal graphs. 
However,  in the temporal setting it is not clear what notion of  isomorphism we should use to obtain an analogous correspondence.
We use the characterisation from the previous section to show, quite surprisingly, 
that both global and local \MPTGNN{}s
can distinguish nodes which are \emph{pointwise isomorphic}---called isomorphic by~\citet{DBLP:journals/nn/BeddarWiesingDGLMST24}.
This observation leads us to definition of \emph{timewise isomorphism} as a suitable notion for node indistinguishability in temporal graphs.

Pointwise isomorphism  requires that pairs of corresponding snapshots in two temporal graphs are isomorphic. 
For example $(a,t_2)$ in $\TG$ and $(a',t_2)$ in $\TG'$ from \Cref{fig:point} are pointwise isomorphic since  
$f_1$  with 
$f_1(a)=b'$, $f_1(b)=c'$, and $f_1(c)=a'$  is an isomorphism between $G_1$ and $G_1'$, and   $f_2$ with 
$f_2(a)=a'$, $f_2(b)=b'$, and $f_2(c)=c'$
is an isomorphism between $G_2$ and $G_2'$.
A formal definition is below.

\begin{figure}[ht]
\centering
\begin{tikzpicture}[
dot/.style = {draw, circle, minimum size=#1,
              inner sep=0pt, outer sep=0pt},
dot/.default = 6pt
]
\scriptsize
\pgfmathsetmacro{\tline}{-0.25}
\pgfmathsetmacro{\w}{1.3}
\pgfmathsetmacro{\h}{1.5}
\pgfmathsetmacro{\inh}{0.7}
\pgfmathsetmacro{\dist}{1.7}
\pgfmathsetmacro{\Ax}{0.5}
\pgfmathsetmacro{\Ay}{1.4}
\pgfmathsetmacro{\Bx}{0.4}
\pgfmathsetmacro{\By}{0.5}
\pgfmathsetmacro{\Cx}{1}
\pgfmathsetmacro{\Cy}{1}

\draw[->] (1.8,\tline) -- (4.8,\tline);

\foreach \x in {1,...,2}
{
\draw[fill=gray!90!black,opacity=0.2] (\x*\dist,0) -- (\x*\dist+\w,\inh) -- (\x*\dist+\w,\inh+\h) -- (\x*\dist,\h) -- cycle;

\node at (\x*\dist + 0.87 * \w, \inh+ \h - 0.3) {$G_{\x}$};
\node at (\x*\dist + 0.6 * \w, 0) {$t_{\x}={\x}$};
\draw[-] (\x*\dist + 0.6 * \w, \tline+0.05) -- (\x*\dist + 0.6 * \w, \tline-0.05);

\node (A\x) at (\x*\dist+\Ax, 0+\Ay) {};
\node (B\x) at (\x*\dist+\Bx, 0+\By) {};
\node (C\x) at (\x*\dist+\Cx, 0+\Cy) {};
}

\draw[thick] (A1) -- (B1);

\node[dot=11pt,draw=black,fill=mygreen] at (A1) {$a$};
\node[dot=11pt,draw=black,fill=mygreen] at (B1) {$b$};
\node[dot=11pt,draw=black,fill=mygreen] at (C1) {$c$};
\node[dot=11pt,draw=black,fill=mygreen] at (A2) {$a$};
\node[dot=11pt,draw=black,fill=mygreen] at (B2) {$b$};
\node[dot=11pt,draw=black,fill=mygreen] at (C2) {$c$};
\node at (3.6,2.2) {$\TG$};
\end{tikzpicture}
\qquad
\begin{tikzpicture}[
dot/.style = {draw, circle, minimum size=#1,
              inner sep=0pt, outer sep=0pt},
dot/.default = 6pt
]
\scriptsize
\pgfmathsetmacro{\tline}{-0.25}
\pgfmathsetmacro{\w}{1.3}
\pgfmathsetmacro{\h}{1.5}
\pgfmathsetmacro{\inh}{0.7}
\pgfmathsetmacro{\dist}{1.7}
\pgfmathsetmacro{\Ax}{0.5}
\pgfmathsetmacro{\Ay}{1.4}
\pgfmathsetmacro{\Bx}{0.4}
\pgfmathsetmacro{\By}{0.5}
\pgfmathsetmacro{\Cx}{1}
\pgfmathsetmacro{\Cy}{1}

\draw[->] (1.8,\tline) -- (4.8,\tline);

\foreach \x in {1,...,2}
{
\draw[fill=gray!90!black,opacity=0.2] (\x*\dist,0) -- (\x*\dist+\w,\inh) -- (\x*\dist+\w,\inh+\h) -- (\x*\dist,\h) -- cycle;

\node at (\x*\dist + 0.87 * \w, \inh+ \h - 0.3) {$G'_{\x}$};
\node at (\x*\dist + 0.6 * \w, 0) {$t_{\x}={\x}$};
\draw[-] (\x*\dist + 0.6 * \w, \tline+0.05) -- (\x*\dist + 0.6 * \w, \tline-0.05);

\node (A\x) at (\x*\dist+\Ax, 0+\Ay) {};
\node (B\x) at (\x*\dist+\Bx, 0+\By) {};
\node (C\x) at (\x*\dist+\Cx, 0+\Cy) {};
}

\draw[thick] (B1) -- (C1);

\node[dot=11pt,draw=black,fill=mygreen] at (A1) {$a'$};
\node[dot=11pt,draw=black,fill=mygreen] at (B1) {$b'$};
\node[dot=11pt,draw=black,fill=mygreen] at (C1) {$c'$};
\node[dot=11pt,draw=black,fill=mygreen] at (A2) {$a'$};
\node[dot=11pt,draw=black,fill=mygreen] at (B2) {$b'$};
\node[dot=11pt,draw=black,fill=mygreen] at (C2) {$c'$};
\node at (3.6,2.2) {$\TG'$};
\end{tikzpicture}
\caption{Pointwise isomorphic $(a,t_2)$ and $(a',t_2)$
}
\label{fig:point}
\end{figure}

\begin{definition}
Temporal graphs $\TG\!=\! (\G_1,t_1), \dots, (\G_n,t_n)$ and $\TG' = (\G_1',t_1'), \dots, (\G_m',t_m')$ are
\emph{pointwise isomorphic}
if both of the following hold:
\begin{itemize}
\item $\T(\TG)=\T(\TG')$ (so $n=m$ and $t_i=t'_i$ for all $i \in \{1, \dots, n \}$)
\item for every $i \in \{ 1, \dots , n \}$ there exists an isomorphism $f_i$
between $\G_i$ and $\G_i'$.
\end{itemize}
If this is the case and $f_i(v) = u$, we say that   $(v,t_i)$ and $(u,t_i)$ are \emph{pointwise isomorphic}.
\end{definition}

It turns out that both global and local \MPTGNN{}s can distinguish 
pointwise isomorphic nodes.
In particular, they can distinguish
$(a,t_2)$ and $(a',t_2)$ from \Cref{fig:point}, as we show below using
\Cref{thm:Kone2} and \Cref{thm:Ktwo2}. 

\begin{restatable}{thm}{piso}\label{thm:piso}
There are temporal graphs $\TG$ and $\TG'$ with pointwise isomorphic $(v,t)$ and $(u,t')$ such that  
$\h_{v}^{(1)}(t) \neq \h_{u}^{(1)}(t')$ for some  
global and local \MPTGNN{}s.
\end{restatable}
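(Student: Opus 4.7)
The plan is to use the pair $(a, t_2)$ in $\TG$ and $(a', t_2)$ in $\TG'$ from \Cref{fig:point} as the witness, and to appeal to the Weisfeiler-Leman characterisations \Cref{thm:Kone2} and \Cref{thm:Ktwo2} to supply the required \MPTGNN{}s. Pointwise isomorphism of this pair is already spelled out in the paragraph preceding the theorem via the two snapshot-local bijections $f_1$ and $f_2$, so the content of the theorem lies entirely in producing global and local \MPTGNN{}s that separate the two nodes.

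To invoke \Cref{thm:Kone2} and \Cref{thm:Ktwo2}, which are stated for a single input temporal graph, I would work over the disjoint union $\TG \sqcup \TG'$. Because message passing is local, the embedding produced on $\TG \sqcup \TG'$ at any node coincides with the embedding produced on the component containing that node, so a separator on the union automatically separates the two graphs as required. Within $\Kone{\TG \sqcup \TG'}$, the edge $\{a, b\} \in E_1$ contributes a labelled edge with $r = t_2 - t_1 = 1$ from $(b, t_1)$ into $(a, t_2)$, giving $\NH_1(a, t_2) = \{(b, t_1)\}$; on the other side, $a'$ is non-adjacent at $t_1$, hence $\NH_1(a', t_2) = \emptyset$. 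Since the initial colours at $(a, t_2)$ and $(a', t_2)$ agree and the update function $\tau$ is injective, we obtain $\rwl^{(1)}(a, t_2) \neq \rwl^{(1)}(a', t_2)$. The same calculation in $\Ktwo{\TG \sqcup \TG'}$ yields $\NH_1(a, t_2) = \{(b, t_2)\}$ versus $\NH_1(a', t_2) = \emptyset$, and again $\rwl^{(1)}$ separates the two.

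Applying \Cref{thm:Kone2} with $L = 1$ then produces a global \MPTGNN{} with $\h_a^{(1)}(t_2) \neq \h_{a'}^{(1)}(t_2)$, and \Cref{thm:Ktwo2} with $L = 1$ produces a local one, completing the proof. The argument is compact once the knowledge-graph translations are available; the main point of care, and hence the mildest technical hurdle, is the disjoint-union manoeuvre that lets a theorem stated for a single temporal graph be used to compare nodes living in two different ones. Conceptually, the example works because pointwise isomorphism permits distinct bijections $f_1, f_2$ at each snapshot, whereas $\Kone$ and $\Ktwo$ impose a single static node correspondence, so message passing can exploit that rigidity to distinguish nodes that pointwise isomorphism deems identical, motivating the stronger timewise notion introduced next.
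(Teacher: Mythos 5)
Your proposal is correct and follows essentially the same route as the paper's own proof: the same witness pair $(a,t_2)$, $(a',t_2)$ from \Cref{fig:point}, the same observation that $(a,t_2)$ acquires an $r=1$ in-neighbour in $\Kone{\cdot}$ and $\Ktwo{\cdot}$ while $(a',t_2)$ acquires none, and the same appeal to \Cref{thm:Kone2} and \Cref{thm:Ktwo2}. Your explicit disjoint-union step to reconcile the single-graph statement of those theorems with a cross-graph comparison is a small extra precaution the paper leaves implicit, but it does not change the argument.
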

\begin{proof}[Proof sketch]
Consider $\TG$ and $\TG'$ from \Cref{fig:point}, where $(a,t_2)$ is pointwise isomorphic to $(a',t_2)$.
If we apply $\rwl{}$ to $\Kone{\TG}$ and $\Kone{\TG'}$, $\rwl^{(1)}(a,t_2) \neq \rwl^{(1)}(a',t_2)$, because $(a,t_2)$ has one incoming edge in $\Kone{\TG}$, but $(a',t_2)$ has no incoming edges in $\Kone{\TG'}$.
The same holds if we apply $\rwl$ to $\Kone{\TG'}$  and $\Ktwo{\TG'}$.
So, by \Cref{thm:Kone2} and \Cref{thm:Ktwo2}, there are global and local \MPTGNN{}s in which $\h_{a}^{(1)}(t_2) \neq \h_{a'}^{(1)}(t_2)$.
\end{proof}

\Cref{thm:piso} shows that pointwise isomorphism is unsuitable for detecting node indistinguishability in  \MPTGNN{}s.
We obtain an adequate isomorphism notion by, on the one hand, requiring additionally that all $f_i$ mentioned in the definition of  pointwise isomorphism coincide but, on the other hand, relaxing the requirement $\T(\TG)=\T(\TG')$.

\begin{definition}\label{def:iso}
Temporal graphs $\TG\!=\! (\G_1,t_1), \dots, (\G_n,t_n)$ and $\TG' = (\G_1',t_1'), \dots, (\G_m',t_m')$ are
\emph{timewise isomorphic}
if both of the following hold:
\begin{itemize}
\item $n=m$ and $t_{i+1} - t_i = t'_{i+1} - t'_i$, for  every $i \in \{1, \dots, n-1 \}$,
\item  there exists a function  $f$ which is an isomorphism
between $\G_i$ and $\G_i'$,  for every $i \in \{ 1, \dots , n \}$.
\end{itemize}
If $f(v) = u$, we say that   $(v,t_i)$ and $(u,t'_i)$ are \emph{timewise isomorphic}, for any $t_i \in \T(\TG)$.
\end{definition}

Next we show that
the timewise isomorphism is an adequate notion for timestamped nodes indistinguishability since timestamped nodes which are timewise isomorphic  cannot be distinguished by any (global or local) \MPTGNN{}.

\begin{restatable}{thm}{iso}\label{thm:iso}
If $(v,t)$ and $(u,t')$ are timewise isomorphic, then
$\h_{v}^{(\ell)}(t) = \h_{u}^{(\ell)}(t')$ in any  
\MPTGNN{} and any $\ell \in \N$.
\end{restatable}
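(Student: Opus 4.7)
The plan is to prove the claim by induction on the layer index $\ell$, generalised to the simultaneous statement that \emph{every} pair of timewise isomorphic timestamped nodes receives identical embeddings at layer $\ell$ in any given global or local \MPTGNN{}. The base case $\ell=0$ is immediate: if $(v,t_i)$ in $\TG$ and $(u,t'_i)$ in $\TG'$ are timewise isomorphic via a common snapshot-isomorphism $f$ with $f(v)=u$, then $\h^{(0)}_v(t_i) = c_i(v) = c'_i(f(v)) = \h^{(0)}_u(t'_i)$, since $f$ is an isomorphism between $\G_i$ and $\G'_i$ and is therefore colour-preserving.

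For the inductive step, I would apply Equation~\eqref{TGN2} to both $\h^{(\ell)}_v(t_i)$ (computed on $\TG$) and $\h^{(\ell)}_u(t'_i)$ (computed on $\TG'$) and check that both arguments of $\com^{(\ell)}$ match. The first argument agrees by the induction hypothesis applied to the pair $(v,t_i),(u,t'_i)$ itself. For the aggregated argument, I would exhibit a bijection $\phi\colon\NH(v,t_i)\to\NH(u,t'_i)$ defined by $\phi(w,t_j)=(f(w),t'_j)$; this is well-defined and bijective because $\NH$ in Equation~\eqref{eq:NH} depends only on per-snapshot edges, and $f$ is a graph isomorphism at every snapshot. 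Under $\phi$, the time-gap contribution matches, i.e.\ $g(t_i-t_j)=g(t'_i-t'_j)$, by a telescoping argument from the per-step condition $t_{k+1}-t_k=t'_{k+1}-t'_k$ in the definition of timewise isomorphism; the $\fone$-contribution matches by the induction hypothesis applied to $(w,t_j),(f(w),t'_j)$ in the global case or to $(w,t_i),(f(w),t'_i)$ in the local case, both of which are again timewise isomorphic pairs. Hence the two multisets passed to $\agg^{(\ell)}$ coincide, and $\com^{(\ell)}$ outputs the same value on both sides.

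The proof has no deep obstacle, but care is required in two places. First, the induction must be stated uniformly over the \emph{set} of all timewise isomorphic pairs, not a single pair, so that the hypothesis covers every neighbour substitution in the step above; since $f$ is a single fixed map, this is merely a matter of formulation. Second, the two choices of $\fone$ must be handled in the same argument, which works because in either case the substituted pair is timewise isomorphic. An alternative route, more in the spirit of \Cref{sec:WL}, would be to observe that $f$ together with the per-step time-gap condition induces knowledge-graph isomorphisms $\Kone{\TG}\cong\Kone{\TG'}$ and $\Ktwo{\TG}\cong\Ktwo{\TG'}$; standard invariance of $\rwl$ under such isomorphisms gives equal colours to corresponding nodes, and \Cref{thm:Kone1} and \Cref{thm:Ktwo1}, applied after merging $\TG$ and $\TG'$ via disjoint union to match their single-graph formulation, deliver the conclusion.
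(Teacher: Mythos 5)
Your primary argument is correct, but it takes a genuinely different route from the paper. You prove the statement by a direct induction on $\ell$ over the model equations \eqref{TGN1}--\eqref{TGN2}: the lifted map $\phi(w,t_j)=(f(w),t'_j)$ is a bijection $\NH(v,t_i)\to\NH(u,t'_i)$ because $f$ is a snapshot-wise isomorphism, the time-gap arguments of $g$ agree by telescoping the per-step condition $t_{k+1}-t_k=t'_{k+1}-t'_k$, and the $\fone$ entries agree by the (uniformly quantified) induction hypothesis in both the global and local instantiations; this is all sound, and your two ``care'' points (quantifying the induction over all timewise isomorphic pairs, and treating both choices of $\fone$) are exactly the right ones. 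The paper instead proves that the same lifted map $f'$ is an isomorphism of knowledge graphs $\Kone{\TG}\cong\Kone{\TG'}$ and $\Ktwo{\TG}\cong\Ktwo{\TG'}$ (verifying colour preservation and the edge/label conditions, where the label equality $t_k-t_j=t'_k-t'_j$ plays the role of your telescoping step), then invokes invariance of $\rwl$ under knowledge-graph isomorphism and finishes via \Cref{thm:Kone1} and \Cref{thm:Ktwo1} --- i.e.\ precisely the ``alternative route'' you sketch in your last sentence. Your direct induction is more elementary and self-contained (it needs neither the $\rwl$ machinery nor the single-graph formulation of \Cref{thm:Kone1}/\Cref{thm:Ktwo1}, so the disjoint-union patch you mention becomes unnecessary), while the paper's version buys uniformity with the rest of its toolkit: the isomorphism $f'$ it constructs is reused conceptually throughout the expressiveness results, and the theorem then follows for free from the already-established WL characterisation.
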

\begin{proof}[Proof sketch]
Assume that  $(v,t)$ from $\TG$ and  $(u,t')$ from $\TG'$ are  timewise isomorphic.
Hence, by \Cref{def:iso}, $\TG$ and $\TG'$ are of the forms $\TG = (\G_1,t_1), \dots, (\G_n,t_n)$ and $\TG' = (\G'_1,t'_1), \dots, (\G'_n,t'_n)$, as well as  $t=t_i$ and $t'=t'_i$ for some $i \in \{ 1, \dots, n\}$.
Moreover, $f(v) = u$ for some ${f:V(\TG) \to V(\TG')}$ satisfying requirements in \Cref{def:iso}.
We define $f': \TV(\TG) \to \TV(\TG')$ such that $f'(w,t_j)=(f(w),t'_j)$  for  all 
$(w,t_j) \in \TV(\TG)$.
We can show that $f'$ is an isomorphism between knowledge graphs $\Kone{\TG}$ and $\Kone{\TG'}$,
as well as between $\Ktwo{\TG}$ and $\Ktwo{\TG'}$.
Hence, in both cases $f'(v,t) = (u,t')$ implies
$\rwl^{(\ell)}(v,t)=\rwl^{(\ell)}(u,t')$, for all $\ell \in \N$.
Thus, by \Cref{thm:Kone1} and \Cref{thm:Ktwo1}, we obtain that $\h_{v}^{(\ell)}(t) = \h_{u}^{(\ell)}(t')$ for any global and local \MPTGNN{}s.
\end{proof}

\section{Relative Expressiveness of Temporal Message Passing Mechanisms}

In this section we will use temporal Weisfeiler-Leman characterisation  to prove expressive power results summarised in \Cref{fig:express}.
Our results are on the \emph{discriminative} (also called  separating)  power, which aims to determine if a given type of models is able to distinguish two timestamped nodes. 
Formally, we say that a model \emph{distinguishes} a timestamped node  $(v,t)$ in a temporal graph $\TG$ from  $(u,t')$ in $\TG'$ if this model computes different embedding for $(v,t)$ and $(u,t')$ at some layer $\ell$, that is,
 $\h_{v}^{(\ell)}(t) \neq \h_{u}^{(\ell)}(t')$.
We say that a type of models (e.g. global or local \MPTGNN{}s) \emph{can distinguish} $(v,t)$ from 
$(u,t')$, 
if some model of this type  distinguishes $(v,t)$ from $(u,t')$.

We start by showing that global \MPTGNN{}s can distinguish timestamped nodes which are indistinguishable by local \MPTGNN{}s. 
The reason is that in a global \MPTGNN{} an embedding of $(v,t)$  can depend on embeddings 
at $t'<t$, but this cannot happen in a local \MPTGNN{}.

\begin{restatable}{thm}{globmoreloc}\label{thm:globmoreloc}
There are timestamped nodes  that can be distinguished by global, but not by local \MPTGNN{}s.
\end{restatable}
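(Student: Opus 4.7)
The plan is to exhibit two temporal graphs $\TG$ and $\TG'$ with a pair of timestamped nodes $(a,t_2)\in\TV(\TG)$ and $(a',t_2)\in\TV(\TG')$ such that $\rwl$ separates them over $\Kone{\TG\sqcup\TG'}$ but not over $\Ktwo{\TG\sqcup\TG'}$. Combined with \Cref{thm:Kone2} this yields a global \MPTGNN{} that distinguishes the two nodes, while \Cref{thm:Ktwo1} rules out any local \MPTGNN{} from doing so. The guiding intuition is that the edges of the global construction $\Kone{\TG}$ transmit initial colours of past neighbours across time, whereas the edges of $\Ktwo{\TG}$ only ever connect nodes within a single time slice; so I will hide the discriminating information inside the $t_1$ colouring while keeping the $t_2$ colouring identical in the two graphs.

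For the construction, I take $\TG$ with two snapshots: at $t_1$, nodes $a,b$ joined by an edge with $a$ red and $b$ blue; at $t_2>t_1$, the same nodes with no edge, both coloured green. I define $\TG'$ identically except that the colours at $t_1$ are swapped ($a'$ blue, $b'$ red), and consider the pair $(a,t_2)$ and $(a',t_2)$. For the global direction, I verify that in $\Kone{\TG}$ the only edge into $(a,t_2)$ is a relation-$(t_2-t_1)$ edge originating at $(b,t_1)$, whose initial colour is blue; in $\Kone{\TG'}$, the analogous edge into $(a',t_2)$ originates at $(b',t_1)$ of initial colour red. A single $\rwl$ iteration with the injective $\tau$ then separates the two nodes, and \Cref{thm:Kone2} applied to $\TG\sqcup\TG'$ (which adds no cross-component edges) delivers the desired global \MPTGNN{}.

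For the local direction, I use the observation that every edge of $\Ktwo{\TG}$ connects two nodes stamped with the same time, so each time slice forms its own connected component of the knowledge graph and $\rwl$ values at $(a,t_2)$ depend only on the $t_2$-slice. That slice contains just $(a,t_2)$ and $(b,t_2)$, both coloured green, joined by a single edge with label $t_2-t_1$ induced by $\{a,b\}\in E_1$ taken with $i=1,j=2$; the $t_2$-slice of $\Ktwo{\TG'}$ has the same structure, and the bijection $(a,t_2)\mapsto(a',t_2)$, $(b,t_2)\mapsto(b',t_2)$ is a knowledge-graph isomorphism. Hence $\rwl^{(\ell)}(a,t_2)=\rwl^{(\ell)}(a',t_2)$ for every $\ell$, and \Cref{thm:Ktwo1} closes the argument.

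I expect the main obstacle to be less the two $\rwl$ verifications themselves and more the design of a clean example: the construction has to encode the discriminating signal only at $t_1$, in a form that survives cross-time aggregation in $\Kone{\TG}$ but is erased once the edges of $\Ktwo{\TG}$ are relocalised to the $t_2$-slice and meet the uniform green colouring there. Once the minimal two-node, two-snapshot instance above is in hand, both computations reduce to a direct unfolding of the definitions of $\Kone{\cdot}$, $\Ktwo{\cdot}$, and $\rwl$.
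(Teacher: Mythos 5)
Your proposal is correct and follows essentially the same strategy as the paper's proof: exhibit two temporal graphs whose current time slice is identical (so that, by \Cref{thm:Ktwo1}, $\rwl$ on $\Ktwo{\cdot}$ and hence every local \MPTGNN{} fails to separate the nodes) but whose past colours differ (so that one $\rwl$ iteration on $\Kone{\cdot}$ separates them and \Cref{thm:Kone2} yields a distinguishing global \MPTGNN{}). The only difference is cosmetic: you use a minimal two-node, two-snapshot pair with swapped $t_1$ colours, whereas the paper reuses its running example with four snapshots, and you make the disjoint-union step explicit where the paper leaves it implicit.
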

\begin{proof}[Proof sketch]
Consider $(b,t_4)$ from $\TG$ in
\Cref{fig:snapshot} and $(b,t_4)$ from $\TG'$ in \Cref{fig:aggregated}~(\subref{fig:snap}).
We can show that, for any $\ell \in \N$, application of $\rwl^{(\ell)}$ to $\Ktwo{\TG}$ and $\Ktwo{\TG'}$ assigns the same labels to these timestamped nodes.  
Hence, by \Cref{thm:Ktwo1}, local \MPTGNN{}s cannot distinguish these nodes.
On the other hand, for any $\ell \geq 1$,  application of $\rwl^{(\ell)}$ to $\Kone{\TG}$ and $\Kone{\TG'}$ assigns different labels to these nodes.
Therefore, by \Cref{thm:Kone2}, global \MPTGNN{}s can distinguish these nodes.
\end{proof}

Based on the observation from \Cref{thm:globmoreloc}, one could expect that global \MPTGNN{}s are strictly more expressive than local \MPTGNN{}.
Surprisingly, this is not the case.
Indeed, as we show next, there are timestamped nodes which
can be distinguished by local, but not by global \MPTGNN{}s.

\begin{restatable}{thm}{locmoreglob}\label{thm:locmoreglob}
There are 
timestamped nodes that can be distinguished by local, but not by global \MPTGNN{}s.
This holds true even for colour-persistent temporal graphs.
\end{restatable}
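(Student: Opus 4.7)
The plan is to exploit the WL characterisations of the previous section: by \Cref{thm:Kone1} and \Cref{thm:Ktwo2}, it suffices to exhibit a single colour-persistent $\TG$ and two timestamped nodes $(v,t)$, $(u,t')$ such that $\rwl^{(\ell)}$ separates them in $\Ktwo{\TG}$ at some layer $\ell$ but assigns them the same label in $\Kone{\TG}$ at every layer. I would take $\TG$ over five uniformly coloured nodes $a,b,c,d,e$ at two times $t_1<t_2$, with $E_1=\{\{a,b\},\{c,d\}\}$ and $E_2=\{\{b,e\}\}$, and distinguish $(a,t_2)$ from $(c,t_2)$. The intuition is that $\Ktwo$ collapses $b$'s entire edge history onto $(b,t_2)$, so the extra edge $\{b,e\}$ breaks the symmetry between $(b,t_2)$ and $(d,t_2)$ and propagates to $(a,t_2)$ versus $(c,t_2)$ one hop further; in $\Kone$, by contrast, the edge $\{a,b\}\in E_1$ is recorded as an incoming edge from $(b,t_1)$ into $(a,t_2)$, and $(b,t_1)$ has no way to ``see'' the edge $\{b,e\}$ that occurs only at $t_2$.

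For the $\Ktwo$ side, the first step is to check that $(b,t_2)$ has two incoming neighbours in $\Ktwo{\TG}$---$(a,t_2)$ with label $1$ and $(e,t_2)$ with label $0$---while $(d,t_2)$ has only $(c,t_2)$ with label $1$. By injectivity of $\tau$, this gives $\rwl^{(1)}(b,t_2)\neq\rwl^{(1)}(d,t_2)$. Since $(a,t_2)$ and $(c,t_2)$ each have exactly one incoming neighbour in $\Ktwo{\TG}$, namely $(b,t_2)$ and $(d,t_2)$ respectively (both with label $1$), the separation propagates one layer further: $\rwl^{(2)}(a,t_2)\neq\rwl^{(2)}(c,t_2)$. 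Applying \Cref{thm:Ktwo2} then produces a local \MPTGNN{} that distinguishes the two nodes.

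For the $\Kone$ side, the plan is to argue that the \rwl{} unfolding trees rooted at $(a,t_2)$ and $(c,t_2)$ coincide as edge-labelled rooted trees, which forces $\rwl^{(\ell)}(a,t_2)=\rwl^{(\ell)}(c,t_2)$ for every $\ell$. Unpacking the definition, $(a,t_2)$'s unique incoming neighbour is $(b,t_1)$ with label $1$, and iterating one finds that $(b,t_1)$'s unique incoming neighbour is $(a,t_1)$ with label $0$ and $(a,t_1)$'s is $(b,t_1)$ with label $0$; the unfolding is therefore an infinite path with edge labels $1,0,0,0,\dots$. The tree rooted at $(c,t_2)$ has exactly the same labelled shape with $c$ and $d$ in place of $a$ and $b$, and since all nodes of $\TG$ share one initial colour the two unfoldings are indistinguishable. \Cref{thm:Kone1} then forbids any global \MPTGNN{} from separating $(a,t_2)$ and $(c,t_2)$. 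The main step to verify---and the whole reason the example works---is that no $\Kone{\TG}$-edge arising from $\{b,e\}\in E_2$ enters the backward closure of $(a,t_2)$ or $(c,t_2)$; this is immediate from the forward-in-time directionality of $\Kone{\TG}$'s edges, and is precisely what separates the $\Kone$ and $\Ktwo$ constructions.
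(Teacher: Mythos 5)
Your proof is correct and follows essentially the same route as the paper: exhibit a colour-persistent example, show $\rwl$ on $\Ktwo{\TG}$ separates the two timestamped nodes after two iterations while the backward unfoldings in $\Kone{\TG}$ are isomorphic, and invoke \Cref{thm:Ktwo2} and \Cref{thm:Kone1}. The only cosmetic difference is that your five-node graph is in effect the disjoint union of the paper's two three-node graphs $\TG$ and $\TG'$ from \Cref{fig:Ttwomore}, so the two nodes being compared live in a single temporal graph rather than in two separate ones.
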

\begin{proof}[Proof sketch]
Consider $(a,t_2)$ from $\TG$ and $(a',t_2)$ from $\TG'$  in \Cref{fig:Ttwomore}.
Observe that $(a,t_2)$ in $\Kone{\TG}$ is  isomorphic to $(a',t_2)$ in $\Kone{\TG'}$ so, by \Cref{thm:Kone1}, $(a,t_2)$ and $(a',t_2)$ cannot be distinguished by global \MPTGNN{}s.
However, $(a,t_2)$  has one outgoing path of length 2 in $\Ktwo{\TG}$, but not in  $\Ktwo{\TG'}$.
Hence, two iterations of $\rwl$  distinguish these nodes.
Thus, by \Cref{thm:Ktwo2}, $(a,t_2)$ and $(a',t_2)$ can be distinguished by local \MPTGNN{}s.
Note that $\TG$ and $\TG'$ are colour-persistent.
\end{proof}
\begin{figure}[ht]
\centering
\begin{tikzpicture}[
dot/.style = {draw, circle, minimum size=#1,
              inner sep=0pt, outer sep=0pt},
dot/.default = 6pt
]
\scriptsize
\pgfmathsetmacro{\tline}{-0.25}
\pgfmathsetmacro{\w}{1.3}
\pgfmathsetmacro{\h}{1.5}
\pgfmathsetmacro{\inh}{0.7}
\pgfmathsetmacro{\dist}{1.7}
\pgfmathsetmacro{\Ax}{0.5}
\pgfmathsetmacro{\Ay}{1.4}
\pgfmathsetmacro{\Bx}{0.4}
\pgfmathsetmacro{\By}{0.5}
\pgfmathsetmacro{\Cx}{1}
\pgfmathsetmacro{\Cy}{1}

\draw[->] (1.8,\tline) -- (4.8,\tline);

\foreach \x in {1,...,2}
{
\draw[fill=gray!90!black,opacity=0.2] (\x*\dist,0) -- (\x*\dist+\w,\inh) -- (\x*\dist+\w,\inh+\h) -- (\x*\dist,\h) -- cycle;

\node at (\x*\dist + 0.87 * \w, \inh+ \h - 0.3) {$G_{\x}$};
\node at (\x*\dist + 0.6 * \w, 0) {$t_{\x}$};
\draw[-] (\x*\dist + 0.6 * \w, \tline+0.05) -- (\x*\dist + 0.6 * \w, \tline-0.05);

\node (A\x) at (\x*\dist+\Ax, 0+\Ay) {};
\node (B\x) at (\x*\dist+\Bx, 0+\By) {};
\node (C\x) at (\x*\dist+\Cx, 0+\Cy) {};
}

\draw[thick] (A1) -- (B1);
\draw[thick] (B2) -- (C2);

\node[dot=11pt,draw=black,fill=mygreen] at (A1) {$a$};
\node[dot=11pt,draw=black,fill=mygreen] at (B1) {$b$};
\node[dot=11pt,draw=black,fill=mygreen] at (C1) {$c$};
\node[dot=11pt,draw=black,fill=mygreen] at (A2) {$a$};
\node[dot=11pt,draw=black,fill=mygreen] at (B2) {$b$};
\node[dot=11pt,draw=black,fill=mygreen] at (C2) {$c$};
\node at (3.6,2.2) {$\TG$};
\end{tikzpicture}
\qquad
\begin{tikzpicture}[
dot/.style = {draw, circle, minimum size=#1,
              inner sep=0pt, outer sep=0pt},
dot/.default = 6pt
]
\scriptsize
\pgfmathsetmacro{\tline}{-0.25}
\pgfmathsetmacro{\w}{1.3}
\pgfmathsetmacro{\h}{1.5}
\pgfmathsetmacro{\inh}{0.7}
\pgfmathsetmacro{\dist}{1.7}
\pgfmathsetmacro{\Ax}{0.5}
\pgfmathsetmacro{\Ay}{1.4}
\pgfmathsetmacro{\Bx}{0.4}
\pgfmathsetmacro{\By}{0.5}
\pgfmathsetmacro{\Cx}{1}
\pgfmathsetmacro{\Cy}{1}

\draw[->] (1.8,\tline) -- (4.8,\tline);

\foreach \x in {1,...,2}
{
\draw[fill=gray!90!black,opacity=0.2] (\x*\dist,0) -- (\x*\dist+\w,\inh) -- (\x*\dist+\w,\inh+\h) -- (\x*\dist,\h) -- cycle;

\node at (\x*\dist + 0.87 * \w, \inh+ \h - 0.3) {$G'_{\x}$};
\node at (\x*\dist + 0.6 * \w, 0) {$t_{\x}$};
\draw[-] (\x*\dist + 0.6 * \w, \tline+0.05) -- (\x*\dist + 0.6 * \w, \tline-0.05);

\node (A\x) at (\x*\dist+\Ax, 0+\Ay) {};
\node (B\x) at (\x*\dist+\Bx, 0+\By) {};
\node (C\x) at (\x*\dist+\Cx, 0+\Cy) {};
}

\draw[thick] (A1) -- (B1);

\node[dot=11pt,draw=black,fill=mygreen] at (A1) {$a'$};
\node[dot=11pt,draw=black,fill=mygreen] at (B1) {$b'$};
\node[dot=11pt,draw=black,fill=mygreen] at (C1) {$c'$};
\node[dot=11pt,draw=black,fill=mygreen] at (A2) {$a'$};
\node[dot=11pt,draw=black,fill=mygreen] at (B2) {$b'$};
\node[dot=11pt,draw=black,fill=mygreen] at (C2) {$c'$};
\node at (3.6,2.2) {$\TG'$};
\end{tikzpicture}
\caption{$(a,t_2)$ and $(a',t_2)$ which cannot be distinguished by global, but can be distinguished by local \MPTGNN{}s  
}
\label{fig:Ttwomore}
\end{figure}

\Cref{thm:globmoreloc} and \Cref{thm:locmoreglob} show us that neither global or local \MPTGNN{}s are strictly more expressive, when compared over all temporal graphs.
Does the same result hold over colour-persistent  graphs?
Interestingly, it is not the case: in colour-persistent graphs
local \MPTGNN{}s are strictly more expressive than global \MPTGNN{}s.
Hence \Cref{thm:globmoreloc} cannot hold for colour-persistent graphs.




\begin{restatable}{thm}{more}\label{thm:more}
In colour-persistent graphs local \MPTGNN{}s are strictly more expressive than global \MPTGNN{}s. 
\end{restatable}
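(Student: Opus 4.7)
The statement has two parts: (i)~on colour-persistent graphs, local \MPTGNN{}s are at least as expressive as global \MPTGNN{}s, and (ii)~the inclusion is strict. Part~(ii) is already supplied by \Cref{thm:locmoreglob}, whose witness graphs are colour-persistent, so the main work is~(i). By \Cref{thm:Kone1} (any global \MPTGNN{} is no more expressive than \rwl{} on $\Kone{\TG}$) and \Cref{thm:Ktwo2} (some local \MPTGNN{} matches \rwl{} on $\Ktwo{\TG}$), the task reduces to the following Weisfeiler--Leman refinement claim: for colour-persistent $\TG, \TG'$ and timestamped nodes $(v,t)\in\TV(\TG)$, $(u,t')\in\TV(\TG')$, if $\rwl^{(\ell)}(v,t)=\rwl^{(\ell)}(u,t')$ in $\Ktwo{\TG}, \Ktwo{\TG'}$, then the same equality holds in $\Kone{\TG}, \Kone{\TG'}$.

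I would prove this via a stronger inductive statement parameterised by a non-negative time-shift $r$: whenever $t - r \in \T(\TG)$ and $t' - r \in \T(\TG')$, equality of \rwl{}-colours at $(v, t)$ and $(u, t')$ in the \Ktwo{} graphs implies equality of \rwl{}-colours at $(v, t-r)$ and $(u, t'-r)$ in the \Kone{} graphs; the target claim is the case $r = 0$. The base case $\ell = 0$ is immediate from colour-persistence, since both \rwl{}-colours reduce to the time-independent initial colour. For the inductive step, injectivity of $\tau$ decomposes the equal $\rwl^{(\ell)}$-values in \Ktwo{} into (a)~equal $\rwl^{(\ell-1)}$-values at $(v, t)$ and $(u, t')$, and (b)~equal multisets of neighbour-label pairs $(\rwl^{(\ell-1)}(w, t), s)$, with one pair per past edge $\{w, v\}$ occurring at time $t - s$ (and analogously on the $\TG'$ side). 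Applying the inductive hypothesis to~(a) with shift $r$ yields the required self-colour equality in \Kone{}. For~(b), I would match the two \Ktwo{}-multisets pair-by-pair and restrict to pairs whose label $s$ satisfies $s \geq r$; these correspond exactly to edges occurring at times at most $t - r$ (respectively $t' - r$), which are precisely the edges contributing to the \Kone{}-neighbourhoods of $(v, t-r)$ and $(u, t'-r)$. Invoking the inductive hypothesis on each matched pair with shift $s$ converts the present-time \Ktwo{}-colour of the neighbour into the past-time \Kone{}-colour required, and the adjusted label $s - r$ matches the one appearing in \Kone{}.

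The main obstacle is the bookkeeping in the inductive step: one must verify that the multiset restriction to labels $s \geq r$ together with the label adjustment $s \mapsto s - r$ takes matching \Ktwo{}-neighbourhoods to matching \Kone{}-neighbourhoods, including correct accounting of multiplicities when the same edge recurs at several times. Colour-persistence is essential throughout: without it the base case already breaks, because the initial colour of a past-time node need not be recoverable from the present-time \Ktwo{}-colour when node colours vary over time. Under colour-persistence all temporally discriminating information resides in the edge labels of \Ktwo{}, and the same-time structure of \Ktwo{} at time $t$ faithfully encodes every past edge together with its time of occurrence, enabling a layer-by-layer ``decoding'' of the backward-in-time structure captured by \Kone{}.
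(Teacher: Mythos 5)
Your argument is correct, and it sets up the same reduction as the paper (strictness comes from \Cref{thm:locmoreglob}, whose witness graphs are colour-persistent; the containment reduces via \Cref{thm:Kone1} and \Cref{thm:Ktwo2} to a Weisfeiler--Leman refinement claim), but you organise that refinement claim quite differently. The paper proves the contrapositive --- $\rwlg^{(\ell)}(v,t)\neq\rwlg^{(\ell)}(u,t')$ implies $\rwll^{(\ell)}(v,t)\neq\rwll^{(\ell)}(u,t')$ --- by a single induction on $\ell$, and handles the time-shift difficulty with a separate auxiliary lemma (\Cref{lem:plusk}, itself proved by induction) stating that inequality of local $\rwl$-colours is preserved under forward time shifts; its inductive step then runs two bijection-and-counting arguments by contradiction (Steps 1 and 2 in the appendix). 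You instead fold the shift into the induction hypothesis, proving the single stronger statement that equality of $\rwl$-colours in $\Ktwo{\TG}$ at $(v,t)$ and $(u,t')$ forces equality in $\Kone{\TG}$ at $(v,t-r)$ and $(u,t'-r)$ for every admissible $r\geq 0$, and instantiating $r=0$. This works: the $\Kone{\TG}$-neighbourhood of $(v,t-r)$ is exactly the label-$s\geq r$ restriction of the $\Ktwo{\TG}$-neighbourhood of $(v,t)$ after relabelling $s\mapsto s-r$ (multiplicities are carried by the labels, since each past occurrence of an edge contributes a distinct relation), and the recursive applications of the hypothesis with shift $s$ land at genuine snapshot times because $t-s$ and $t'-s$ are edge-occurrence times; colour-persistence enters in precisely the same place as in the paper, namely the level-$0$ base case. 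What each approach buys: the paper isolates shift-invariance of local colours as a standalone, reusable lemma, at the cost of two nested inductions and proof-by-contradiction counting; your version is a single direct induction with cleaner multiset bookkeeping, at the cost of a less modular, shift-parameterised statement. I see no gap in your sketch.
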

\begin{proof}[Proof sketch]
Due to the result established in \Cref{thm:locmoreglob}, it remains to show that 
over colour-persistent temporal graphs, if $(v,t)$ and $(u,t')$  can be distinguished by global \MPTGNN{}s, then they can be distinguished also by local \MPTGNN{}s.
Hence, by \Cref{thm:Kone1} and \Cref{thm:Ktwo2},
we need to show the 
$\rwlg^{(\ell)}(v,t) \neq \rwlg^{(\ell)}(u,t')$   implies $\rwll^{(\ell)}(v,t) \neq \rwll^{(\ell)}(u,t')$ for all $\ell \in \N$.
We show this implication inductively on $\ell$, where the inductive step requires proving several non-trivial statements, for example, showing (by another induction) that $\rwl^{(\ell)}(v,t) \neq \rwl^{(\ell)}(u,t')$ implies  $\rwl^{(\ell)}(v,t+k) \neq \rwl^{(\ell)}(u,t'+k)$, for any $k$.
\end{proof}




To finish the expressive power landscape announced in \Cref{fig:express}, it remains to make two more observations.
On the one hand, temporal graphs which are not colour-persistent allow global \MPTGNN{}s to distinguish more elements than colour-persistent graphs. 
Indeed, this is the case since global \MPTGNN{}s allow us to pass information about colours between nodes stamped with different time points.
On the other hand, this is not allowed in local \MPTGNN{}s, and so colour-persistence does not impact their expressiveness.





\section{Experiments}









We implement and train basic variants of global and local models
on standard temporal link-prediction benchmarks.
We emphasise that the goal of our experiments is not to achieve models with high-level performance, but 
to examine how our expressive power results impact practical performance of \MPTGNN{}s.


\paragraph{Benchmarks.}
We use the Temporal Graph Benchmark (TGB) 2.0 suite~\cite{gastinger2024tgb} with small-to-medium temporal datasets \texttt{tgbl-wiki}, \texttt{tgbl-review}, and \texttt{tgbl-coin}, whose statistics are in \Cref{tab:results}.
They do not have node features and we discard the edge features.
We consider a link-prediction task, where the goal is to predict whether there is a link between two given nodes at the next time point, given information about all previous links.
We follow normative training and evaluation procedures supplied by TGB.

\paragraph{Models.}
We implement global and local \MPTGNN{}s with
combination and aggregation functions being concatenation ($\con$) and summation ($\sum$), respectively, which are among  standard choices \cite{DBLP:conf/aaai/RossiA15,DBLP:conf/iclr/XuHLJ19}.
Hence, embedding $\h_v^{(\ell)}(t)$ is computed as
$$
W_2^{(\ell)}
[
\h_v^{(\ell-1)}(t) \con
\sigma
(
W_1^{(\ell)}
(
\sum_{\substack{(u, t') \in \NH(v,t)}}
\fone \con g(t-t')
) 
)
],
$$
where $W_1$ and $W_2$ are learnable,  $\sigma$ is the rectified linear unit,  $\fone$
is either $\mathbf{h}_u^{(\ell-1)}(t')$ (giving rise to a global model) or $\mathbf{h}_u^{(\ell-1)}(t)$ (local model), and $g(t - t') = t - t'$.
After $\ell = 4$  layers, a multi-layer perceptron with 1024 hidden units predicts a link between $u$ and $v$, given $\h_u^{(\ell)}(t)$ and $\h_v^{(\ell)}(t)$.
Aggregating the entire temporal neighbourhood incurs over time a linear computational penalty, precluding larger benchmarks; real-world models approximate this calculation.

\paragraph{Implementation.}
Our implementation\footnote{Implementation is included in supplementary materials.} is based on PyTorch \cite{DBLP:conf/nips/PaszkeGMLBCKLGA19},
and in particular its hardware-accelerated scatter operations for temporal aggregation. The use of scatter operations means that results may differ between hardware-accelerated runs.

As stated, the global model would require enormous compute and memory in order to use node embeddings from all previous time points.
In order to make this tractable, we apply a train-time approximation: during an epoch, embeddings from previous timepoints are ``frozen'': detached from the computation graph and not updated as model weights change.
This interferes with training as the model must use a mixture of stale and fresh embeddings, but at test time the result is exact.
A further observation is that only those embeddings whose nodes are connected at some time need be computed and retained due to the definition of $\NH$.

Minibatching can be achieved in the temporal context by predicting the next $k$ links given all previous links.
Unlike traditional minibatching, this can have a detrimental impact on model accuracy, because earlier links in the batch may help to predict links later in the batch.
However, it is computationally very demanding to set $k = 1$ for even ``small'' datasets like \texttt{tgbl-review} containing millions of links, so a compromise must be found. We set $k = 32$ for \texttt{tgbl-wiki} and $k = 1024$ for all others.

\paragraph{Training}
We used Adam~\cite{DBLP:journals/corr/KingmaB14} for optimisation with PyTorch defaults $\gamma = 0.001$, $\beta_1 = 0.9$, $\beta_2 = 0.999$, and no L2 penalty.
We were able to significantly stabilise and accelerate training by normalising $g(t - t')$ with respect to elapsed time and by applying batch normalisation~\cite{DBLP:conf/icml/IoffeS15} immediately after summation of temporal neighbours.
With the exception of the above stability measures, we have not tuned further as we are not aiming for state-of-the-art performance.
Training continued until validation loss failed to improve for 10 epochs.
Experiments involving \texttt{tgbl-wiki} and \texttt{tgbl-review} can be run on   desktop hardware (NVIDIA GT730), or even without acceleration, whereas \texttt{tgbl-coin} requires  a large GPU.

\begin{table}
    \caption{Statistics (nodes and edges) and MRR scores}
    \label{tab:results}
    \centering
    \setlength{\tabcolsep}{3pt}
    \begin{tabular}{l || r | r | r |}
                & \texttt{tgbl-wiki} & \texttt{tgbl-review} & \texttt{tgbl-coin} \\
      \hline \hline 
        nodes & 9,227  & 352,637 & 638,486 \\
        edges & 157,474 & 4,873,540 & 22,809,486 \\                
      \hline 
        global & 0.223 & 0.321 & 0.628 \\
        local & \textbf{0.264} & \textbf{0.359} & \textbf{0.635} \\
        \hline
    \end{tabular}
\end{table}

\paragraph{Results.}
\Cref{tab:results} shows the
mean reciprocal rank (MRR) score (higher is better) used in 
TGB. 
We observe that the scores are relatively high given the simplicity of models and lack of tuning.
We have written in bold higher among MRRs obtained by global and local \MPTGNN{}s.
In all three datasets, 
local \MPTGNN{} obtains higher scores, but the difference between the scores of local and global models is relatively small.
We observe that 
higher performance of the local model aligns with our theoretical result from \Cref{thm:more}, which states that over colour-persistent temporal graphs (as here), local \MPTGNN{}s are stricly more expressive than global \MPTGNN{}s.

\paragraph{Layers.}
We also investigated the effect of the increasing number of layers $\ell$ on MRR.
We performed experiments on
\texttt{tgbl-wiki} with the number of layers increasing from 1 to 8.
As presented in \Cref{fig:layer-results},
the highest MRR for the global model is obtained when $\ell=5$ and for the local model when $\ell=7$.
Interestingly, for $\ell=5$ (which is optimal for the global model), MRR for both models is almost the same.
This, again, aligns with our theoretical results, showing that local \MPTGNN{}s are more expressive than global.

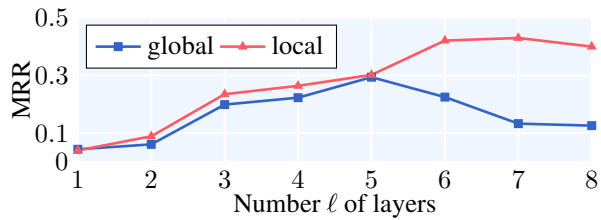
\begin{figure}[ht]

\begin{tikzpicture}
\begin{axis}[
xshift=7cm,
yshift=0cm,
width =\linewidth,
height = 3.5cm,
 xmin=1, xmax=8,
 xtick={1, 2, 3, 4, 5, 6, 7, 8},
 ymin=0, ymax=0.5,
 ytick={0.0, 0.1, 0.3, 0.5},
axis line style={white},
tick style={white},
axis background/.style={fill=Blue4},
grid = both,
major grid style={white,thick},
minor grid style={draw=none},
ylabel={MRR},
ylabel style={yshift=-0.5em},
xlabel={Number $\ell$ of layers},
xlabel style={yshift=0.5em},
legend style={at={(0.26,0.8)}, anchor=center,align=center},
legend columns=3,
]

\addplot[Blue5, mark=square*, line width=1pt, mark size=1.2pt] table {layersT1.txt};
\addplot[Red1,mark=triangle*, line width=1pt, mark size=1.2pt] table {layersT2.txt};

\legend{global, local};
\end{axis}

\end{tikzpicture}	
\caption{MRR against number of layers on \texttt{tgbl-wiki}}
\label{fig:layer-results}
\end{figure}

\paragraph{Variations.}
The lack of node features in benchmarks led us to try both random node features, which did not significantly alter results, and (transductively) learnable node features, which caused drastic overfitting.

\section{Conclusions}
We have categorised temporal message-passing graph neural networks into global and local, depending on their temporal message passing mechanism.
One might expect that  global models have higher expressive power than local, but  surprisingly we find that the two are incomparable. Further, if node colours (feature vectors) do not change over  time,  local models are strictly more powerful than global.
Our experimental results align with the theoretical findings, showing that local models obtain higher performance on temporal link-prediction tasks.

\bibliography{TGNNbiblio}

\onecolumn

\section{Appendix}

\section{Types of Message-Passing}

We will introduce several \MPTGNN{} models and provide their formalisations in a common format. 
This will allow us to observe similarities and differences between these approaches, and study their expressive power in later sections.

\paragraph{MP-TGN.}
A temporal message-passing component of TGN \cite{DBLP:journals/corr/abs-2006-10637}, also known as Temporal Graph Sum. 
Although introduced to compute embeddings $\h_v^{(\ell)}(t)$ for event-based representation of temporal graphs (with edges labelled by feature vectors) 
it also can be used for snapshot temporal graphs, as described by the equations:
\begin{align*}
\h_v^{(0)}(t) &= \x_v(t),
\\
\tilde{\h}_v^{(\ell)}(t) & = \relu \Big( \sum_{(u,t') \in \NH(v,t) } 
 \W_1^{(\ell)}
( \mathbf{h}_u^{(\ell-1)}(t) \con g(t-t')) \Big), 
\\
\h_v^{(\ell)}(t) & = \W_2^{(\ell)} \Big( \h_v^{(\ell-1)}(t)\con \tilde{\h}_v^{(\ell)}(t) \Big),
\end{align*}
where  $\ell \in \{ 1, \dots L \}$, 
\con{} is concatenation,
$g$ is a function mapping time distances into vectors (e.g., based on Time2Vec \cite{DBLP:journals/corr/abs-1907-05321}),
$\relu$ is the rectified linear unit,  
and $\W_1^{(\ell)}$ and $\W_2^{(\ell)}$ are learnable matrices.
We can generalise this form, by allowing for arbitrary combination $\com$ and aggregation $\agg$ functions, which leads to the following form \cite{DBLP:conf/nips/SouzaMKG22}:
\begin{align*}
\h_v^{(0)}(t) &= \x_v(t), 
\\
\h_v^{(\ell)}(t) &= \com^{(\ell)} \Big(
\h_v^{(\ell-1)}(t),
\agg^{(\ell)}(
 \ldblbrace
( \mathbf{h}_u^{(\ell-1)}(t), g(t-t') ) \mid
 (u,t') \in \NH(v,t) 
\rdblbrace
)
\Big), 
\end{align*}
where $\ldblbrace \cdot \rdblbrace$ stands for a multiset (generalisation of a set, where the same element can occur multiple time).

\paragraph{TE.}
Another temporal message-passing mechanism is used in Temporal Embedding (TE)  model \cite{longa2023graph}, which generalise such models as TGAT \cite{DBLP:conf/iclr/XuRKKA20} and NAT \cite{DBLP:conf/log/LuoL22}.
Embeddings are computed in TE  as follows:
\begin{align*}
\h_v^{(0)}(t) &= \x_v(t), 
\\
\h_v^{(\ell)}(t) &= \com^{(\ell)} \Big(
\h_v^{(\ell-1)}(t),
\agg^{(\ell)}(
\ldblbrace
( \h_u^{(\ell-1)}(t'), g(t-t') ) \mid
 (u,t') \in \NH(v,t) 
\rdblbrace
)
\Big).
\end{align*}
The difference in computing  $\h_v^{(\ell)}(t)$ by MP-TGN and TE is that MP-TGN
aggregates embeddings at $t$ (expressions $\h_u^{(\ell-1)}(t)$),
whereas TE aggregates embeddings at $t'$ (expressions $\h_u^{(\ell-1)}(t')$).
This subtle difference, although can appear to be insignificant, leads to very different behaviour of MP-TGN and TE, as we will show in the later parts of the paper.

\pwline{ Can you Michael check the papers about TGAT and NAT if the above formulation of embeddings' computation indeed generalises the approaches used in TGAT and NAT?}
\mrline{Looks OK to me. There is a distinction between ``$t'$ is the time of a connection event between $u$ and $v$''~\cite{DBLP:journals/datamine/LongaCLP22} and $(u, t') \in \NH_{\TG}(v, t)$ --- but I don't think it makes a huge difference as you can rectify it easily in either direction.}

\paragraph{TDGNN.}
Yet another approach for temporal message-passing is proposed in TDGNN model  \cite{DBLP:conf/www/QuZDS20}.
It takes as an input a temporal graph in the aggregated form, so each vertex $v$ has a single initial feature vector $\x_v$.
This is in contrast to MP-TGN and TE, where  $v$ can be assigned a different feature vector $\x_v(t)$ in every time point $t$.
Then, TDGNN computes embeddings as follows:
\begin{align*}
\h_v^{(0)}(t) &= \x_v,
\\
\h_v^{(\ell)}(t) & = \relu \Big( \sum_{(u,t') \in \NH^+(v,t) }  \alpha_{u,v}^{t'} \W^{(\ell)} \h_u^{(\ell-1)}(t) \Big),
\\
\alpha_{u,v}^{t'} & =
\frac{e^{t-t'}}{\sum_{(u,t') \in \NH^+(v,t) } e^{t-t'}}, 
\end{align*}
where $\W^{(\ell)}$ is a learnable matrix and $\NH^+(v,t)$ is the ``reflexive'' extension of $\NH(v,t)$, namely $\NH^+(v,t) = \NH(v,t) \cup \{ (v,t) \}$.
Hence, the general form of TDGNN  can be written as follows:
\begin{align*}
\h_v^{(0)}(t) &= \x_v,
\\
\h_v^{(\ell)}(t) &= \com^{(\ell)} \Big(
\h_v^{(\ell-1)}(t),
\agg^{(\ell)}(
\ldblbrace
( \h_u^{(\ell-1)}(t'), g(t-t') ) \mid
 (u,t') \in \NH(v,t) 
\rdblbrace
)
\Big).
\end{align*}
Hence, TDGNN message-passing approach is similar to the one used in MP-TGN, except that it uses the same feature vector $\x_v$ for $v$ in all time points.

\pwline{Code for TDGNN is here \url{https://github.com/Leo-Q-316/TDGNN}.}

\section{Comparison of global and local \MPTGNN{}s}

It is important to observe that the types of message-passing mechanisms in  global and local \MPTGNN{}s are very different.
For example  to compute $\h_b(t_4)$ in $\TG$ from \Cref{fig:snapshot}, a global \MPTGNN{}s   aggregates $\h_c(t_4)$, $\h_c(t_3)$, and $\h_a(t_2)$, whereas a local \MPTGNN{}s   aggregates  $\h_c(t_4)$, $\h_c(t_4)$, and $\h_a(t_4)$.
Surprisingly, relative expressiveness and performance of such \MPTGNN{}s have not been thoroughly studied so far. 
We aim to fill this gap by, among others, exploiting Weisfeiler-Leman-like algorithms.

\section{Proofs details for  Temporal Weisfeiler-Leman Characterisation}
\Konei*
\begin{proof}
The proof is by induction on the number $\ell \in \N$.
For the basis, it suffices to show that 
$\rwl^{(0)}(v,t_i)=\h_v^{(0)}(t_i)$, for each timestamped node $(v,t_i)$ in $\TG$.
This holds since  $\rwl^{(0)}(v,t_i)=c(v,t_i)=c_i(v)=\h_v^{(0)}(t_i)$; these equalities hold, respectively, by the definition of $\rwl$, by the definition of $\Kone{\TG}$, and by Equation~\eqref{TGN1}.

For the inductive step  assume that the implication holds for $\ell-1$;  we will show it for $\ell$.
To show the implication  assume that 
$\rwl^{(\ell)}(v,t) = \rwl^{(\ell)}(u,t')$, for some $(v,t)$ and $(u,t')$.
Thus, by the definition of $\rwl$,
we have ($\ast$) that 
$\rwl^{(\ell-1)}(v,t) = \rwl^{(\ell-1)}(u,t')$
and  ($\ast\ast$) that the multiset 
$
\ldblbrace \rwl^{(\ell-1)}(w,t'') \mid (w,t'') \in \NH_r(v,t), r \in R \rdblbrace
$
equals the multiset
$ 
\ldblbrace \rwl^{(\ell-1)}(w,t'') \mid (w,t'') \in \NH_r(u,t'), r \in R \rdblbrace 
$.
By the inductive assumption, Statement~($\ast$) implies that $\h_v^{(\ell-1)}(t)=\h_u^{(\ell-1)}(t')$.
Next, we observe a crucial property of $\Kone{\TG}$; its definition implies that, for any $(w,t'')$ and $(v,t)$, the fact that $(w,t'') \in \NH_r(v,t)$ in $\Kone{\TG}$ is equivalent to  $(w,t'') \in \NH(v,t)$ and $r=t-t''$ in $\TG$.
Hence, Statement~($\ast\ast$) and the inductive assumption   imply that in $\TG$, the multiset 
$ 
\ldblbrace
( \h_w^{(\ell-1)}(t''), t-t' )\mid
 (w,t'') \in \NH(v,t) 
\rdblbrace
$
equals
$
\ldblbrace
( \h_w^{(\ell-1)}(t''), t'-t'' ) \mid
 (w,t'') \in \NH(u,t') 
\rdblbrace
$.
So, by Equation~\eqref{TGN2} (with $\fone=\h_u^{(\ell-1)}(t')$), we get
$\h_v^{(\ell)}(t)=\h_u^{(\ell)}(t')$.
\end{proof}

\Koneii*
\begin{proof}
Note that by \Cref{thm:Kone1}, the second statement implies the first one.
To show the opposite implication we will use the result of
\citet{DBLP:conf/nips/Huang0CB23}[Theorem A.1], who showed that for any knowledge graph $\KG=(V,E,R,c)$ and any $L \in \N$  there is a relational message-passing neural network R-MPNN model $\modelB$ (R-MPNNs are  defined by \citet{DBLP:conf/nips/Huang0CB23} in Section 3.2) with $L$ layers, such that for any nodes $v,u \in V$ and any $\ell \leq L$, if
$\hh_v^{(\ell)} = \hh_u^{(\ell)}$ in $\modelB$, then
$\rwl^{(\ell)}(v) = \rwl^{(\ell)}(u)$ in $\KG$.
In particular, they showed existence of an R-MPNN model of the following form (for any $v \in V$ and $\ell \leq L$):
\begin{align*}
\hh_v^{(0)} & = c(v),
\\
\hh_v^{(\ell)} & = \sign \big( 
\W^{(\ell)} + (\hh_v^{(\ell-1)} + \sum_{r\in R} \; \sum_{u \in \NH_{r}(v)} 
\alpha_r \hh_u^{(\ell-1)} ) - \bb \big),
\end{align*}
where $\W^{(\ell)}$ is a parameter matrix, $\alpha_r$ is a parameter, and $\bb$ is a bias term (they used the all-ones vector $\bb=\mathbf{1}$). 

We will use the result of \citet{DBLP:conf/nips/Huang0CB23}  to prove \Cref{thm:Kone2}.
To this end, let us fix a  labelled temporal graph $\TG = (\G_1,t_1), \dots, (\G_n,t_n)$, with  $\G_i=(V_i,E_i,c_i)$, and  $L \in \N$.
Now, we consider the knowledge graph $\Kone{\TG} = (V,E,R,c)$, where $V$ is the set of timestamped nodes in $\TG$.
By the result of \citet{DBLP:conf/nips/Huang0CB23}, there exists an R-MPNN model $\modelB$ of the form 
\begin{align*}
\hh_{(v,t)}^{(0)} & = c(v,t),
\\
\hh_{(v,t)}^{(\ell)} & = \sign \Big( 
\W^{(\ell)} (\hh_{(v,t)}^{(\ell-1)} + \sum_{r\in R} \; \; \sum_{(u,t') \in \NH_{r}(v,t)} 
\alpha_r \hh_{(u,t')}^{(\ell-1)} ) - \bb \Big),
\end{align*}
such that for 
any nodes $(v,t),(u,t') \in V$ and any $\ell \leq L$, if
$\hh_{(v,t)}^{(\ell)} = \hh_{(u,t')}^{(\ell)}$ in $\modelB$, then
$\rwl^{(\ell)}(v,t) = \rwl^{(\ell)}(u,t')$ in $\Kone{\TG}$.

We will use $\modelB$ to construct required $\model$.
In particular, it suffices to construct a global \MPTGNN{} $\model$ with $L$ layers which on $\TG$ computes embeddings $\h^{(\ell)}_v(t)$ that coincide with the corresponding embeddings $\hh_{(v,t)}^{(\ell)}$ computed by $\modelB$ on $\Kone{\TG}$.
We observe that, by the definition of $\Kone{\TG}$, we have $u \in \NH_{r}(v)$ in $\Kone{\TG}$ if and only if 
$(u,t') \in \NH(v,t)$ and $r=t-t'$ in $\TG$.
Hence, we can obtain required $\model$ as follows:
\begin{align*}
\h_v^{(0)}(t) & = c(v,t),
\\
\h_v^{(\ell)}(t) & = \sign \Big( 
\W^{(\ell)} (\h_v^{(\ell-1)}(t) + \aggsum
\ldblbrace
( 
\alpha_{(t-t') }\mathbf{h}_u^{(\ell-1)}(t') ) \mid
 (u,t') \in \NH(v,t) 
\rdblbrace ) 
- \bb \Big),
\end{align*}
where $\W^{(\ell)} $, $\alpha_r$, and $\bb$ are as in $\modelB$.
It remains to show that $\model$ can be written as a global \MPTGNN{}.
This is indeed the case, since $\model$ can be written as
\begin{align*}
\h_v^{(0)}(t) & = c(v,t),
\\
\h_v^{(\ell)}(t) & = \com^{(\ell)} \Big(
\h_v^{(\ell-1)}(t),
\agg^{(\ell)}(
 \ldblbrace
( \h_u^{(\ell)}(t'), g(t-t') ) \mid
 (u,t') \in \NH(v,t) 
\rdblbrace
)
\Big),  
\end{align*}
where:
\begin{align*}
\com^{(\ell)} (\h,\h') &= \sign(\W^{(\ell)}(\h + \h') - \bb),
\\
\agg^{(\ell)}(
 \ldblbrace
( \mathbf{h}_u^{(\ell-1)}(t'), g(t-t') ) \mid
 (u,t') \in \NH(v,t) 
\rdblbrace
) & = 
\aggsum
\ldblbrace
( 
\alpha_{(t-t') }\mathbf{h}_u^{(\ell-1)}(t') ) \mid
 (u,t') \in \NH(v,t) 
\rdblbrace.
\end{align*}
\end{proof}

\Ktwoi*
\begin{proof}
The proof structure is similar to the one for \Cref{thm:Kone1}.
The main difference is in the inductive step, which  exploits now the following property  of  $\Ktwo{\TG}$:
we have 
$(u,t') \in \NH_{r}(v,t)$ in $\Ktwo{\TG}$ if and only if  $t'=t$ and there exists a timestamped node
$(u,t'')$ in $\TG$
such that 
$(u,t'') \in \NH(v,t)$ and $r=t-t''$.
This, by the form of $\rwl$ and message-passing in local \MPTGNN{}s, allows us to show that $\rwl^{(\ell)}(v,t) = \rwl^{(\ell)}(u,t')$  implies  $\h_v^{(\ell)}(t)=\h_u^{(\ell)}(t')$. 
\end{proof}

\Ktwoii*
\begin{proof}
The proof uses the R-MPNN $\modelB$ mentioned in the proof of \Cref{thm:Kone2}.
It suffices to construct a local 
\MPTGNN{}$\model$ with $L$ layers which on $\TG$ computes embeddings $\h^{(\ell)}_v(t)$ that coincide with the corresponding embeddings $\hh_{(v,t)}^{(\ell)}$ computed by $\modelB$ on $\Ktwo{\TG}$.
This is obtained by the model $\model$ performing the following computations:
\begin{align*}
\h_v^{(0)}(t) & = c(v,t),
\\
\h_v^{(\ell)}(t) & = \sign \Big( 
\W^{(\ell)} (\h_v^{(\ell-1)}(t) + \aggsum
\ldblbrace
( 
\alpha_{(t-t') }\mathbf{h}_u^{(\ell-1)}(t) ) \mid
 (u,t') \in \NH(v,t) 
\rdblbrace ) 
- \bb \Big),
\end{align*}
where $\W^{(\ell)} $, $\alpha_r$, and $\bb$ are as in $\modelB$.
Note that the difference with respect to \Cref{thm:Kone2} is that 
$\mathbf{h}_u^{(\ell-1)}(t)$ is uses instead of $\mathbf{h}_u^{(\ell-1)}(t')$ under \aggsum{} operator.
Model $\model$ can be written as a local \MPTGNN{} as follows
\begin{align*}
\h_v^{(0)}(t) & = c(v,t),
\\
\h_v^{(\ell)}(t) & = \com^{(\ell)} \Big(
\h_v^{(\ell-1)}(t),
\agg^{(\ell)}(
 \ldblbrace
( \h_u^{(\ell)}(t), g(t-t') ) \mid
 (u,t') \in \NH(v,t) 
\rdblbrace
)
\Big),  
\end{align*}
where:
\begin{align*}
\com^{(\ell)} (\h,\h') &= \sign(\W^{(\ell)}(\h + \h') - \bb),
\\
\agg^{(\ell)}(
 \ldblbrace
( \mathbf{h}_u^{(\ell-1)}(t), g(t-t') ) \mid
 (u,t') \in \NH(v,t) 
\rdblbrace
) & = 
\aggsum
\ldblbrace
( 
\alpha_{(t-t') }\mathbf{h}_u^{(\ell-1)}(t) ) \mid
 (u,t') \in \NH(v,t) 
\rdblbrace.
\end{align*}
\end{proof}

\section{Proof details for  Temporal Isomorphisms}

\piso*
\begin{proof}
Consider $\TG$ and $\TG'$ from \Cref{fig:point}, where $(a,t_2)$ is pointwise isomorphic to $(a',t_2)$.
The knowledge graphs $\Kone{\TG}$ and $\Kone{\TG'}$ constructed for them, are depicted in \Cref{fig:point_one}.
Since $(a,t_2)$ has one incoming edge in $\Kone{\TG}$, but $(a',t_2)$ has no incoming edges in $\Kone{\TG'}$, we obtain  $\rwl^{(1)}(a,t_2) \neq \rwl^{(1)}(a',t_2)$.
Hence, by \Cref{thm:Kone2}, there exists a global \MPTGNN{}s  which computes embeddings  $\h_{a}^{(1)}(t_2) \neq \h_{a'}^{(1)}(t_2)$.

\begin{figure}[H]
\centering
\begin{tikzpicture}[
dot/.style = {draw, circle, minimum size=#1,
              inner sep=0pt, outer sep=0pt},
dot/.default = 6pt
]
\scriptsize
\pgfmathsetmacro{\tline}{-0.25}
\pgfmathsetmacro{\w}{1.3}
\pgfmathsetmacro{\h}{1.5}
\pgfmathsetmacro{\inh}{0.7}
\pgfmathsetmacro{\dist}{1.7}
\pgfmathsetmacro{\Ax}{0.5}
\pgfmathsetmacro{\Ay}{1.4}
\pgfmathsetmacro{\Bx}{0.4}
\pgfmathsetmacro{\By}{0.5}
\pgfmathsetmacro{\Cx}{1}
\pgfmathsetmacro{\Cy}{1}

\draw[->] (1.8,\tline) -- (4.8,\tline);

\foreach \x in {1,...,2}
{
\draw[fill=gray!90!black,opacity=0.2] (\x*\dist,0) -- (\x*\dist+\w,\inh) -- (\x*\dist+\w,\inh+\h) -- (\x*\dist,\h) -- cycle;

\node at (\x*\dist + 0.87 * \w, \inh+ \h - 0.3) {$G_{\x}$};
\node at (\x*\dist + 0.6 * \w, 0) {$t_{\x}={\x}$};
\draw[-] (\x*\dist + 0.6 * \w, \tline+0.05) -- (\x*\dist + 0.6 * \w, \tline-0.05);

\node[dot=11pt,draw=none] (A\x) at (\x*\dist+\Ax, 0+\Ay) {};
\node[dot=11pt,draw=none] (B\x) at (\x*\dist+\Bx, 0+\By) {};
\node[dot=11pt,draw=none] (C\x) at (\x*\dist+\Cx, 0+\Cy) {};
}


\draw[<->,blue,thick] (A1) -- (B1) node[pos=0.5,left] {0};
\draw[->,blue,thick] (A1) -- (B2) node[pos=0.7,below] {1};
\draw[->,blue,thick] (B1) -- (A2) node[pos=0.7,above] {1};

\node[dot=11pt,draw=black,fill=mygreen] at (A1) {$a$};
\node[dot=11pt,draw=black,fill=mygreen] at (B1) {$b$};
\node[dot=11pt,draw=black,fill=mygreen] at (C1) {$c$};
\node[dot=11pt,draw=black,fill=mygreen] at (A2) {$a$};
\node[dot=11pt,draw=black,fill=mygreen] at (B2) {$b$};
\node[dot=11pt,draw=black,fill=mygreen] at (C2) {$c$};
\node at (3.3,2.6) {$\TG$};
\end{tikzpicture}
\qquad
\begin{tikzpicture}[
dot/.style = {draw, circle, minimum size=#1,
              inner sep=0pt, outer sep=0pt},
dot/.default = 6pt
]
\scriptsize
\pgfmathsetmacro{\tline}{-0.25}
\pgfmathsetmacro{\w}{1.3}
\pgfmathsetmacro{\h}{1.5}
\pgfmathsetmacro{\inh}{0.7}
\pgfmathsetmacro{\dist}{1.7}
\pgfmathsetmacro{\Ax}{0.5}
\pgfmathsetmacro{\Ay}{1.4}
\pgfmathsetmacro{\Bx}{0.4}
\pgfmathsetmacro{\By}{0.5}
\pgfmathsetmacro{\Cx}{1}
\pgfmathsetmacro{\Cy}{1}

\draw[->] (1.8,\tline) -- (4.8,\tline);

\foreach \x in {1,...,2}
{
\draw[fill=gray!90!black,opacity=0.2] (\x*\dist,0) -- (\x*\dist+\w,\inh) -- (\x*\dist+\w,\inh+\h) -- (\x*\dist,\h) -- cycle;

\node at (\x*\dist + 0.87 * \w, \inh+ \h - 0.3) {$G'_{\x}$};
\node at (\x*\dist + 0.6 * \w, 0) {$t_{\x}={\x}$};
\draw[-] (\x*\dist + 0.6 * \w, \tline+0.05) -- (\x*\dist + 0.6 * \w, \tline-0.05);

\node[dot=11pt,draw=none] (A\x) at (\x*\dist+\Ax, 0+\Ay) {};
\node[dot=11pt,draw=none] (B\x) at (\x*\dist+\Bx, 0+\By) {};
\node[dot=11pt,draw=none] (C\x) at (\x*\dist+\Cx, 0+\Cy) {};
}


\draw[<->,blue,thick] (B1) -- (C1) node[pos=0.3,above] {0};
\draw[->,blue,thick] (C1) -- (B2) node[pos=0.4,above] {1};
\draw[->,blue,thick] (B1) -- (C2) node[pos=0.3,below] {1};

\node[dot=11pt,draw=black,fill=mygreen] at (A1) {$a'$};
\node[dot=11pt,draw=black,fill=mygreen] at (B1) {$b'$};
\node[dot=11pt,draw=black,fill=mygreen] at (C1) {$c'$};
\node[dot=11pt,draw=black,fill=mygreen] at (A2) {$a'$};
\node[dot=11pt,draw=black,fill=mygreen] at (B2) {$b'$};
\node[dot=11pt,draw=black,fill=mygreen] at (C2) {$c'$};
\node at (3.3,2.6) {$\TG'$};
\end{tikzpicture}
\caption{$\Kone{\TG}$ and $\Kone{\TG'}$ for $\TG$  and $\TG'$  from \Cref{fig:point}
}
\label{fig:point_one}
\end{figure}

Now consider knowledge graphs $\Ktwo{\TG}$ and $\Ktwo{\TG'}$ constructed for $\TG$ and $\TG'$ from  \Cref{fig:point}; they are depicted in \Cref{fig:point_two}.
Since $(a,t_2)$ has one incoming edge in $\Kone{\TG}$, but $(a',t_2)$ has no incoming edges in $\Kone{\TG'}$, we obtain  $\rwl^{(1)}(a,t_2) \neq \rwl^{(1)}(a',t_2)$.
Hence, by \Cref{thm:Ktwo2}, there exist a local \MPTGNN{}s  which computes embeddings  $\h_{a}^{(1)}(t_2) \neq \h_{a'}^{(1)}(t_2)$.

\begin{figure}[ht]
\centering
\begin{tikzpicture}[
dot/.style = {draw, circle, minimum size=#1,
              inner sep=0pt, outer sep=0pt},
dot/.default = 6pt
]
\scriptsize
\pgfmathsetmacro{\tline}{-0.25}
\pgfmathsetmacro{\w}{1.3}
\pgfmathsetmacro{\h}{1.5}
\pgfmathsetmacro{\inh}{0.7}
\pgfmathsetmacro{\dist}{1.7}
\pgfmathsetmacro{\Ax}{0.5}
\pgfmathsetmacro{\Ay}{1.4}
\pgfmathsetmacro{\Bx}{0.4}
\pgfmathsetmacro{\By}{0.5}
\pgfmathsetmacro{\Cx}{1}
\pgfmathsetmacro{\Cy}{1}

\draw[->] (1.8,\tline) -- (4.8,\tline);

\foreach \x in {1,...,2}
{
\draw[fill=gray!90!black,opacity=0.2] (\x*\dist,0) -- (\x*\dist+\w,\inh) -- (\x*\dist+\w,\inh+\h) -- (\x*\dist,\h) -- cycle;

\node at (\x*\dist + 0.87 * \w, \inh+ \h - 0.3) {$G_{\x}$};
\node at (\x*\dist + 0.6 * \w, 0) {$t_{\x}={\x}$};
\draw[-] (\x*\dist + 0.6 * \w, \tline+0.05) -- (\x*\dist + 0.6 * \w, \tline-0.05);

\node[dot=11pt,draw=none] (A\x) at (\x*\dist+\Ax, 0+\Ay) {};
\node[dot=11pt,draw=none] (B\x) at (\x*\dist+\Bx, 0+\By) {};
\node[dot=11pt,draw=none] (C\x) at (\x*\dist+\Cx, 0+\Cy) {};
}


\draw[<->,blue,thick] (A1) -- (B1) node[pos=0.5,left] {0};
\draw[<->,blue,thick] (A2) -- (B2) node[pos=0.5,left] {1};

\node[dot=11pt,draw=black,fill=mygreen] at (A1) {$a$};
\node[dot=11pt,draw=black,fill=mygreen] at (B1) {$b$};
\node[dot=11pt,draw=black,fill=mygreen] at (C1) {$c$};
\node[dot=11pt,draw=black,fill=mygreen] at (A2) {$a$};
\node[dot=11pt,draw=black,fill=mygreen] at (B2) {$b$};
\node[dot=11pt,draw=black,fill=mygreen] at (C2) {$c$};
\node at (3.3,2.6) {$\TG$};
\end{tikzpicture}
\qquad
\begin{tikzpicture}[
dot/.style = {draw, circle, minimum size=#1,
              inner sep=0pt, outer sep=0pt},
dot/.default = 6pt
]
\scriptsize
\pgfmathsetmacro{\tline}{-0.25}
\pgfmathsetmacro{\w}{1.3}
\pgfmathsetmacro{\h}{1.5}
\pgfmathsetmacro{\inh}{0.7}
\pgfmathsetmacro{\dist}{1.7}
\pgfmathsetmacro{\Ax}{0.5}
\pgfmathsetmacro{\Ay}{1.4}
\pgfmathsetmacro{\Bx}{0.4}
\pgfmathsetmacro{\By}{0.5}
\pgfmathsetmacro{\Cx}{1}
\pgfmathsetmacro{\Cy}{1}

\draw[->] (1.8,\tline) -- (4.8,\tline);

\foreach \x in {1,...,2}
{
\draw[fill=gray!90!black,opacity=0.2] (\x*\dist,0) -- (\x*\dist+\w,\inh) -- (\x*\dist+\w,\inh+\h) -- (\x*\dist,\h) -- cycle;

\node at (\x*\dist + 0.87 * \w, \inh+ \h - 0.3) {$G'_{\x}$};
\node at (\x*\dist + 0.6 * \w, 0) {$t_{\x}={\x}$};
\draw[-] (\x*\dist + 0.6 * \w, \tline+0.05) -- (\x*\dist + 0.6 * \w, \tline-0.05);

\node[dot=11pt,draw=none] (A\x) at (\x*\dist+\Ax, 0+\Ay) {};
\node[dot=11pt,draw=none] (B\x) at (\x*\dist+\Bx, 0+\By) {};
\node[dot=11pt,draw=none] (C\x) at (\x*\dist+\Cx, 0+\Cy) {};
}


\draw[<->,blue,thick] (B1) -- (C1) node[pos=0.3,above] {0};
\draw[<->,blue,thick] (B2) -- (C2) node[pos=0.3,above] {1};

\node[dot=11pt,draw=black,fill=mygreen] at (A1) {$a'$};
\node[dot=11pt,draw=black,fill=mygreen] at (B1) {$b'$};
\node[dot=11pt,draw=black,fill=mygreen] at (C1) {$c'$};
\node[dot=11pt,draw=black,fill=mygreen] at (A2) {$a'$};
\node[dot=11pt,draw=black,fill=mygreen] at (B2) {$b'$};
\node[dot=11pt,draw=black,fill=mygreen] at (C2) {$c'$};
\node at (3.3,2.6) {$\TG'$};
\end{tikzpicture}
\caption{$\Ktwo{\TG}$ and $\Ktwo{\TG'}$ for $\TG$  and $\TG'$  from \Cref{fig:point}
}
\label{fig:point_two}
\end{figure}
\end{proof}

\iso*
\begin{proof}
Assume that  $(v,t)$ from $\TG$ and  $(u,t')$ from $\TG'$ are  timewise isomorphic.
Hence, by \Cref{def:iso}, $\TG$ and $\TG'$ are of the forms $\TG = (\G_1,t_1), \dots, (\G_n,t_n)$ and $\TG' = (\G'_1,t'_1), \dots, (\G'_n,t'_n)$,
with 
$\G_i=(V_i,E_i,c_i)$
and $\G_i'=(V_i',E_i',c_i')$.
Moreover, we obtain that
$t=t_i$ and $t'=t'_i$ for some $i \in \{ 1, \dots, n\}$.
Furthermore, $f(v) = u$ for some function ${f:V(\TG) \to V(\TG')}$ satisfying requirements in \Cref{def:iso}.
We define $f': \TV(\TG) \to \TV(\TG')$ such that $f'(w,t_j)=(f(w),t'_j)$  for  all 
$(w,t_j) \in \TV(\TG)$.

Let $\Kone{\TG} = (V,E,R,c)$ and 
$\Kone{\TG} = (V',E',R',c')$.
We will show that $f'$ is an isomorphism between knowledge graphs $\Kone{\TG}$ and $\Kone{\TG'}$.

First, we show that $f'$ is a bijection.
To show that $f'$ is injective, we will show that for any $(w,t_j), (s,t_k) \in   \TV(\TG)$ such that $(w,t_j) \neq (s,t_k)$, we have $f'(w,t_j) \neq  f'(s,t_k)$, that is, $(f(w),t_j) \neq  (f(s),t_k)$.
Since $(w,t_j) \neq (s,t_k)$, we have $w \neq s$ or $t_j \neq t_k$.
If $t_j \neq t_k$, then clearly $(f(w),t_j) \neq  (f(s),t_k)$.
Next assume that $w \neq s$.
By \Cref{def:iso}, function $f$ is injective, thus $f(w)\neq f(s)$, and so, 
$(f(w),t_j) \neq  (f(s),t_k)$.
To show that $f'$ is surjective, let $(s,t_j) \in \TV(\TG')$.
Since $f$ is surjective, there exists $w \in V(\TG)$ such that $f(w)=s$.
Hence, $f'(w,t_j)=(s,t_k)$. Thus $f'$ is indeed a bijection.

To show that $f'$ is an isomorphism between $\Kone{\TG}$ and $\Kone{\TG'}$, it remains to show that for all $(w,t_j), (s,t_k) \in V$ and $r \in R$ the following hold:
\begin{itemize}
\item[(i)]~$c(w,t_j) = c'( f' (w,t_j) )$
and 
\item[(ii)]~$(r,(w,t_j),(s,t_k)) \in E$ if and only if 
$(r,f'(w,t_j),f'(s,t_k)) \in E'$.
\end{itemize}
To show Statement (i), we observe that $c(w,t_j) = c_j(w) = c_j'(f(w)) = c'(f(w),t_j' ) = c'( f' (w,t_j) )$, where
the consecutive equalities hold by: 
the definition of $\Kone{\TG}$, 
the fact that $f$ is an isomorphism between $G_j$ and $G_j'$, 
the definition of $\Kone{\TG'}$,
and the definition of $f'$.

To show Statement (ii), we observe that the following are equivalent:
\begin{itemize}
\item $(r,(w,t_j),(s,t_k)) \in E$,
\item $r = t_k - t_j$, $j \leq k$, and $\{w,s \} \in E_j$,
\item $r = t_k' - t_j'$, $j \leq k$, and $\{f(w),f(s) \} \in E_j'$,
\item $(r,(f(w),t_j'), (f(s),t_k')) \in E'$,
\item $(r,f'(w,t_j),f'(s,t_k) \in E'$,
\end{itemize}
where the consecutive equivalences hold by: 
the definition of $\Kone{\TG}$,
the fact that  $\TG$ and $TG'$ are timewise isomorphic,
the definition of $\Kone{\TG'}$,
and the definition of $f'$.

Hence, we have shown that $f'$ is an isomorphism between $\Kone{\TG}$ and $\Kone{\TG'}$.
Moreover, we have $f'(v,t) = (u,t')$.
Therefore, $\rwl^{(\ell)}(v,t)=\rwl^{(\ell)}(u,t')$, for all $\ell \in \N$.
Thus, by \Cref{thm:Kone1}, we obtain that $\h_{v}^{(\ell)}(t) = \h_{u}^{(\ell)}(t')$ for all global  \MPTGNN{}s.

In the case of local \MPTGNN{}s the proof is similar. We use the same $f'$ and show that it is an isomorphism between 
$\Ktwo{\TG}$ and $\Ktwo{\TG'}$.
Let $\Ktwo{\TG} = (V,E_{\mathsf{loc}},R,c)$ and 
$\Ktwo{\TG} = (V',E'_{\mathsf{loc}},R',c')$;
note that $\Ktwo{\TG}$ and  $\Ktwo{\TG'}$ differ from $\Kone{\TG}$ and  $\Kone{\TG'}$, respectively, only in the definition of edges.
Therefore, since we have already shown that $f'$ is a bijection and that Statement (i) holds, it remains to show that
for all $(w,t_j), (s,t_k) \in V$ and $r \in R$ we have
\begin{itemize}
\item $(r,(w,t_j),(s,t_k)) \in E_{\mathsf{loc}}$ if and only if 
$(r,f'(w,t_j),f'(s,t_k)) \in E'_{\mathsf{loc}}$.
\end{itemize}
The above holds true since the
 following are equivalent:
\begin{itemize}
\item $(r,(w,t_j),(s,t_k)) \in E_{\mathsf{loc}}$,
\item $t_j=t_k$ and there is $m \leq j$ such that $r=t_j-t_m$ and $\{ w,s\} \in E_m$, 
\item $t_j'=t_k'$ and there is $m \leq j$ such that $r=t_j'-t_m'$ and $\{ f(w),f(s)\} \in E'_m$, 
\item $(r,(f(w),t_j'), (f(s),t_k')) \in E'_{\mathsf{loc}}$,
\item $(r,f'(w,t_j),f'(s,t_k)) \in E'_{\mathsf{loc}}$,
\end{itemize}
where the consecutive equivalences hold by: 
the definition of $\Ktwo{\TG}$,
the fact that  $\TG$ and $TG'$ are timewise isomorphic,
the definition of $\Ktwo{\TG'}$,
and the definition of $f'$.
Hence, $f'$ is an isomorphism between 
$\Ktwo{\TG}$ and $\Ktwo{\TG'}$.
Since $f'(v,t) = (u,t')$, we have
 $\rwl^{(\ell)}(v,t)=\rwl^{(\ell)}(u,t')$, for all $\ell \in \N$.
Thus, by \Cref{thm:Kone2}, we obtain that $\h_{v}^{(\ell)}(t) = \h_{u}^{(\ell)}(t')$ for all local  \MPTGNN{}s.
\end{proof}

\section{Proof details for  Relative Expressiveness of Temporal Message Passing Mechanisms}

\globmoreloc*
\begin{proof}
Consider $(b,t_4)$ from $\TG$ in
\Cref{fig:snapshot} and $(b,t_4)$ from $\TG'$ in \Cref{fig:aggregated}~(\subref{fig:snap}).
Knowledge graphs $\Ktwo{\TG}$ and $\Ktwo{\TG'}$ are depicted in \Cref{fig:Kloc}.
Since the snapshot $G_4$ in $\Ktwo{\TG}$ is identical to the snapshot $G_4$ in $\Ktwo{\TG'}$, it is clear that, for any $\ell \in \N$, application of $\rwl^{(\ell)}$ to $\Ktwo{\TG}$ and $\Ktwo{\TG'}$ assigns the same label to $(b,t_4)$ in $\Ktwo{\TG}$ and to $(b,t_4)$ in $\Ktwo{\TG'}$.
Hence, by \Cref{thm:Ktwo1}, local \MPTGNN{}s cannot distinguish these timestamped nodes.

\begin{figure}[ht]
\centering
\begin{tikzpicture}[
dot/.style = {draw, circle, minimum size=#1,
              inner sep=0pt, outer sep=0pt},
dot/.default = 6pt
]
\scriptsize
\pgfmathsetmacro{\tline}{-0.25}
\pgfmathsetmacro{\w}{1.3}
\pgfmathsetmacro{\h}{1.5}
\pgfmathsetmacro{\inh}{0.7}
\pgfmathsetmacro{\dist}{1.9}
\pgfmathsetmacro{\Ax}{0.5}
\pgfmathsetmacro{\Ay}{1.4}
\pgfmathsetmacro{\Bx}{0.4}
\pgfmathsetmacro{\By}{0.5}
\pgfmathsetmacro{\Cx}{1}
\pgfmathsetmacro{\Cy}{1}

\draw[->] (1.8,\tline) -- (9.1,\tline);

\foreach \x in {1,...,4}
{
\draw[fill=gray!90!black,opacity=0.2] (\x*\dist,0) -- (\x*\dist+\w,\inh) -- (\x*\dist+\w,\inh+\h) -- (\x*\dist,\h) -- cycle;

\node at (\x*\dist + 0.87 * \w, \inh+ \h - 0.3) {$G_{\x}$};
\node at (\x*\dist + 0.6 * \w, 0) {$t_{\x}={\x}$};
\draw[-] (\x*\dist + 0.6 * \w, \tline+0.05) -- (\x*\dist + 0.6 * \w, \tline-0.05);

\node[dot=9pt,draw=none] (A\x) at (\x*\dist+\Ax, 0+\Ay) {};
\node[dot=9pt,draw=none] (B\x) at (\x*\dist+\Bx, 0+\By) {};
\node[dot=9pt,draw=none] (C\x) at (\x*\dist+\Cx, 0+\Cy) {};
}

\draw[<->,blue,thick] (A2) -- (B2) node[midway,left] {0};
\draw[<->,blue,thick] (A3) -- (B3) node[midway,left] {1};
\draw[<->,blue,thick] (B3) -- (C3) node[pos=0.3,above] {0};
\draw[<->,blue,thick] (A4) -- (B4) node[midway,left] {2};
\draw[<->,blue,thick] (A4) -- (C4) node[pos=-0.08,right=0.07] {0};
\draw[<->,blue,thick] (B4) -- (C4) node[pos=0.3,above] {1};
\draw (B4) edge[<->,blue,thick,bend right=20] node[pos=0.7,below] {0} (C4);

\node[dot=9pt,draw=black,fill=myblue] at (A1) {$a$};
\node[dot=9pt,draw=black,fill=mygreen] at (B1) {$b$};
\node[dot=9pt,draw=black,fill=myred] at (C1) {$c$};
\node[dot=9pt,draw=black,fill=mygreen] at (A2) {$a$};
\node[dot=9pt,draw=black,fill=mygreen] at (B2) {$b$};
\node[dot=9pt,draw=black,fill=myred] at (C2) {$c$};
\node[dot=9pt,draw=black,fill=mygreen] at (A3) {$a$};
\node[dot=9pt,draw=black,fill=mygreen] at (B3) {$b$};
\node[dot=9pt,draw=black,fill=mygreen] at (C3) {$c$};
\node[dot=9pt,draw=black,fill=myblue] at (A4) {$a$};
\node[dot=9pt,draw=black,fill=mygreen] at (B4) {$b$};
\node[dot=9pt,draw=black,fill=mygreen] at (C4) {$c$};
\end{tikzpicture}
\qquad
\qquad
\begin{tikzpicture}[
dot/.style = {draw, circle, minimum size=#1,
              inner sep=0pt, outer sep=0pt},
dot/.default = 6pt
]
\scriptsize
\pgfmathsetmacro{\tline}{-0.25}
\pgfmathsetmacro{\w}{1.3}
\pgfmathsetmacro{\h}{1.5}
\pgfmathsetmacro{\inh}{0.7}
\pgfmathsetmacro{\dist}{1.9}
\pgfmathsetmacro{\Ax}{0.5}
\pgfmathsetmacro{\Ay}{1.4}
\pgfmathsetmacro{\Bx}{0.4}
\pgfmathsetmacro{\By}{0.5}
\pgfmathsetmacro{\Cx}{1}
\pgfmathsetmacro{\Cy}{1}

\draw[->] (1.8,\tline) -- (9.1,\tline);

\foreach \x in {1,...,4}
{
\draw[fill=gray!90!black,opacity=0.2] (\x*\dist,0) -- (\x*\dist+\w,\inh) -- (\x*\dist+\w,\inh+\h) -- (\x*\dist,\h) -- cycle;

\node at (\x*\dist + 0.87 * \w, \inh+ \h - 0.3) {$G_{\x}$};
\node at (\x*\dist + 0.6 * \w, 0) {$t_{\x}={\x}$};
\draw[-] (\x*\dist + 0.6 * \w, \tline+0.05) -- (\x*\dist + 0.6 * \w, \tline-0.05);

\node[dot=9pt,draw=none] (A\x) at (\x*\dist+\Ax, 0+\Ay) {};
\node[dot=9pt,draw=none] (B\x) at (\x*\dist+\Bx, 0+\By) {};
\node[dot=9pt,draw=none] (C\x) at (\x*\dist+\Cx, 0+\Cy) {};
}

\draw[<->,blue,thick] (A2) -- (B2) node[midway,left] {0};
\draw[<->,blue,thick] (A3) -- (B3) node[midway,left] {1};
\draw[<->,blue,thick] (B3) -- (C3) node[pos=0.3,above] {0};
\draw[<->,blue,thick] (A4) -- (B4) node[midway,left] {2};
\draw[<->,blue,thick] (A4) -- (C4) node[pos=-0.08,right=0.07] {0};
\draw[<->,blue,thick] (B4) -- (C4) node[pos=0.3,above] {1};
\draw (B4) edge[<->,blue,thick,bend right=20] node[pos=0.7,below] {0} (C4);

\node[dot=9pt,draw=black,fill=myblue] at (A1) {$a$};
\node[dot=9pt,draw=black,fill=mygreen] at (B1) {$b$};
\node[dot=9pt,draw=black,fill=mygreen] at (C1) {$c$};
\node[dot=9pt,draw=black,fill=myblue] at (A2) {$a$};
\node[dot=9pt,draw=black,fill=mygreen] at (B2) {$b$};
\node[dot=9pt,draw=black,fill=mygreen] at (C2) {$c$};
\node[dot=9pt,draw=black,fill=myblue] at (A3) {$a$};
\node[dot=9pt,draw=black,fill=mygreen] at (B3) {$b$};
\node[dot=9pt,draw=black,fill=mygreen] at (C3) {$c$};
\node[dot=9pt,draw=black,fill=myblue] at (A4) {$a$};
\node[dot=9pt,draw=black,fill=mygreen] at (B4) {$b$};
\node[dot=9pt,draw=black,fill=mygreen] at (C4) {$c$};
\end{tikzpicture}
\caption{Knowledge graphs $\Ktwo{\TG}$ and  for $\Ktwo{\TG'}$}
\label{fig:Kloc}
\end{figure}

Next, consider $\Kone{\TG}$ and $\Kone{\TG'}$, as depicted in \Cref{fig:glob}.
After application of one iteration of $\rwl$ we obtain that the label $\rwl^{(1)}(b,t_4)$
 in $\Kone{\TG}$ is different from  $\rwl^{(1)}(b,t_4)$ in $\Kone{\TG'}$.
Indeed, $(b,t_4)$ has no blue temporal neighbours in $\Kone{\TG}$, but it has a blue neighbour, namely $(a,t_2)$, in $\Kone{\TG'}$.
Hence, for any $\ell \geq 1$,  application of $\rwl^{(\ell)}$ to $\Kone{\TG}$ and $\Kone{\TG'}$ assigns different labels to $(b,t_4)$ in $\Kone{\TG}$ and to $(b,t_4)$ in $\Kone{\TG'}$.
Therefore, by \Cref{thm:Kone2}, global \MPTGNN{}s can distinguish these nodes.

\begin{figure}[ht]
\centering
\begin{tikzpicture}[
dot/.style = {draw, circle, minimum size=#1,
              inner sep=0pt, outer sep=0pt},
dot/.default = 6pt
]
\scriptsize
\pgfmathsetmacro{\tline}{-0.25}
\pgfmathsetmacro{\w}{1.3}
\pgfmathsetmacro{\h}{1.5}
\pgfmathsetmacro{\inh}{0.7}
\pgfmathsetmacro{\dist}{1.9}
\pgfmathsetmacro{\Ax}{0.5}
\pgfmathsetmacro{\Ay}{1.4}
\pgfmathsetmacro{\Bx}{0.4}
\pgfmathsetmacro{\By}{0.5}
\pgfmathsetmacro{\Cx}{1}
\pgfmathsetmacro{\Cy}{1}

\draw[->] (1.8,\tline) -- (9.1,\tline);

\foreach \x in {1,...,4}
{
\draw[fill=gray!90!black,opacity=0.2] (\x*\dist,0) -- (\x*\dist+\w,\inh) -- (\x*\dist+\w,\inh+\h) -- (\x*\dist,\h) -- cycle;

\node at (\x*\dist + 0.87 * \w, \inh+ \h - 0.3) {$G_{\x}$};
\node at (\x*\dist + 0.6 * \w, 0) {$t_{\x}={\x}$};
\draw[-] (\x*\dist + 0.6 * \w, \tline+0.05) -- (\x*\dist + 0.6 * \w, \tline-0.05);

\node[dot=9pt,draw=none] (A\x) at (\x*\dist+\Ax, 0+\Ay) {};
\node[dot=9pt,draw=none] (B\x) at (\x*\dist+\Bx, 0+\By) {};
\node[dot=9pt,draw=none] (C\x) at (\x*\dist+\Cx, 0+\Cy) {};
}

\draw[<->,blue,thick] (A2) -- (B2) node[pos=0.5,left] {0};
\draw[<-,blue,thick] (A3) -- (B2) node[pos=0.15,above] {1};
\draw[<-,blue,thick] (B3) -- (A2) node[pos=0.12,below] {1};

\draw[<->,blue,thick] (B3) -- (C3) node[pos=0.7,left=0.05] {0};
\draw[<-,blue,thick] (A4) -- (B2) node[pos=0.17,above] {2};
\draw[<-,blue,thick] (B4) -- (A2) node[pos=0.1,below] {2};
\draw[blue,thick] (B4) -- (C3) node[pos=0.7,above] {1};
\draw[blue,thick] (C4) -- (B3) node[pos=0.3,above] {1};
\draw[<->,blue,thick] (A4) -- (C4) node[pos=0.8,above] {0};
\draw[<->,blue,thick] (B4) -- (C4) node[pos=0.7,below] {0};

\node[dot=9pt,draw=black,fill=myblue] at (A1) {$a$};
\node[dot=9pt,draw=black,fill=mygreen] at (B1) {$b$};
\node[dot=9pt,draw=black,fill=myred] at (C1) {$c$};
\node[dot=9pt,draw=black,fill=mygreen] at (A2) {$a$};
\node[dot=9pt,draw=black,fill=mygreen] at (B2) {$b$};
\node[dot=9pt,draw=black,fill=myred] at (C2) {$c$};
\node[dot=9pt,draw=black,fill=mygreen] at (A3) {$a$};
\node[dot=9pt,draw=black,fill=mygreen] at (B3) {$b$};
\node[dot=9pt,draw=black,fill=mygreen] at (C3) {$c$};
\node[dot=9pt,draw=black,fill=myblue] at (A4) {$a$};
\node[dot=9pt,draw=black,fill=mygreen] at (B4) {$b$};
\node[dot=9pt,draw=black,fill=mygreen] at (C4) {$c$};
\end{tikzpicture}
\qquad
\qquad 
\begin{tikzpicture}[
dot/.style = {draw, circle, minimum size=#1,
              inner sep=0pt, outer sep=0pt},
dot/.default = 6pt
]
\scriptsize
\pgfmathsetmacro{\tline}{-0.25}
\pgfmathsetmacro{\w}{1.3}
\pgfmathsetmacro{\h}{1.5}
\pgfmathsetmacro{\inh}{0.7}
\pgfmathsetmacro{\dist}{1.9}
\pgfmathsetmacro{\Ax}{0.5}
\pgfmathsetmacro{\Ay}{1.4}
\pgfmathsetmacro{\Bx}{0.4}
\pgfmathsetmacro{\By}{0.5}
\pgfmathsetmacro{\Cx}{1}
\pgfmathsetmacro{\Cy}{1}

\draw[->] (1.8,\tline) -- (9.1,\tline);

\foreach \x in {1,...,4}
{
\draw[fill=gray!90!black,opacity=0.2] (\x*\dist,0) -- (\x*\dist+\w,\inh) -- (\x*\dist+\w,\inh+\h) -- (\x*\dist,\h) -- cycle;

\node at (\x*\dist + 0.87 * \w, \inh+ \h - 0.3) {$G_{\x}$};
\node at (\x*\dist + 0.6 * \w, 0) {$t_{\x}={\x}$};
\draw[-] (\x*\dist + 0.6 * \w, \tline+0.05) -- (\x*\dist + 0.6 * \w, \tline-0.05);

\node[dot=9pt,draw=none] (A\x) at (\x*\dist+\Ax, 0+\Ay) {};
\node[dot=9pt,draw=none] (B\x) at (\x*\dist+\Bx, 0+\By) {};
\node[dot=9pt,draw=none] (C\x) at (\x*\dist+\Cx, 0+\Cy) {};
}

\draw[<->,blue,thick] (A2) -- (B2) node[pos=0.5,left] {0};
\draw[<-,blue,thick] (A3) -- (B2) node[pos=0.15,above] {1};
\draw[<-,blue,thick] (B3) -- (A2) node[pos=0.12,below] {1};

\draw[<->,blue,thick] (B3) -- (C3) node[pos=0.7,left=0.05] {0};
\draw[<-,blue,thick] (A4) -- (B2) node[pos=0.17,above] {2};
\draw[<-,blue,thick] (B4) -- (A2) node[pos=0.1,below] {2};
\draw[blue,thick] (B4) -- (C3) node[pos=0.7,above] {1};
\draw[blue,thick] (C4) -- (B3) node[pos=0.3,above] {1};
\draw[<->,blue,thick] (A4) -- (C4) node[pos=0.8,above] {0};
\draw[<->,blue,thick] (B4) -- (C4) node[pos=0.7,below] {0};

\node[dot=9pt,draw=black,fill=myblue] at (A1) {$a$};
\node[dot=9pt,draw=black,fill=mygreen] at (B1) {$b$};
\node[dot=9pt,draw=black,fill=mygreen] at (C1) {$c$};
\node[dot=9pt,draw=black,fill=myblue] at (A2) {$a$};
\node[dot=9pt,draw=black,fill=mygreen] at (B2) {$b$};
\node[dot=9pt,draw=black,fill=mygreen] at (C2) {$c$};
\node[dot=9pt,draw=black,fill=myblue] at (A3) {$a$};
\node[dot=9pt,draw=black,fill=mygreen] at (B3) {$b$};
\node[dot=9pt,draw=black,fill=mygreen] at (C3) {$c$};
\node[dot=9pt,draw=black,fill=myblue] at (A4) {$a$};
\node[dot=9pt,draw=black,fill=mygreen] at (B4) {$b$};
\node[dot=9pt,draw=black,fill=mygreen] at (C4) {$c$};
\end{tikzpicture}
\caption{Knowledge graphs $\Kone{\TG}$ and $\Kone{\TG'}$}
\label{fig:glob}
\end{figure}
\end{proof}

\locmoreglob*
\begin{proof}
Consider $(a,t_2)$ from $\TG$ and $(a',t_2)$ from $\TG'$, as depicted in \Cref{fig:Ttwomore}.
Knowledge graphs $\Kone{\TG}$ and  $\Kone{\TG'}$ are depicted in 
\Cref{fig:TtwomoreKone}.
We observe that $(a,t_2)$ in $\Kone{\TG}$ is  isomorphic to $(a',t_2)$ in $\Kone{\TG'}$ so, by \Cref{thm:Kone1}, $(a,t_2)$ and $(a',t_2)$ cannot be distinguished by global \MPTGNN{}s.

\begin{figure}[ht]
\centering
\begin{tikzpicture}[
dot/.style = {draw, circle, minimum size=#1,
              inner sep=0pt, outer sep=0pt},
dot/.default = 6pt
]
\scriptsize
\pgfmathsetmacro{\tline}{-0.25}
\pgfmathsetmacro{\w}{1.3}
\pgfmathsetmacro{\h}{1.5}
\pgfmathsetmacro{\inh}{0.7}
\pgfmathsetmacro{\dist}{1.7}
\pgfmathsetmacro{\Ax}{0.5}
\pgfmathsetmacro{\Ay}{1.4}
\pgfmathsetmacro{\Bx}{0.4}
\pgfmathsetmacro{\By}{0.5}
\pgfmathsetmacro{\Cx}{1}
\pgfmathsetmacro{\Cy}{1}

\draw[->] (1.8,\tline) -- (4.8,\tline);

\foreach \x in {1,...,2}
{
\draw[fill=gray!90!black,opacity=0.2] (\x*\dist,0) -- (\x*\dist+\w,\inh) -- (\x*\dist+\w,\inh+\h) -- (\x*\dist,\h) -- cycle;

\node at (\x*\dist + 0.87 * \w, \inh+ \h - 0.3) {$G_{\x}$};
\node at (\x*\dist + 0.6 * \w, 0) {$t_{\x}$};
\draw[-] (\x*\dist + 0.6 * \w, \tline+0.05) -- (\x*\dist + 0.6 * \w, \tline-0.05);

\node[dot=9pt,draw=none] (A\x) at (\x*\dist+\Ax, 0+\Ay) {};
\node[dot=9pt,draw=none] (B\x) at (\x*\dist+\Bx, 0+\By) {};
\node[dot=9pt,draw=none] (C\x) at (\x*\dist+\Cx, 0+\Cy) {};
}

\draw[<->,blue,thick] (A1) -- (B1) node[pos=0.5,left] {0};
\draw[<-,blue,thick] (B2) -- (A1) node[pos=0.3,below] {1};
\draw[<-,blue,thick] (A2) -- (B1) node[pos=0.35,above] {1};
\draw[<->,blue,thick] (B2) -- (C2) node[pos=0.7,below] {0};

\node[dot=11pt,draw=black,fill=mygreen] at (A1) {$a$};
\node[dot=11pt,draw=black,fill=mygreen] at (B1) {$b$};
\node[dot=11pt,draw=black,fill=mygreen] at (C1) {$c$};
\node[dot=11pt,draw=black,fill=mygreen] at (A2) {$a$};
\node[dot=11pt,draw=black,fill=mygreen] at (B2) {$b$};
\node[dot=11pt,draw=black,fill=mygreen] at (C2) {$c$};
\end{tikzpicture}
\qquad
\qquad
\begin{tikzpicture}[
dot/.style = {draw, circle, minimum size=#1,
              inner sep=0pt, outer sep=0pt},
dot/.default = 6pt
]
\scriptsize
\pgfmathsetmacro{\tline}{-0.25}
\pgfmathsetmacro{\w}{1.3}
\pgfmathsetmacro{\h}{1.5}
\pgfmathsetmacro{\inh}{0.7}
\pgfmathsetmacro{\dist}{1.7}
\pgfmathsetmacro{\Ax}{0.5}
\pgfmathsetmacro{\Ay}{1.4}
\pgfmathsetmacro{\Bx}{0.4}
\pgfmathsetmacro{\By}{0.5}
\pgfmathsetmacro{\Cx}{1}
\pgfmathsetmacro{\Cy}{1}

\draw[->] (1.8,\tline) -- (4.8,\tline);

\foreach \x in {1,...,2}
{
\draw[fill=gray!90!black,opacity=0.2] (\x*\dist,0) -- (\x*\dist+\w,\inh) -- (\x*\dist+\w,\inh+\h) -- (\x*\dist,\h) -- cycle;

\node at (\x*\dist + 0.87 * \w, \inh+ \h - 0.3) {$G'_{\x}$};
\node at (\x*\dist + 0.6 * \w, 0) {$t_{\x}$};
\draw[-] (\x*\dist + 0.6 * \w, \tline+0.05) -- (\x*\dist + 0.6 * \w, \tline-0.05);

\node[dot=9pt,draw=none] (A\x) at (\x*\dist+\Ax, 0+\Ay) {};
\node[dot=9pt,draw=none] (B\x) at (\x*\dist+\Bx, 0+\By) {};
\node[dot=9pt,draw=none] (C\x) at (\x*\dist+\Cx, 0+\Cy) {};
}

\draw[<->,blue,thick] (A1) -- (B1) node[pos=0.5,left] {0};
\draw[<-,blue,thick] (B2) -- (A1) node[pos=0.3,below] {1};
\draw[<-,blue,thick] (A2) -- (B1) node[pos=0.35,above] {1};

\node[dot=11pt,draw=black,fill=mygreen] at (A1) {$a'$};
\node[dot=11pt,draw=black,fill=mygreen] at (B1) {$b'$};
\node[dot=11pt,draw=black,fill=mygreen] at (C1) {$c'$};
\node[dot=11pt,draw=black,fill=mygreen] at (A2) {$a'$};
\node[dot=11pt,draw=black,fill=mygreen] at (B2) {$b'$};
\node[dot=11pt,draw=black,fill=mygreen] at (C2) {$c'$};
\end{tikzpicture}
\caption{Knowledge graphs $\Kone{\TG}$ and  $\Kone{\TG'}$
}
\label{fig:TtwomoreKone}
\end{figure}

Now, consider knowledge graphs $\Ktwo{\TG}$ and  $\Ktwo{\TG'}$ as depicted in 
\Cref{fig:TtwomoreKtwo}.
Note that $(a,t_2)$ in $\Ktwo{\TG}$ has an outgoing path of length 2, whereas $(a,t_2)$ in $\Ktwo{\TG'}$ has only a path of length 1.
Hence, two iterations of $\rwl$ allow us to distinguish these nodes.
Thus, by \Cref{thm:Ktwo2}, $(a,t_2)$ and $(a',t_2)$ can be distinguished by local \MPTGNN{}s.

\begin{figure}[ht]
\centering
\begin{tikzpicture}[
dot/.style = {draw, circle, minimum size=#1,
              inner sep=0pt, outer sep=0pt},
dot/.default = 6pt
]
\scriptsize
\pgfmathsetmacro{\tline}{-0.25}
\pgfmathsetmacro{\w}{1.3}
\pgfmathsetmacro{\h}{1.5}
\pgfmathsetmacro{\inh}{0.7}
\pgfmathsetmacro{\dist}{1.7}
\pgfmathsetmacro{\Ax}{0.5}
\pgfmathsetmacro{\Ay}{1.4}
\pgfmathsetmacro{\Bx}{0.4}
\pgfmathsetmacro{\By}{0.5}
\pgfmathsetmacro{\Cx}{1}
\pgfmathsetmacro{\Cy}{1}

\draw[->] (1.8,\tline) -- (4.8,\tline);

\foreach \x in {1,...,2}
{
\draw[fill=gray!90!black,opacity=0.2] (\x*\dist,0) -- (\x*\dist+\w,\inh) -- (\x*\dist+\w,\inh+\h) -- (\x*\dist,\h) -- cycle;

\node at (\x*\dist + 0.87 * \w, \inh+ \h - 0.3) {$G_{\x}$};
\node at (\x*\dist + 0.6 * \w, 0) {$t_{\x}$};
\draw[-] (\x*\dist + 0.6 * \w, \tline+0.05) -- (\x*\dist + 0.6 * \w, \tline-0.05);

\node[dot=9pt,draw=none] (A\x) at (\x*\dist+\Ax, 0+\Ay) {};
\node[dot=9pt,draw=none] (B\x) at (\x*\dist+\Bx, 0+\By) {};
\node[dot=9pt,draw=none] (C\x) at (\x*\dist+\Cx, 0+\Cy) {};
}

\draw[<->,blue,thick] (A1) -- (B1) node[pos=0.5,left] {0};
\draw[<->,blue,thick] (A2) -- (B2) node[pos=0.5,left] {1};
\draw[<->,blue,thick] (B2) -- (C2) node[pos=0.7,below] {0};

\node[dot=11pt,draw=black,fill=mygreen] at (A1) {$a$};
\node[dot=11pt,draw=black,fill=mygreen] at (B1) {$b$};
\node[dot=11pt,draw=black,fill=mygreen] at (C1) {$c$};
\node[dot=11pt,draw=black,fill=mygreen] at (A2) {$a$};
\node[dot=11pt,draw=black,fill=mygreen] at (B2) {$b$};
\node[dot=11pt,draw=black,fill=mygreen] at (C2) {$c$};
\end{tikzpicture}
\qquad
\qquad
\begin{tikzpicture}[
dot/.style = {draw, circle, minimum size=#1,
              inner sep=0pt, outer sep=0pt},
dot/.default = 6pt
]
\scriptsize
\pgfmathsetmacro{\tline}{-0.25}
\pgfmathsetmacro{\w}{1.3}
\pgfmathsetmacro{\h}{1.5}
\pgfmathsetmacro{\inh}{0.7}
\pgfmathsetmacro{\dist}{1.7}
\pgfmathsetmacro{\Ax}{0.5}
\pgfmathsetmacro{\Ay}{1.4}
\pgfmathsetmacro{\Bx}{0.4}
\pgfmathsetmacro{\By}{0.5}
\pgfmathsetmacro{\Cx}{1}
\pgfmathsetmacro{\Cy}{1}

\draw[->] (1.8,\tline) -- (4.8,\tline);

\foreach \x in {1,...,2}
{
\draw[fill=gray!90!black,opacity=0.2] (\x*\dist,0) -- (\x*\dist+\w,\inh) -- (\x*\dist+\w,\inh+\h) -- (\x*\dist,\h) -- cycle;

\node at (\x*\dist + 0.87 * \w, \inh+ \h - 0.3) {$G'_{\x}$};
\node at (\x*\dist + 0.6 * \w, 0) {$t_{\x}$};
\draw[-] (\x*\dist + 0.6 * \w, \tline+0.05) -- (\x*\dist + 0.6 * \w, \tline-0.05);

\node[dot=9pt,draw=none] (A\x) at (\x*\dist+\Ax, 0+\Ay) {};
\node[dot=9pt,draw=none] (B\x) at (\x*\dist+\Bx, 0+\By) {};
\node[dot=9pt,draw=none] (C\x) at (\x*\dist+\Cx, 0+\Cy) {};
}

\draw[<->,blue,thick] (A1) -- (B1) node[pos=0.5,left] {0};
\draw[<->,blue,thick] (A2) -- (B2) node[pos=0.5,left] {1};

\node[dot=11pt,draw=black,fill=mygreen] at (A1) {$a'$};
\node[dot=11pt,draw=black,fill=mygreen] at (B1) {$b'$};
\node[dot=11pt,draw=black,fill=mygreen] at (C1) {$c'$};
\node[dot=11pt,draw=black,fill=mygreen] at (A2) {$a'$};
\node[dot=11pt,draw=black,fill=mygreen] at (B2) {$b'$};
\node[dot=11pt,draw=black,fill=mygreen] at (C2) {$c'$};
\end{tikzpicture}
\caption{Knowledge graphs $\Ktwo{\TG}$ and  $\Ktwo{\TG'}$
}
\label{fig:TtwomoreKtwo}
\end{figure}
\end{proof}

To prove \Cref{thm:more}, we show first the following lemma.

\begin{lemma}\label{lem:plusk}
Consider $\rwl$ applied to  $\Ktwo{\TG}$, for a colour-persistent temporal graph $\TG$.
Then 
$\rwl^{(\ell)}(v,t) \neq \rwl^{(\ell)}(u,t')$ implies that $\rwl^{(\ell)}(v,t+k) \neq \rwl^{(\ell)}(u,t'+k)$,
for any nodes $(v,t)$, $(u,t')$ in $\Ktwo{\TG}$, any $\ell \in \N$, and any $k \geq 0$ such that $t+k$ and $t'+k$ belong to $\T(\TG)$.
\end{lemma}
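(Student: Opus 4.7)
The plan is to prove the contrapositive by induction on $\ell$: that $\rwl^{(\ell)}(v, t+k) = \rwl^{(\ell)}(u, t'+k)$ implies $\rwl^{(\ell)}(v,t) = \rwl^{(\ell)}(u,t')$. The base case ($\ell = 0$) follows immediately from colour-persistence, which gives $c(v, t+k) = c(v, t)$ and likewise for $u$, so the initial labels at corresponding times coincide.

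For the inductive step, the pivotal structural observation concerns the edges of $\Ktwo{\TG}$: from the definition, $(w, t) \in \NH_r(v, t)$ holds iff $\{w, v\} \in E_{t-r}$, which in turn holds iff $(w, t+k) \in \NH_{r+k}(v, t+k)$. Hence for every relation $r \geq k$, the $r$-neighbourhood of $(v, t+k)$ is a uniform $k$-translation (in both the relation index and the neighbours' timestamps) of the $(r-k)$-neighbourhood of $(v, t)$. Extra neighbours of $(v, t+k)$ live only at relations $r < k$ and have no counterpart at $(v, t)$; since multisets can be decomposed by their relation coordinate, these simply drop out of the analysis.

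Assuming the statement for $\ell-1$ and that $\rwl^{(\ell)}(v, t+k) = \rwl^{(\ell)}(u, t'+k)$, injectivity of the hash function in $\rwl$ yields (i) $\rwl^{(\ell-1)}(v, t+k) = \rwl^{(\ell-1)}(u, t'+k)$ and (ii) equality, relation by relation, of the multisets of labelled $r$-neighbours of $(v, t+k)$ and $(u, t'+k)$. Applying the inductive hypothesis to (i) yields $\rwl^{(\ell-1)}(v,t) = \rwl^{(\ell-1)}(u,t')$. Restricting (ii) to relations $r \geq k$, reindexing via the shift-invariance as $(r-k)$-neighbourhoods at times $t$ and $t'$ (with neighbour labels still measured at $t+k$ and $t'+k$), and then invoking the inductive hypothesis along a bijection witnessing multiset equality, the neighbour labels can be transported back to times $t$ and $t'$. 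Reassembling (i) with the transported (ii) gives $\rwl^{(\ell)}(v,t) = \rwl^{(\ell)}(u,t')$.

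The main obstacle is the asymmetry between neighbourhoods at $t+k$ and at $t$: the later time admits additional neighbours at small relation indices, which must be cleanly separated from the rest. Colour-persistence is essential here, as it rigidifies the base labels under time translations and makes the inductive hypothesis applicable node-by-node along the matching bijection. Without it, the extra recent neighbours combined with drifting base colours could decouple labels at different times and break the argument.
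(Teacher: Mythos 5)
Your proposal is correct and follows essentially the same route as the paper's proof: an induction on $\ell$ whose base case is colour-persistence and whose inductive step rests on the shift-invariance $(w,t)\in\NH_r(v,t) \Leftrightarrow (w,t+k)\in\NH_{r+k}(v,t+k)$ in $\Ktwo{\TG}$; you merely phrase it as the contrapositive. If anything, you are more explicit than the paper about two points it glosses over --- that the extra neighbours of $(v,t+k)$ at relations $r<k$ are harmless, and that neighbour labels must be transported between times by applying the inductive hypothesis along the bijection witnessing multiset equality.
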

\begin{proof}
The proof is by induction on $\ell \in \N$.
Basis, for $\ell=0$, holds by the fact that $\TG$ is colour-persistent, and so $\rwl^{(0)}(v,t) = \rwl^{(0)}(v,t+k)$ as well as $\rwl^{(0)}(u,t') = \rwl^{(0)}(u,t'+k)$.
Thus $\rwl^{(0)}(v,t) \neq \rwl^{(0)}(u,t')$ implies that $\rwl^{(0)}(v,t+k) \neq \rwl^{(0)}(u,t'+k)$.

For the inductive step, we assume that the implication holds for $\ell$, and we will show that it holds also for $\ell+1$.
To this end, assume that $\rwl^{(\ell+1)}(v,t) \neq \rwl^{(\ell+1)}(u,t')$. We will prove that $\rwl^{(\ell+1)}(v,t+k) \neq \rwl^{(\ell+1)}(u,t'+k)$.
Assume that  $\Ktwo{\TG}$ is of the form $(V,E,R,c)$.
Hence, by the definition of $\rwl$, we have 
\begin{align*}
\rwl^{(\ell+1)}(v,t) & = 
\tau \Big( \rwl^{(\ell)}(v,t), 
\ldblbrace ( \rwl^{(\ell)}(w,t'') ,r) \mid (w,t'') \in \NH_{r}(v,t), r \in R \rdblbrace  \Big)   = 
\tau \Big( \rwl^{(\ell)}(v,t), 
A   \Big) ,
\\
\rwl^{(\ell+1)}(u,t') & = 
\tau \Big( \rwl^{(\ell)}(u,t'), 
\ldblbrace ( \rwl^{(\ell)}(w,t'') ,r) \mid (w,t'') \in \NH_{r}(u,t'), r \in R \rdblbrace   \Big)
=
\tau \Big( \rwl^{(\ell)}(u,t'), 
B \Big)
,
\end{align*}
where $\tau$ is injective, whereas $A$ and $B$ stand for the multisets as in the equations above.
Hence, $\rwl^{(\ell+1)}(v,t) \neq \rwl^{(\ell+1)}(u,t')$ implies that $\rwl^{(\ell)}(v,t)\neq \rwl^{(\ell)}(u,t')$, or $A \neq B$.
If $\rwl^{(\ell)}(v,t) \neq \rwl^{(\ell)}(u,t')$ then, by the inductive assumption,
$\rwl^{(\ell)}(v,t+k) \neq \rwl^{(\ell)}(u,t'+k)$.
Thus, by the definition of $\rwl$ (in particular, by injectivity of $\tau$), we obtain that
$\rwl^{(\ell+1)}(v,t+k) \neq \rwl^{(\ell+1)}(u,t'+k)$.

It remains to consider the case  $A \neq B$.
Then, there exists some $r \in R$ such that 
$$
\ldblbrace ( \rwl^{(\ell)}(w,t'') ,r) \mid (w,t'') \in \NH_{r}(v,t) \rdblbrace   
\neq 
\ldblbrace ( \rwl^{(\ell)}(w,t'') ,r) \mid (w,t'') \in \NH_{r}(u,t') \rdblbrace  .
$$
We observe that, by the definition of $\Ktwo{\TG}$, we obtain that  $(w,t'') \in \NH_{r}(v,t)$
implies that $t''=t$, and similarly
$(w,t'') \in \NH_{r}(u,t')$ implies that $t''=t$.
Therefore the following holds:
$$
\ldblbrace ( \rwl^{(\ell)}(w,t) ,r) \mid (w,t) \in \NH_{r}(v,t) \rdblbrace   
\neq 
\ldblbrace ( \rwl^{(\ell)}(w,t') ,r) \mid (w,t') \in \NH_{r}(u,t') \rdblbrace  .
$$
Moreover, 
$(w,t) \in \NH_{r}(v,t)$ is equivalent to  $(w,t) \in \NH_{r+k}(v,t+k)$, whereas
$(w,t') \in \NH_{r}(u,t')$ is equivalent to  $(w,t') \in \NH_{r+k}(u,t'+k)$.
Hence,  for $r' = r +k$, we obtain that
$$
\ldblbrace ( \rwl^{(\ell)}(w,t'') ,r') \mid (w,t'') \in \NH_{r'}(v,t+k) \rdblbrace   
\neq 
\ldblbrace ( \rwl^{(\ell)}(w,t'') ,r') \mid (w,t'') \in \NH_{r'}(u,t'+k) \rdblbrace  .
$$
This, by the definition of $\rwl$, we obtain that
$\rwl^{(\ell+1)}(v,t+k) \neq \rwl^{(\ell+1)}(u,t'+k)$.
\end{proof}

\more*
\begin{proof}
By \Cref{thm:locmoreglob}, there are timestamped nodes in colour-persistent graphs which can be distinguished by local, but not by global \MPTGNN{}s.
Hence, to prove the theorem we need so show that  whenever $(v,t)$ and $(u,t')$, from a  colour-persistent temporal graph $\TG$ (without loss of generality we assume that $(v,t)$ and $(u,t')$ belong to the same $\TG$; if this is not the case we can always take a disjoint union of temporal graphs to which these timestamped nodes belong to), can be distinguished by global \MPTGNN{}s, they can also be  distinguished  by local \MPTGNN{}s.
For this,  by \Cref{thm:Kone1} and \Cref{thm:Ktwo2},
it suffices to show the following statement:
\begin{itemize}
\item[($\ast$)]
$\rwlg^{(\ell)}(v,t) \neq \rwlg^{(\ell)}(u,t')$   implies $\rwll^{(\ell)}(v,t) \neq \rwll^{(\ell)}(u,t')$,

for any $\ell \in \N$, and any timestamped nodes $(v,t)$, $(u,t')$,
where $\rwlg^{(\ell)}$ is  application of $\rwl$ to $\Kone{\TG}$, and $\rwll^{(\ell)}$ is  application of $\rwl$ to $\Ktwo{\TG}$.
\end{itemize}
Within the proof we will use $\NHg$ and $\NHl$ for, respectively, 
$\NH_{r}$ in $\Kone{\TG}$ and $\NH_{r}$ in $\Ktwo{\TG}$.

We will show Statement ($\ast$) by induction on $\ell$, using \Cref{lem:plusk}.
To this end, we will use notation $\Kone{\TG}=(V,E,R,c)$ and $\Ktwo{\TG}=(V,E',R,c)$; note that, by the definition, all the components in these knowledge graphs are the same except labelled edges $E$ and $E'$.

\paragraph{Basis.} The basis of the induction, with $\ell=0$, holds trivially, since the sets of nodes $V$ and their initial colourings $c$ are the same in $\Kone{\TG}$ and $\Ktwo{\TG}$.
Hence, $\rwlg^{(0)}(v,t)=\rwll^{(0)}(v,t)$  and $\rwlg^{(0)}(u,t')=\rwll^{(0)}(u,t')$. Thus, Statement ($\ast$) holds for $\ell=0$.

\paragraph{Inductive step.} For the inductive step, we assume that Statement~($\ast$) holds for $\ell$. 
We will show that it holds for $\ell+1$.
To this end, assume that $\rwlg^{(\ell+1)}(v,t) \neq \rwlg^{(\ell+1)}(u,t')$, for some $(v,t)$ and $(u,t')$.
By the definition of $\rwl$, we have
\begin{align*}
\rwlg^{(\ell+1)}(v,t) & = 
\tau \Big( \rwlg^{(\ell)}(v,t), 
\ldblbrace ( \rwlg^{(\ell)}(w,t'') ,r) \mid (w,t'') \in \NHg(v,t), r \in R \rdblbrace   \Big)  ,
\\
\rwlg^{(\ell+1)}(u,t') & = 
\tau \Big( \rwlg^{(\ell)}(u,t'), 
\ldblbrace ( \rwlg^{(\ell)}(w,t'') ,r) \mid (w,t'') \in \NHg(u,t'), r \in R \rdblbrace   \Big).
\end{align*}
Hence, $\rwlg^{(\ell+1)}(v,t) \neq \rwlg^{(\ell+1)}(u,t')$ implies that $\rwlg^{(\ell)}(v,t)\neq \rwlg^{(\ell)}(u,t')$, or the two multisets in the above equations are different.
If $\rwlg^{(\ell)}(v,t) \neq \rwlg^{(\ell)}(u,t')$, then by the inductive assumption we obtain that
$\rwll^{(\ell)}(v,t) \neq \rwll^{(\ell)}(u,t')$.
Thus, by the definition of $\rwll$,  we obtain that
$\rwll^{(\ell+1)}(v,t) \neq \rwll^{(\ell+1)}(u,t')$.

Now,  consider the case  when the two multisets above are different.
Then, there exists some $r \in R$ such that 
the multisets of colours of temporal $r$-neighbourhoods of $(v,t)$ and $(u,t')$ computed by $\rwlg^{(\ell)}$ are different.
We let $A$ and $B$ be these neighbourhoods, namely
\begin{align*}
A & = \{ (w,t'')  \mid (w,t'') \in \NHg(v,t) \}, 
\\
B &= \{ (w,t'')  \mid (w,t'') \in \NHg(u,t') \} ,
\end{align*}
and we let $\Ag$ and $\Bg$ be the multisets consisting of colours of elements of $A$ and $B$, respectively, that is
\begin{align*}
\Ag & = \ldblbrace  \rwlg^{(\ell)}(w,t'')  \mid (w,t'') \in A \rdblbrace, 
\\
\Bg &= \ldblbrace   \rwlg^{(\ell)}(w,t'') \mid (w,t'') \in B \rdblbrace. 
\end{align*}
Hence, we have $\Ag \neq \Bg$. 

Next we define the following multisets $\Al$ and $\Bl$ of colours computed by $\rwll$ (they correspond to $\Ag$ and $\Bg$ computed by $\rwlg$):
\begin{align*}
\Al & = \ldblbrace  \rwll^{(\ell)}(w,t'')  \mid (w,t'') \in A \rdblbrace, 
\\
\Bl &= \ldblbrace   \rwll^{(\ell)}(w,t'') \mid (w,t'') \in B \rdblbrace, 
\end{align*}
and the colours of the same nodes but at time points $t$ and $t'$, namely:
\begin{align*}
\Al^t & = \ldblbrace  \rwll^{(\ell)}(w,t)  \mid (w,t'') \in A \rdblbrace, 
\\
\Bl^{t'} &= \ldblbrace   \rwll^{(\ell)}(w,t') \mid (w,t'') \in B \rdblbrace. 
\end{align*}
We observe that, by the definition of $\Ktwo{\TG}$,
the multiset $\Al^t$ consists of colours assigned by the $\ell$th iteration of $\rwl$ to  nodes $(w,t)$ in $\Ktwo{\TG}$ witch are connected with $(v,t)$ by an edge labelled by $r$.
Similarly, $\Bl^{t'}$ consists of colours of  nodes $(w,t')$ in $\Ktwo{\TG}$ with an edge to $(u,t')$ labelled by $r$.
Hence, to prove that $\rwll^{(\ell+1)}(v,t) \neq \rwll^{(\ell+1)}(u,t')$, it suffices to show that $\Al^t \neq \Bl^{t'}$.
We will do it in two steps.
First, we will show that 
$\Al \neq \Bl$.
Second, we will use this fact to show that 
$\Al^t \neq \Bl^{t'}$.

\paragraph{Step 1.} We will show that $\Al \neq \Bl$.

Suppose towards a contradiction that $\Al = \Bl$.
Hence, there is a bijection $f:A \to B$ such that  $\rwll^{(\ell)}(w,t'') =  \rwll^{(\ell)}(f(w,t''))$, for each $(w,t'') \in A$.
Since $\Ag \neq \Bg$, there is a colour $d$ such that
the following sets have different cardinalities:
\begin{align*}
A_d & = \{ (w,t'') \in A \mid \rwlg^{(\ell)}(w,t'')  =d  \},
\\
B_d & = \{ (w,t'') \in B \mid \rwlg^{(\ell)}(w,t'')  =d  \}.
\end{align*}
Without loss of generality let us assume that $|B_d| < |A_d|$.

Next, consider the image of $A_d$ under $f$, that is, the set  $f(A_d) = \{f(w,t'') \mid (w,t'') \in A_d  \}$.
By the definition of $f$, we have $\rwll^{(\ell)}(w,t'') = \rwll^{(\ell)} ( f(w,t'') )$ for each $(w,t'') \in A_d$.
Moreover, by  the inductive assumption, if
$\rwll^{(\ell)}(w_1,t_1) = \rwll^{(\ell)}(w_2,t_2)$, then 
$\rwlg^{(\ell)}(w_1,t_1) = \rwlg^{(\ell)}(w_2,t_2)$.
Hence, if $(w,t'') \in f(A_d)$, then 
$\rwlg^{(\ell)}(w,t'') = d$, and so,
$(w,t'') \in B_d$.
Consequently, $|f(A_d)| \leq |B_d|$, and thus
$|f(A_d)| \leq |B_d| < |A_d|$.
Therefore, $|f(A_d)|  < |A_d|$, and so, $f$ is not a bijection, which raises  a contradiction and finished the proof of Step 1.

\paragraph{Step 2.} We will show that $\Al^t \neq \Bl^{t'}$.

To this end we will use similar argumentation as in Step 1, but in the place of the inductive assumption we will use \Cref{lem:plusk}.
Let us define two sets:
\begin{align*}
C & = \{ (w,t) \mid (w,t'') \in A \},
\\
D & = \{ (w,t') \mid (w,t'') \in B \} .
\end{align*}
Suppose towards a contradiction that $\Al^t = \Bl^{t'}$.
Hence, there is a bijection $f:C \to D$ such that  $\rwll^{(\ell)}(w,t) =  \rwll^{(\ell)}(f(w,t))$, for each $(w,t) \in C$.
Since $\Al \neq \Bl$, there is a colour $e$ such that
the following sets have different cardinalities:
\begin{align*}
A_e & = \{ (w,t'') \in A \mid \rwll^{(\ell)}(w,t'')  =e  \},
\\
B_e & = \{ (w,t'') \in B \mid \rwll^{(\ell)}(w,t'')  =e  \}.
\end{align*}
Without loss of generality let us assume that $|B_e| < |A_e|$.

For the set $A_e$ we define the corresponding set of timestampted nodes $C_e \subseteq C$ with time point $t$.
For $B_e$ we define the corresponding set of timestampted nodes $D_e \subseteq D$ with time point $t'$, namely:
\begin{align*}
C_e & = \{ (w,t)  \mid (w,t'') \in A_e \},
\\
D_e & =  \{ (w,t')  \mid (w,t'') \in B_e \}.
\end{align*}

Next, consider the image of $C_e$ under $f$, that is, the set  $f(C_e) = \{f(w,t) \mid (w,t) \in C_e  \}$.
By the definition of $f$, we have $\rwll^{(\ell)}(w,t) = \rwll^{(\ell)} ( f(w,t) )$ for each $(w,t) \in C_e$.
Moreover, by \Cref{lem:plusk}, if
$\rwll^{(\ell)}(w_1,t) = \rwll^{(\ell)}(w_2,t')$, then 
$\rwll^{(\ell)}(w_1,t-r) = \rwll^{(\ell)}(w_2,t'-r)$.
Hence, if $(w,t') \in f(C_e)$, then 
$\rwll^{(\ell)}(w,t'-r) = e$, and so,
$(w,t'-r) \in B_e$.
Consequently, $|f(C_e)| \leq |B_e|$, and thus
$|f(C_e)| \leq |B_e| < |A_e| = |C_e|$ (note that $|A_e| = |C_e|$ holds by the definition of $C_e$).
Therefore, $|f(C_e)|  < |C_e|$, and so, $f$ is not a bijection, which raises  a contradiction and finished the proof of Step 2.
\end{proof}

\section{Experiments}

In \Cref{tab:benchmark-statistics} we present statistics for the family (link prediction) of TGB benchmarks we used in experiments.

\begin{table}[ht]
    \caption{Link prediction benchmark statistics, adapted from \url{https://tgb.complexdatalab.com/docs/linkprop/}, where
    ``steps'' indicates the number of distinct time points in a temporal graph, which may include more than one edge, and
    ``surprise'' is the proportion of test edges not seen during training}
    \centering
    \begin{tabular}{l l r r r r r r}
scale & name & nodes & edges & steps & surprise & metric\\
\midrule
small & \texttt{tgbl-wiki} & 9,227 & 157,474 & 152,757 & 0.108 & MRR\\
small & \texttt{tgbl-review} & 352,637 & 4,873,540 & 6,865 & 0.987 & MRR\\
medium & \texttt{tgbl-coin} & 638,486 & 22,809,486 & 1,295,720 & 0.120 & MRR
    \end{tabular}
    \label{tab:benchmark-statistics}
\end{table}

In \Cref{tab:layer-results} we provide  raw data used to produce \Cref{fig:layer-results}. 

\begin{table}[ht]
    \caption{MRR scores against the number $\ell$ of layers  on \texttt{tgbl-wiki}, used to generate \Cref{fig:layer-results}}
    \label{tab:layer-results}
    \centering
    \begin{tabular}{r r r r r r r r r}
        & 1 & 2 & 3 & 4 & 5 & 6 & 7 & 8 \\
        global & 0.044 & 0.061 & 0.199 & 0.223 & 0.294 & 0.225 & 0.133 & 0.126 \\
        local & 0.040 & 0.089 & 0.235 & 0.264 & 0.302 & 0.421 & 0.430 & 0.400 \\
    \end{tabular}
\end{table}

\end{document}